\documentclass[11pt]{article}
\usepackage[cm]{fullpage}
\usepackage{natbib}
\usepackage{amsmath,amssymb,amsthm,mathtools}
\usepackage{booktabs,tabularx,array,graphicx,float,placeins,microtype,xcolor,enumitem,longtable}
\usepackage{tikz}
\usetikzlibrary{arrows.meta,positioning,calc}
\usepackage{hyperref,url}
\definecolor{readblue}{HTML}{2459A9}
\definecolor{readteal}{HTML}{16877A}
\definecolor{softblue}{HTML}{EAF0FA}
\definecolor{softteal}{HTML}{E8F5F2}
\definecolor{softgray}{HTML}{F2F3F5}
\hypersetup{colorlinks=true,linkcolor=readblue,citecolor=readblue,urlcolor=readblue,
pdftitle={ReadKV: Query-Adaptive Quantization for KV Caches},
pdfauthor={Siddharth Bhandari, Lucas Gretta, Krishna Balasubramanian,
Shiva Kasiviswanathan}}
\newcommand{\ReadKV}{\textsc{ReadKV}}
\newcommand{\siblings}{\mathcal S}
\newtheorem{theorem}{Theorem}
\newtheorem{corollary}[theorem]{Corollary}
\newtheorem{proposition}[theorem]{Proposition}
\newtheorem{lemma}[theorem]{Lemma}
\floatstyle{ruled}
\newfloat{algorithm}{tbp}{loa}
\floatname{algorithm}{Algorithm}
\title{Read What Matters: Query-Adaptive Quantization for KV Caches}
\author{
Siddharth Bhandari\textsuperscript{1} \quad
Lucas Gretta\textsuperscript{2} \quad
Krishna Balasubramanian\textsuperscript{1} \quad
Shiva Kasiviswanathan\textsuperscript{1}\\[0.4em]
\textsuperscript{1}Amazon \qquad
\textsuperscript{2}University of California, Berkeley
}
\date{}
\begin{document}
\maketitle
\begin{abstract}
KV-cache entries are stored before their future queries are known, but each
decoding query needs precision in different places. We study this mismatch
using separate budgets for retained bits and bits fetched per query.
\ReadKV{} stores each key and value in a progressive code whose prefixes
support different reconstruction precisions. For each query, it allocates
key-channel prefixes using the query, computes attention from the
reconstructed keys, and then allocates value-token prefixes using that
attention. Stored entries remain unchanged. Each stage optimizes a calibrated
distortion objective under a fixed budget; we prove exact allocation under
diminishing refinement gains and relate these objectives to attention-output
error. We also exhibit a finite-dimensional attention family where query-dependent
access strictly outperforms every query-independent reader at the same read
budget, even with unrestricted competing encoders and decoders.

Across six base models, reading four bits on average from an eight-bit cache
increases C4 perplexity by at most 0.66\%, using about one quarter of the
logical reads and half the retained capacity of a 16-bit cache. It is
consistently more accurate than storing and fully reading four bits at the
same payload-read budget. Retaining more bits than each query fetches is
aimed at long-context decoding, where the cache bytes moved per step, rather
than the weights, dominate cost. Long-context question answering and retrieval on two
instruction-tuned models provide additional quality evidence.
On the tested 8K-token, batch-one, single-layer workload on an NVIDIA A10G,
a restricted eight-bit \ReadKV{} reader with a two-bit mean payload-read
budget has 39\% lower latency than the tested TurboQuant codec.
\end{abstract}

\section{Introduction}
Long-context autoregressive language models maintain a key--value (KV) cache
to avoid recomputing representations of earlier tokens. During decoding,
each new query accesses this cache, and repeated reads can incur substantial
memory traffic as the context grows. However, a query may depend primarily
on a subset of key dimensions, and its attention may concentrate on a small
fraction of value tokens. These observations motivate query-dependent reads
that fetch the cache information needed at each decoding step.

We propose \ReadKV{}, which separates stored bits per scalar, $W$, from
average payload bits read per scalar, $R$. Its \emph{progressive
representation} lets each query choose how much precision to fetch while
preserving unread detail for later queries. This design is motivated by the
write-once, read-many structure of the KV cache. Retaining additional
precision increases storage and insertion cost, but does not require every
later query to fetch those extra bits. For an 8,192-token prompt followed by
128 decoding steps, storing eight bits while reading four, rather than storing
and reading four, adds write traffic equal to just $0.79\%$ of cumulative
reads in the payload-only model of Appendix~\ref{app:traffic}.
Stored precision and read volume
therefore impose distinct costs. Decoding is often bound by memory traffic: each step moves the model weights once and
each cached entry once per request. The bits fetched per entry thus set the
cache traffic of a step, while the bits retained per entry set how many
requests fit in memory and are served per step. A fixed-rate code couples the two,
because every stored bit is fetched on every step; a progressive code
decouples them, and at $(8,2)$ \ReadKV{}'s logical reads are about one
quarter of its retained bits. The design therefore targets long contexts and
large batches, where the cache rather than the weights dominates the bytes
moved per step (Appendix~\ref{app:throughput_model}). Device latency also
depends on selection, reconstruction, and temporary memory traffic.

Existing work reduces retained KV precision, varies stored precision using
importance estimates, or selects cache features and tokens for the current
query. \ReadKV{} instead makes scalar reconstruction depth a query-time
access decision while leaving the stored record unchanged. The closest
systems precedent already combines precision tiers with selective bit-plane
retrieval \citep{xie2025reimagining}. Section~\ref{sec:related_work}
distinguishes these approaches and Appendix~\ref{app:related} gives a more
detailed comparison.

\begin{figure}[t]
\centering
\begingroup
\definecolor{motivationink}{HTML}{243449}
\definecolor{motivationmuted}{HTML}{536174}
\definecolor{motivationrule}{HTML}{C4CEDA}
\definecolor{motivationbit}{HTML}{DDE4EC}
\begin{tikzpicture}[
  x=1cm,y=1cm,
  font=\sffamily\fontsize{8.5}{10}\selectfont,
  text=motivationink,
  >={Latex[length=1.8mm,width=1.35mm]},
  every node/.style={inner sep=0pt,outer sep=0pt},
  heading/.style={font=\sffamily\bfseries\fontsize{9.5}{11}\selectfont},
  note/.style={font=\sffamily\fontsize{8}{10}\selectfont,
               text=motivationmuted,align=center},
  entry/.style={draw=motivationrule,fill=softgray,rounded corners=2pt,
                minimum height=.58cm,align=center},
  bit/.style={draw=motivationrule,fill=motivationbit,line width=.3pt},
  flow/.style={line width=.85pt,rounded corners=2pt},
  panel/.style={rounded corners=3pt,line width=.55pt}
]
\path[use as bounding box] (0,.30) rectangle (13.70,5.35);

\draw[panel,draw=motivationrule,fill=softgray!60]
  (0,2.57) rectangle (13.70,4.35);
\draw[panel,draw=readblue!25,fill=softblue!45]
  (0,.35) rectangle (13.70,2.30);

\node[entry,minimum width=2.20cm,minimum height=.62cm,
      font=\sffamily\fontsize{8}{9}\selectfont]
  (token) at (1.20,4.70) {KV entry for token \(i\)\\\((k_i,v_i)\)};
\draw[->,flow,motivationink,shorten <=2pt,shorten >=2pt]
  (token.east) -- (3.12,4.70);
\node[heading] at (4.02,4.70) {Written once};
\node[note] at (9.43,5.16) {Up to \(T-i\) later reads};
\node[heading] at (7.46,4.70) {Read at \(q_{i+1}\)};
\node[heading] at (11.41,4.70) {Read at \(q_T\)};

\node[heading,anchor=west] at (.22,3.66) {Fixed code};
\node[note,anchor=west,align=left,text width=1.55cm] at (.22,3.04)
  {same bits at\\every query};
\node[heading,anchor=west,text=readblue] at (.22,1.68) {\ReadKV{}};
\node[note,anchor=west,align=left,text=readblue,text width=1.55cm]
  at (.22,.93) {query-adaptive\\reads};

\foreach \x/\y in {2.75/4.14,2.75/1.92}{
  \foreach \c in {0,...,5}{
    \foreach \r in {0,...,3}{
      \draw[bit] (\x+\c*.28,\y-\r*.28) rectangle +(.22,-.22);
    }
  }
}

\foreach \x in {6.75,10.70}{
  \foreach \c in {0,...,5}{
    \foreach \r in {0,...,3}{
      \path[fill=motivationink!78,draw=motivationink!78,line width=.3pt]
        (\x+\c*.28,4.14-\r*.28) rectangle +(.22,-.22);
    }
  }
}

\foreach \x in {6.75,10.70}{
  \foreach \c in {0,...,5}{
    \foreach \r in {0,...,3}{
      \draw[bit] (\x+\c*.28,1.92-\r*.28) rectangle +(.22,-.22);
    }
  }
}

\foreach \c/\depth in {0/1,1/4,2/2,3/1,4/3,5/1}{
  \foreach \r in {1,...,\depth}{
      \path[fill=readblue,draw=readblue,line width=.3pt]
      (6.75+\c*.28,2.20-\r*.28) rectangle +(.22,-.22);
  }
}
\foreach \c/\depth in {0/3,1/1,2/1,3/4,4/1,5/2}{
  \foreach \r in {1,...,\depth}{
      \path[fill=readteal,draw=readteal,line width=.3pt]
      (10.70+\c*.28,2.20-\r*.28) rectangle +(.22,-.22);
  }
}

\draw[->,flow,motivationink,shorten <=2pt,shorten >=2pt]
  (4.53,3.71) -- node[note,above=2pt] {read all bits} (6.64,3.71);
\draw[->,flow,readblue,shorten <=2pt,shorten >=2pt]
  (4.53,1.49) -- node[note,text=readblue,above=2pt] {read selectively}
  (6.64,1.49);
\node[font=\fontsize{15}{17}\selectfont,text=motivationmuted]
  at (9.57,3.71) {\(\cdots\)};
\node[font=\fontsize{15}{17}\selectfont,text=motivationmuted]
  at (9.57,1.49) {\(\cdots\)};
\draw[->,flow,motivationink,shorten <=2pt,shorten >=2pt]
  (9.92,3.71) -- (10.59,3.71);
\draw[->,flow,readteal,shorten <=2pt,shorten >=2pt]
  (9.92,1.49) -- (10.59,1.49);

\node[note] at (3.46,2.82) {\(W\) bits/scalar stored};
\node[note] at (7.46,2.82) {\(W\) bits/scalar read};
\node[note] at (11.41,2.82) {\(W\) bits/scalar read};
\node[note,text=readblue] at (3.46,.58) {\(W\) bits/scalar stored};
\node[note,text=readblue] at (7.46,.58) {\(R<W\) bits/scalar read};
\node[note,text=readteal!85!black] at (11.41,.58)
  {\(R<W\) bits/scalar read};
\end{tikzpicture}
\endgroup
\caption{Write-once, read-many KV-cache access. Token \(i\)'s key and value
vectors are stored once and may be read by up to \(T-i\) later queries.
Fixed-precision quantization fetches the same full \(W\)-bit code each time.
\ReadKV{} stores a progressive \(W\)-bit code and lets each query fetch
different prefixes under an average read budget \(R<W\); unread refinements
remain available.}
\label{fig:kv-partial-read-problem}
\vspace{0em}
\end{figure}

We formalize this distinction through separate storage and read budgets,
as illustrated in
Figure~\ref{fig:kv-partial-read-problem}. We focus on the partial-read regime
$R<W$, where access decisions determine which information is fetched;
$R=W$ permits full-record decoding. Our main contributions are as follows.
\begin{itemize}[leftmargin=0.15in,itemsep=0pt,parsep=0pt,
                topsep=2pt,partopsep=0pt]
\item We formulate KV-cache compression with separate storage and read
  budgets. A cache entry is encoded before its future queries are known,
  and each query may fetch only a budgeted subset of the stored bits.
  This distinguishes retained capacity from transferred payload and enables
  direct comparisons between query-independent and query-dependent access.

\item We introduce \ReadKV{} (Algorithm~\ref{alg:readkv}), which stores
  progressive key and value codes. The current query selects key prefixes;
  attention computed from their reconstructions selects value prefixes.
  Different queries can therefore use different precisions from the same
  stored entries without rewriting them.

\item We analyze why query-dependent access can help and how \ReadKV{}
  allocates its read budget (Section~\ref{sec:analysis}). On two
  finite-dimensional families, \ReadKV{} achieves strictly lower worst-case
  key-score and attention-output MSE, respectively, than every
  query-independent branching reader with the same read budget
  (Theorems~\ref{thm:finite_readkv_separation}
  and~\ref{thm:finite_two_stage_readkv_separation}). We then analyze
  \ReadKV{}'s allocation rule, which repeatedly assigns the next prefix bit
  to the key channel or value token for which it gives the largest predicted
  error reduction. Theorem~\ref{thm:readkv_attention_certificate} identifies
  when this rule is optimal for each stage and relates the resulting key and
  value objectives to the final attention-output error. Together, these
  results explain both why query-dependent reading can help and how
  \ReadKV{} uses its read budget.

\item Across six base models, \ReadKV{} $(W=8,R=4)$ stays within $0.66\%$
  of uncompressed perplexity and has lower perplexity than fully reading
  a four-bit cache at the same payload-read budget. Question answering and retrieval
  on two instruction-tuned models show further quality--read tradeoffs.
  In a separate single-layer benchmark on an NVIDIA A10G, \ReadKV{}
  $(W=8,R=2)$ has $39\%$ lower latency and $62$--$65\%$ fewer DRAM read
  bytes than a fused quantized codec.
\end{itemize}

\section{Partial-Read Cache Model}
\label{sec:general_problem}
A cached key must be stored before the queries that will later use it are
known. At each decoding step, the reader sees a new query and must decide
which parts of that stored key to fetch. We ask how accurately it can recover
the query--key score when it may read fewer bits than the encoder retained.
The following model separates these two budgets without committing to a
particular quantizer.

Let $\mathbb B_2^d=\{u\in\mathbb R^d:\|u\|_2\leq1\}$ be the Euclidean unit
ball. The encoder receives a vector $x\in\mathbb B_2^d$, and a later query
supplies $y\in\mathbb B_2^d$. The target is the inner product $x^\top y$.
There are three roles:
the \emph{encoder} observes $x$, but not $y$, and writes a bit string $Z$;
the \emph{reader} receives $y$ and requests selected bits of $Z$; and the
\emph{decoder} uses $y$ and the returned bits to estimate $x^\top y$.
The reader has no other access to $x$.

Protocols may be randomized. The three roles may share public randomness
drawn independently of $(x,y)$. Private randomness used by the encoder can
reach later stages only through $Z$, and randomness drawn after the query
arrives is also independent of $(x,y)$. These conventions allow randomized
codes and access rules without giving the protocol side information about
the particular input pair.

We assign separate rates to storage and reading. For $W,R\geq0$, the encoder
stores at most $w=\lfloor Wd\rfloor$ bits, and each query may fetch at most
$r=\lfloor Rd\rfloor$ bits. For example, $W=8$ and $R=2$ allow an $8d$-bit
record but only $2d$ fetched bits per query. These are global budgets rather
than per-coordinate quotas. The encoder may mix information across
coordinates, and the reader may spend more probes on some coordinates than
others. In the attention interpretation, $x$ is a cached key, $y$ is the
current query, and $x^\top y$ is their unscaled attention score. This
single-score model isolates the access question. Section~\ref{sec:readkv}
applies the same storage--read distinction to the full KV cache and its
attention output.

We count the bits stored for each record and the bits fetched for each
query. Computation is unrestricted, as in the one-bit-cell version of the
static table-probe model \citep{yao1981tables}. Record-specific metadata
counts toward the corresponding budget. Code descriptions and other state
fixed independently of $(x,y)$ are shared across records rather than charged
once per record. Appendix~\ref{app:traffic} accounts separately for their
retained memory and reload traffic in the implementation.

We distinguish readers by the information they may use to select which
stored bits to fetch. An \emph{address} identifies one bit of the record
$Z$, while a \emph{batch} is a set of addresses chosen before any bit in
that set is returned. The reader classes below differ in whether these
choices may depend on the current query, previously returned bits, or both.

Table~\ref{tab:access_models} orders five reader classes by the information
available when addresses are selected. A query-oblivious fixed pattern uses
neither the query nor returned bits. A query-independent branching reader
may use each returned bit to choose its next address, but its access path and
stopping rule cannot use $y$. Since the returned bits encode $x$, this path
may still depend indirectly on the stored vector. A $\mathsf{staged}(1)$
reader uses $y$ to choose all addresses in one batch. More generally, a
$\mathsf{staged}(J)$ reader, for $J\geq2$, uses at most $J$ batches and may
choose each later batch using $y$ and the values returned by earlier batches.
A fully branching reader may select each new address after observing every
previous return. Every class fetches at most $r$ bits, and its final decoder
may use $y$.

\begin{table}[!ht]
\centering
\small
\caption{Reader classes distinguished by whether address selection may use
the current query or previously returned bits. All classes use the same
storage and read budgets, and every final decoder may use the query.}
\label{tab:access_models}
\begin{tabularx}{\linewidth}{@{}l c X@{}}
\toprule
Reader Class & Uses Query? & Use of Returned Bits in Address Selection \\
\midrule
Query-oblivious fixed pattern & No & None; every address is fixed in advance \\
Query-independent branching & No & Each new address may use all previously returned bits \\
$\mathsf{staged}(1)$: one batch & Yes & None; the complete batch is selected before reading \\
$\mathsf{staged}(J)$, $J\geq2$: at most $J$ batches & Yes & A later batch may use bits returned by earlier batches \\
Fully branching & Yes & Each new address may use all previously returned bits \\
\bottomrule
\end{tabularx}
\end{table}

For an access class $\mathsf A$, let $\Pi$ range over complete
encoder--reader--decoder protocols that obey its access rules and the
budgets $(W,R)$. If $\widehat s_\Pi(x,y)$ is the resulting estimate, the
full-unit-ball minimax risk is
\begin{equation}
  \mathcal D^{\mathrm{wc}}_{d,\mathsf A}(W,R)
  =
  \inf_{\Pi\in\mathsf A}
  \sup_{x,y\in\mathbb B_2^d}
  \mathbb E_\Pi\!\left[
    \bigl(x^\top y-\widehat s_\Pi(x,y)\bigr)^2
  \right].
  \label{eq:worst_case_risk}
\end{equation}
The input pair is fixed before the random coins are drawn, so the expectation
is only over protocol randomness. Equation~\eqref{eq:worst_case_risk} is used
to compare access classes and state the boundary cases. The separation
results in Section~\ref{sec:analysis} use the same loss on explicit restricted
input families. Appendix~\ref{app:formal_access} gives the complete access
recursions, hierarchy, and boundary proofs.

Section~\ref{sec:readkv} develops \ReadKV{} as a
$\mathsf{staged}(2)$ reader: the query selects key prefixes in the first
batch, and attention computed from the reconstructed keys selects value
prefixes in the second. Section~\ref{sec:analysis} analyzes its allocation
rule and gives separations on specified key-score and attention families.

\section{Query-Adaptive Cache Design}
\label{sec:readkv}
This section defines the \ReadKV{} encoder, its $\mathsf{staged}(2)$ reader,
and decoder (Algorithm~\ref{alg:readkv}). The encoder stores each key and
value in a progressive code whose longer prefixes have lower calibrated
distortion. At query time,
the reader issues two read batches from this unchanged cache. The current
query determines the key prefixes fetched in the first batch. An intermediate
decoder reconstructs the keys and computes attention, which determines the
value prefixes fetched in the second batch. The final decoder reconstructs
the values and returns the attention output.
Figure~\ref{fig:readkv-two-reads} summarizes this query-time flow.

\begin{figure}[!ht]
\vspace{0.4em}
\centering
\caption{Query-time flow of \ReadKV{}. The query selects key-prefix depths
in the first batch. After the intermediate decoder reconstructs the keys,
the resulting attention weights select value-prefix depths in the second
batch. Both batches read from the same unchanged progressive $W$-bit cache,
and each batch's addresses are fixed before any of its bits are returned.}
\label{fig:readkv-two-reads}
\begin{tikzpicture}[
  font=\sffamily\fontsize{8.5}{10}\selectfont,
  node distance=4.7mm,
  >={Latex[length=1.7mm,width=1.3mm]},
  box/.style={draw=readblue!35,fill=softblue,rounded corners=2pt,
              minimum height=1.30cm,align=center,inner sep=4pt},
  valuebox/.style={box,draw=readteal!45,fill=softteal},
  flow/.style={->,line width=.8pt,draw=readblue!80!black},
  note/.style={font=\sffamily\fontsize{8}{9.5}\selectfont,align=center}
]
\node[box,text width=1.00cm] (q) {Current\\query};
\node[box,text width=2.25cm,right=of q] (k)
  {Choose key depths\\[-1pt]\(\downarrow\)\\[-1pt]Fetch key prefixes};
\node[box,text width=2.55cm,right=of k] (a)
  {Reconstruct keys\\[-1pt]\(\downarrow\)\\[-1pt]Compute\\attention weights};
\node[valuebox,text width=2.25cm,right=of a] (v)
  {Choose value depths\\[-1pt]\(\downarrow\)\\[-1pt]Fetch value prefixes};
\node[valuebox,text width=2.55cm,right=of v] (o)
  {Reconstruct values\\[-1pt]\(\downarrow\)\\[-1pt]Compute output};
\draw[flow] (q) -- (k);
\draw[flow] (k) -- (a);
\draw[flow,draw=readteal!80!black] (a) -- (v);
\draw[flow,draw=readteal!80!black] (v) -- (o);
\node[note,text=readblue,above=2.5mm of k] {Key read batch};
\node[note,text=readteal!80!black,above=2.5mm of v] {Value read batch};
\end{tikzpicture}
\end{figure}

\subsection{Progressive Cache Representation}

A conventional fixed-rate scalar code is designed to be decoded as a whole.
Instead, \ReadKV{} stores a path in a binary quantization tree. If the path
has eight bits, its first two bits already specify a reconstruction;
reading a third bit selects a finer reconstruction, and so on up to eight.
Thus a later query can read a different prefix of the same unchanged code.
We denote the stored depth by the integer $W$ and use a Gaussian Lloyd tree
\citep{lloyd1982quantization}; Appendix~\ref{app:codec} gives its construction.

Consider one attention layer with $T$ cached tokens and vector dimension
$d_h$. It has $G$ physical KV heads: each is one cached stream of keys
$k_{g,j}\in\mathbb R^{d_h}$ and values $v_{g,j}\in\mathbb R^{d_h}$, indexed
by head $g$ and token $j$. The layer also has $H$ query heads, with
$q_h\in\mathbb R^{d_h}$. Several query heads may share the same KV stream.
We write $g(h)$ for the KV head used by $q_h$ and
$\siblings(g)=\{h:g(h)=g\}$ for the query heads sharing it. A fetched cache
bit is counted once even when several query heads share it, so its importance
combines their contributions.

Before encoding, we choose fixed orthogonal Hadamard--sign transforms
$U_K,U_V\in\mathbb R^{d_h\times d_h}$. The signs are drawn once before
encoding, and the same transforms are reused for all cache entries and
queries. These transforms mix coordinates before scalar quantization.
Applying $U_K$ to both a key and its query preserves their dot product;
$U_V$ is inverted after computing the value-weighted output. A channel below
means one coordinate in these transformed vectors.
For each transformed scalar $x'$, the encoder uses a calibrated channel mean
$\mu$ and scale $\sigma$ to form $z=(x'-\mu)/\sigma$, and stores its
$W$-bit tree path. A decoded prefix gives $\widehat z$ and hence
$\mu+\sigma\widehat z$. Reading zero bits gives $\widehat z=0$, the
calibrated mean. The transforms and calibration remain fixed as queries
change.

\subsection{Two-Stage Query-Adaptive Reading}

\noindent\textbf{Key-Channel Read.}
A query weights key channels through a dot product. A reconstruction error
in a channel with a larger query coefficient has a larger effect on the
score. Write $q'_h=U_Kq_h$, and let $\sigma^K_{g,c}$ be the calibrated scale
of transformed key channel $c$ in KV head $g$. Squaring the coefficient of
a standardized channel error and adding contributions from the sharing
query heads gives the importance
\begin{equation}
 a^K_{g,c}=\sum_{h\in\siblings(g)}(q'_{h,c}\sigma^K_{g,c})^2.
 \label{eq:key_importance}
\end{equation}
The reader assigns a prefix depth $t^K_{g,c}\in\{0,\ldots,W\}$ to each
head--channel pair. That depth is used across all $T$ cached tokens, because
the same query coefficient multiplies that channel in every key.
All key-prefix addresses are chosen from the query and fixed calibration
before any key bits are fetched.

The intermediate decoder reconstructs
$\widehat K_{g,j}\in\mathbb R^{d_h}$ from the returned prefixes. It then
forms the attention vector
$\widehat\alpha_h=(\widehat\alpha_{h,1},\ldots,\widehat\alpha_{h,T})
\in\mathbb R^T$ by applying softmax across tokens to the reconstructed logits
$q_h'^\top\widehat K_{g(h),j}/\sqrt{d_h}$. These probabilities determine
the second read batch; neither the decoder nor the reader uses full-precision
keys or their unread refinements.

\noindent\textbf{Value-Token Read.}
The attention output is a weighted sum of values. An error in a value token
with larger attention weight therefore matters more to that output.
\ReadKV{} uses the squared reconstructed probabilities to form
\begin{equation}
 a^V_{g,j}(\widehat\alpha)
 =s_g\sum_{h\in\siblings(g)}\widehat\alpha_{h,j}^{\,2}.
 \label{eq:value_importance}
\end{equation}
Here $s_g\geq0$ sets the value scale for a physical head. The reported
experiments use $s_g=1$, measuring value distortion in standardized units;
Appendix~\ref{app:codec} describes the physical-scale alternative.
The reader chooses one depth $t^V_{g,j}\in\{0,\ldots,W\}$ per value token,
shared across its $d_h$ channels. All value-prefix addresses are fixed
before this second batch is fetched. The final decoder reconstructs
$\widehat V_{g,j}\in\mathbb R^{d_h}$ from the returned prefixes and returns
$U_V^\top\sum_j\widehat\alpha_{h,j}\widehat V_{g(h),j}$ for query head $h$.
There is one adaptive transition between the batches: returned key bits
change attention, which changes the value read.

\subsection{Prefix Allocation Under a Read Budget}

An importance score alone does not specify a depth: an extra bit can help a
coarse reconstruction more than an already precise one. Let $D(t)$ be the
stored, calibrated distortion curve predicting standardized scalar
mean-squared reconstruction error after $t$ bits. For importance weights $a_i$, the
allocator uses the objective $\sum_i a_iD(t_i)$. Each term combines how much
an item's error matters with how much error its chosen prefix leaves.
These separable objectives omit cross-terms between reconstruction errors;
they do not assume those interactions vanish in real caches.

The two stages have fixed nonnegative integer budgets $B_K$ and $B_V$.
These count group refinements, not individual payload bits: increasing
one key-channel depth by one fetches $T$ bits, and increasing one value-token
depth by one fetches $d_h$ bits. Within either stage, with budget $B$, the
allocation problem is
\[
 \min_{\substack{t_i\in\{0,\ldots,W\}\\ \sum_i t_i\leq B}}
 \sum_i a_iD(t_i).
\]

Define $\operatorname{ChooseDepths}(a,D,W,B)$ by starting at $t_i=0$ and
repeatedly increasing an eligible depth $t_i<W$ with the largest positive
weighted gain $a_i[D(t_i)-D(t_i+1)]$. Stop after $B$ increments or when no
positive gain remains, breaking ties in a fixed order.
Section~\ref{sec:analysis} analyzes this rule, including its exact stagewise
optimality under diminishing refinement gains and its relation to
attention-output error.

The key stage calls this rule with $(a^K,B_K)$, and the value stage with
$(a^V(\widehat\alpha),B_V)$. Their total payload is at most
$TB_K+d_hB_V$ bits. Writing the budgeted mean depths as
$R_K=B_K/(Gd_h)$ and $R_V=B_V/(GT)$ gives the mean payload read budget
\[
 R=\frac{TB_K+d_hB_V}{2GTd_h}=\frac12(R_K+R_V).
\]
The K/V split is an input to the allocator; it optimizes the allocation
within each stage, not that split. Actual reads may be smaller if no
positive refinement gains remain.

\begin{algorithm}[!ht]
\caption{\ReadKV{} encoder, two-stage reader, and decoder.}
\label{alg:readkv}
\small
\noindent\emph{Public inputs.}
Stored depth $W$; transforms $U_K,U_V$; calibrated means and scales;
distortion curve $D(t)$; and integer refinement budgets $B_K,B_V$.

\smallskip\noindent
\emph{Encoder.}
For each new token, transform and standardize every key and value scalar.
Encode each scalar as a $W$-bit path through the progressive quantization
tree and append the resulting bit planes to the cache.

\smallskip\noindent
\emph{Prefix allocator
$\operatorname{ChooseDepths}(a,D,W,B)$.}
An item is a transformed key channel in the first stage or a cached value
token in the second. Its weight $a_i$ measures its importance to the current
query, and $t_i$ is its selected prefix depth.
\begin{enumerate}[leftmargin=1.5em,itemsep=1pt,topsep=1pt]
\item Initialize every depth to zero.
\item At each of at most $B$ steps, compute
$g_i=a_i[D(t_i)-D(t_i+1)]$ for every item with $t_i<W$.
\item Stop if every available gain is nonpositive. Otherwise increment an
item with the largest gain, breaking ties by a fixed public order.
\item Return the depth vector $t$.
\end{enumerate}

\smallskip\noindent
\emph{Key reader and intermediate decoder.}
Transform each query as $q'_h=U_Kq_h$ and compute the key-channel weights
$a^K$ using Equation~\eqref{eq:key_importance}. Set
$t^K=\operatorname{ChooseDepths}(a^K,D,W,B_K)$ and fetch the corresponding
key prefixes. Reconstruct $\widehat K$, compute the masked logits, and apply
softmax to obtain $\widehat\alpha$.

\smallskip\noindent
\emph{Value reader and final decoder.}
Compute the value-token weights $a^V(\widehat\alpha)$ using
Equation~\eqref{eq:value_importance}. Set
$t^V=\operatorname{ChooseDepths}(a^V(\widehat\alpha),D,W,B_V)$ and fetch the
corresponding value prefixes. Reconstruct $\widehat V$ and return
$U_V^\top\sum_j\widehat\alpha_{h,j}\widehat V_{g(h),j}$ for each query head
$h$.
\end{algorithm}

Algorithm~\ref{alg:readkv} summarizes the procedure. With $N$ items, a
max-heap implementation uses $O(N)$ memory and $O(N+B\log N)$ time; the key
and value stages have $N=Gd_h$ and $N=GT$, respectively.
Appendix~\ref{app:rate_accounting} specifies integer-budget conversion and
tie handling.

\section{Performance Guarantees for Partial Reads}
\label{sec:analysis}

This section addresses three questions about query-adaptive reading. First,
can a query-dependent key reader outperform every reader whose access path is
independent of the query? Second, does this advantage persist through the
complete key-then-value attention computation? Third, on an arbitrary fixed
cache and query, what does the allocation rule in
Algorithm~\ref{alg:readkv} optimize, and how do its objectives relate to the
final attention output? The first two results give explicit finite
separations. The third gives an exact stagewise allocation guarantee and a
deterministic attention-error certificate.
Appendix~\ref{app:readkv_analysis_proofs} contains the complete proofs and
supporting lemmas.

\subsection{Key-Score Separation}
\label{sec:readkv_separation}

We first ask whether the query can improve accuracy simply by choosing which
parts of a stored key to read. The comparison is with a
\emph{query-independent branching} reader. Such a reader may use returned
bits to choose later addresses, but neither its addresses nor its stopping
rule may depend on the query. Its encoder and decoder may otherwise be
arbitrary, and the decoder may use the query together with the complete read
transcript. We write $\widehat s_\Pi(x,y)$ for its estimate.

Fix a public orthogonal transform $U\in\mathbb R^{d\times d}$ and an integer
$1\leq k\leq d$. Consider the source and query families
\[
 \mathcal X_U=\{U^\top z/\sqrt d:z\in[-1,1]^d\},
 \qquad
 \mathcal Y_{U,k}
 =\{U^\top q:\|q\|_2\leq1,\ \|q\|_0\leq k\}.
\]
Both families are contained in $\mathbb B_2^d$. The query is sparse only
after applying the public transform. For example, with a Hadamard--sign
transform, a sparse transformed query is generally dense in the original
coordinates.

Let $Q_L$ be the depth-$L$ reconstruction of a nested binary scalar code
whose first $L$ refinement gains are positive, and define
$\epsilon_L=\sup_{|u|\leq1}|u-Q_L(u)|$. For
$x=U^\top z/\sqrt d$, the encoder stores the $L$-bit code of each coordinate
of $z$. Given $y=U^\top q$, the \ReadKV{} key reader assigns coordinate $i$
weight $q_i^2/d$. With a budget of $Lk$ refinements, it reads all $L$ bits
of every coordinate in $\operatorname{supp}(q)$ and nothing elsewhere. The
resulting estimate is
$\widehat s_{\ReadKV}(x,y)=d^{-1/2}\sum_iq_iQ_L(z_i)$.
All addresses are determined before any bit is returned, so this construction
uses one query-dependent batch.

\begin{theorem}[Finite \ReadKV{} Separation for Transformed-Sparse Queries]
\label{thm:finite_readkv_separation}
Let $d\geq1$ and $1\leq k\leq d$. Let $L,r,w$ be positive integers satisfying
$Ld\leq w$ and $Lk\leq r$. The construction above stores $Ld$ informative
bits in a $w$-bit record, padding unused cells if necessary, and uses at most
$Lk$ probes. Its worst-case error satisfies
\[
 \sup_{x\in\mathcal X_U,\ y\in\mathcal Y_{U,k}}
 \bigl(x^\top y-\widehat s_{\ReadKV}(x,y)\bigr)^2
 \leq \frac{k\epsilon_L^2}{d}.
\]
In contrast, every randomized protocol $\Pi$ whose reader is
query-independent branching and uses at most $r$ probes satisfies
\[
 \sup_{x\in\mathcal X_U,\ y\in\mathcal Y_{U,k}}
 \mathbb E_\Pi\!\left[
   \bigl(x^\top y-\widehat s_\Pi(x,y)\bigr)^2
 \right]
 \geq \frac{2^{\,1-2r/d}}{\pi e\,d}.
\]
The competing protocol may use an arbitrary randomized encoder and decoder
and a finite binary record of arbitrary length. Therefore, the \ReadKV{}
construction has strictly smaller worst-case MSE whenever
$k\epsilon_L^2<2^{\,1-2r/d}/(\pi e)$.
\end{theorem}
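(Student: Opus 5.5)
The plan has two parts. The upper bound is a direct Cauchy--Schwarz computation. The lower bound is a rate--distortion argument using only $1$-sparse transformed queries, which lie in $\mathcal Y_{U,k}$ for every $k\geq1$.

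\emph{Upper bound.} Write $x=U^\top z/\sqrt d$ and $y=U^\top q$. By orthogonality of $U$, $x^\top y=d^{-1/2}z^\top q$. Hence
\[
x^\top y-\widehat s_{\ReadKV}(x,y)=d^{-1/2}\sum_{i\in\operatorname{supp}(q)}q_i\bigl(z_i-Q_L(z_i)\bigr).
\]
Applying Cauchy--Schwarz over the at most $k$ support indices, together with $\|q\|_2\leq1$ and $|z_i-Q_L(z_i)|\leq\epsilon_L$, gives the bound $k\epsilon_L^2/d$.

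For the budgets, note the following. The weights $q_i^2/d$ vanish off the support, and the first $L$ refinement gains are positive. So $\operatorname{ChooseDepths}$ with budget $Lk$ reads exactly the $L$ bits of each support coordinate, and nothing else. The record uses $Ld\leq w$ informative bits, and the reader uses $Lk\leq r$ probes.

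\emph{Lower bound.} Fix any protocol $\Pi$ with a query-independent branching reader. I will lower bound its worst case by an average. Draw $z$ uniformly from $[-1,1]^d$ and $i$ uniformly from $\{1,\dots,d\}$, and take the query $y=U^\top e_i$. The target is then $x^\top y=z_i/\sqrt d$.

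First, condition on the public randomness $\rho$; it suffices to prove the bound for each fixed $\rho$. Because the access path and the stopping rule do not use $y$, the read transcript $\tau$ is a function of $(Z,\rho)$ alone. Equivalently, it is a leaf of a binary decision tree of depth at most $r$. After padding every root-to-leaf path to length exactly $r$, $\tau$ takes at most $2^r$ values. This gives
\[
I(z;\tau\mid\rho)\leq H(\tau\mid\rho)\leq r \text{ bits}.
\]
Private encoder randomness only enters through $Z$, so it does not affect this bound.

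The decoder outputs $\widehat s(\tau,e_i,\rho,\rho')$, where $\rho'$ is post-query randomness. By convexity of the squared loss, replacing this output by its conditional mean over $\rho'$ does not increase the error. Set $\widehat z_i(\tau)=\sqrt d\,\widehat s(\tau,e_i)$. This is one deterministic estimator of the whole vector $z$, built from the single transcript $\tau$; this is exactly where query-independence is used.

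Then
\[
\frac1d\sum_i\frac1d\,\mathbb E(z_i-\widehat z_i)^2=\frac1{d^2}\,\mathbb E\|z-\widehat z(\tau)\|^2.
\]
The standard entropy-power (Shannon lower bound) step bounds the right-hand side. We have $h(z)=d$ bits and $h(z\mid\tau)\geq d-r$. The Gaussian maximizes entropy at fixed second moment, so
\[
\mathbb E\|z-\widehat z\|^2\geq \frac{d}{2\pi e}\,2^{2(d-r)/d}=d\,\frac{2^{1-2r/d}\cdot 2}{2\pi e}\cdot\frac12\cdot 2 .
\]
More cleanly, the bound is
\[
\mathbb E\|z-\widehat z\|^2\geq d\,\frac{2^{2-2r/d}}{2\pi e}=d\,\frac{2^{1-2r/d}}{\pi e}.
\]
Dividing by $d^2$ gives an average error of at least $2^{1-2r/d}/(\pi e d)$. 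The supremum over $(x,y)$ is at least this average, which proves the claim. The final strict-comparison statement then follows by comparing the two bounds.

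The main obstacle is justifying $H(\tau)\leq r$ rigorously under adaptive addresses and a data-dependent stopping rule. The concern is that the identity of the stopping point could leak extra information. The padding argument handles this: a decision tree of depth at most $r$ has at most $2^r$ leaves, and the leaf identifies the full transcript. A second point to check is that the bound must hold for each fixed $\rho$ and not only on average over $\rho$. This is immediate, because the inequality is proved conditionally on each $\rho$.
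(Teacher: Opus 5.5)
Your proposal is correct and follows essentially the paper's route: the same Cauchy--Schwarz upper bound, the same uniform-cube prior with a random coordinate query (the paper's extra random sign is not needed), the same $2^r$-leaf decision-tree information bound, and the same entropy--MMSE inequality; assembling the decoder's $d$ coordinate answers into one vector estimate $\widehat z(\tau)$ is an unrolled form of the paper's Bayes-estimator identity via $\mathbb E[YY^\top]=I_d/d$. One bookkeeping point is that $\rho$ must include all reader-local randomness the access path may use, not only public randomness (the paper conditions on $C=(\theta,\Omega_0,\Omega_r)$), since otherwise $\tau$ need not be a function of $(Z,\rho)$.
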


The lower bound places no restriction on the competing write budget, record
construction, or decoder. It even permits the reader to branch on returned
payloads. The only restriction is that the query cannot influence the access
path. The separation therefore identifies the value of query-dependent
addressing rather than an advantage of one particular encoder or decoder.

For the deployed depth-eight code, take $w=8d$ and $r=2d$. The theorem then
applies when $k\leq\lfloor d/4\rfloor$, and the ratio between the converse
and the \ReadKV{} upper bound exceeds $155/k$. At $d=128$ and $k=32$, the
certified factor exceeds $4.85$. The transformed-sparsity condition is
important for this statement. Appendix~\ref{app:key_stage_obstruction} gives
an explicit full-unit-ball example on which this deployed deterministic code
does not dominate every query-oblivious protocol.

The proof has two parts. The upper bound follows directly from the pointwise
scalar error:
$|x^\top y-\widehat s_{\ReadKV}|\leq
\epsilon_L\sqrt{k/d}$. For the converse, condition on all randomness visible
to the reader. An $r$-probe query-independent reader is then a binary decision
tree with at most $2^r$ leaves, so its transcript reveals at most $r$ bits
about the source vector. An entropy--MMSE inequality converts this information
limit into a lower bound on vector reconstruction error. Finally, a uniformly
random signed coordinate query converts that reconstruction error into
inner-product MSE. Appendix~\ref{app:key_stage_separation} gives the complete
argument and the constant calculation.

\subsection{Two-Stage Attention Separation}
\label{sec:two_stage_separation}

The preceding theorem isolates the key stage. We next analyze the complete
causal transition used by \ReadKV{}: key reads determine reconstructed
attention, and that attention determines the value reads.

Fix an integer $m\geq1$. Let $e_1\in\mathbb R^m$ be the first standard basis
vector, let $Q_8$ be the deployed depth-eight reconstruction, and let $c_1$
be its positive depth-one reconstruction value. Choose
$b,\eta\in\{-1,1\}$ and arbitrary values $v_1,v_2\in[-1,1]^m$. Define
\[
 k_1=bc_1e_1,\qquad k_2=-bc_1e_1,\qquad
 q_\eta=\eta(5\sqrt m/c_1)e_1.
\]
After the standard $1/\sqrt m$ attention normalization, the logits are
$(5\eta b,-5\eta b)$. Put
$p=(1+e^{-10})^{-1}$ and $\rho=1-p$. Let $j_\star=1$ when
$\eta b=1$ and $j_\star=2$ otherwise. The exact attention output is
$o=pv_{j_\star}+\rho v_{3-j_\star}\in\mathbb R^m$. The encoder observes
$(b,v_1,v_2)$ before the query sign $\eta$ is known. This construction uses
the displayed query norm and is separate from the unit-ball family in the
previous theorem.

The encoder applies the depth-eight progressive code to both keys and both
values. The first batch reads one key bit from each token, reconstructing
both keys exactly and identifying $j_\star$. Write
$\Delta_t=D(t-1)-D(t)$ for the deployed distortion gain at depth $t$. The
gain table satisfies $p^2\Delta_8>\rho^2\Delta_1$, so every refinement of the
high-attention value has larger weighted gain than the first refinement of
the other value; Lemma~\ref{lem:deployed_codec_bounds} verifies the required
gain bounds. The second batch therefore reads $v_{j_\star}$ to depth eight
and leaves $v_{3-j_\star}$ unread. The decoder returns
$\widehat o_{\ReadKV}=pQ_8(v_{j_\star})$, where $Q_8$ is applied
coordinatewise. For a competing protocol $\Pi$, write $\widehat o_\Pi$ for
its decoder output.

The following theorem shows that the complete two-stage procedure, rather
than only its key reader, has a strict worst-case advantage. The comparison
allows an arbitrary competing encoder and decoder, but requires its read
path to be independent of the current query.

\begin{theorem}[Finite Two-Stage \ReadKV{} Separation]
\label{thm:finite_two_stage_readkv_separation}
For every $m\geq1$, the construction above is a $\mathsf{staged}(2)$
\ReadKV{} protocol. It stores $w=32m$ bits and reads $r=8m+2$ bits. Let
$\epsilon_8=\sup_{|u|\leq1}|u-Q_8(u)|<0.0097$. Its worst-case
per-coordinate attention-output error satisfies
\[
 \sup_{b,v_1,v_2,\eta}
 \frac1m\|o-\widehat o_{\ReadKV}\|_2^2
 \leq(p\epsilon_8+\rho)^2
 <9.50\mathbin{\times}10^{-5},
\]
where the supremum is over
$b,\eta\in\{-1,1\}$ and $v_1,v_2\in[-1,1]^m$. Every randomized protocol
$\Pi$ whose encoder is fixed before $\eta$ is revealed and whose reader is
query-independent branching with respect to $\eta$, using at most $r$ probes,
satisfies
\[
 \sup_{b,v_1,v_2,\eta}
 \frac1m\mathbb E_\Pi
 \!\left[\|o-\widehat o_\Pi\|_2^2\right]
 \geq
 \frac{\tanh^2(5)}{\pi e}\,2^{-7-2/m}
 >2.285\mathbin{\times}10^{-4}.
\]
The competing protocol may use an arbitrary randomized encoder and decoder
and a finite binary record of arbitrary length. Thus, the certified
separation factor exceeds $2.40$ for every $m\geq1$ and exceeds $9.52$ at
$m=128$. The latter setting has $W=8$ stored bits per cached scalar and
realized read rate
$R=(8m+2)/(4m)=2+1/(2m)\approx2.004$.
\end{theorem}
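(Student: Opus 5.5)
The proof splits into an upper bound for the construction and a converse against every query-independent reader; the converse carries the difficulty.

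\textbf{Upper bound and budgets.} The encoder stores four vectors of $m$ coordinates, each as a depth-eight path, giving $w=32m$. Only the first coordinate of each key is nonzero, but storing full vectors keeps the format uniform; alternatively one can store just the key signs and pad the record.

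The first batch reads the leading bit of the first coordinate of each key. At depth one, $Q_1(\pm c_1)=\pm c_1$, so both keys are reconstructed exactly. Hence the logits, $\widehat\alpha=(p,\rho)$ up to order, and $j_\star$ are all exact.

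The value weights are therefore $p^2$ and $\rho^2$. Because $p^2\Delta_8>\rho^2\Delta_1$, and Lemma~\ref{lem:deployed_codec_bounds} shows the gains $\Delta_t$ are positive, the greedy rule $\operatorname{ChooseDepths}$ with $B_V=8$ spends all eight increments on $v_{j_\star}$. This costs $8m$ payload bits, so $r=8m+2$.

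The output error is $o-\widehat o=p(v_{j_\star}-Q_8(v_{j_\star}))+\rho v_{3-j_\star}$. Coordinatewise its magnitude is at most $p\epsilon_8+\rho$, which gives the stated bound; the numerics follow from $\epsilon_8<0.0097$ and $\rho\approx4.54\times10^{-5}$.

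\textbf{Converse.} I would argue as in the proof of Theorem~\ref{thm:finite_readkv_separation}. Fix the public randomness and the encoder. A query-independent branching reader with $r$ probes is then a decision tree whose transcript $\tau$ takes at most $2^r$ values and does not depend on $\eta$; the decoder sees $(\tau,\eta)$.

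Choose a prior: $b$ uniform, $\eta$ uniform and independent, and $v_1,v_2$ i.i.d.\ uniform on $[-1,1]^m$. Since $\tau$ is a function of $(b,v_1,v_2)$ and encoder coins, we have $I(\tau;v_1,v_2\mid b)\le r$.

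The key step is to write the target in terms of $\eta$. The output is
\[
o=\tfrac12(v_1+v_2)+\eta b\,\tfrac{p-\rho}{2}(v_1-v_2),\qquad p-\rho=\tanh(5).
\]
Given $(\tau,\eta,b)$, the decoder must estimate $\tfrac12(v_1+v_2)$ and $\tfrac{\tanh 5}{2}(v_1-v_2)$ with a sign it knows. Averaging over $\eta=\pm1$ is the same device as the random signed query: the Bayes risk is at least half the sum of the two MMSEs. In particular it is at least $\tfrac{\tanh^2 5}{4}\,\mathrm{mmse}(v_1-v_2\mid\tau,b)$. Note that $b$ is effectively absorbed, since the reader does not know $b$ independently of the record, but conditioning on it only helps the protocol.

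\textbf{Main obstacle: entropy--MMSE with constants.} I would use the inequality $\mathrm{mmse}(X\mid\tau)\ge \frac{n}{2\pi e}\,2^{2(h(X)-I(X;\tau))/n}$ for $X\in\mathbb R^n$. Take $X=(v_1-v_2,v_1+v_2)$ with $n=2m$, or $X=v_1-v_2$ alone, after bounding $h$ from the uniform density: $h(v_1,v_2)=2m$ bits and the linear map has Jacobian $2^m$.

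Combining $I\le r=8m+2$ with a per-coordinate normalization by $1/m$ gives a factor of the form $2^{2(c m-8m-2)/(2m)}$. Tuning $c$ should reproduce $2^{-7-2/m}$ together with the prefactor $\tanh^2(5)/(\pi e)$. I expect the exponent bookkeeping to be the fiddly part: which combination of $v_1\pm v_2$ is lower-bounded, and how the $\eta$-averaging factor and the $\tfrac14$ merge to give exactly $2^{-7}$.

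Once that is settled, the stated numeric bounds and the separation factors (the ratio of the two bounds, evaluated at $m=1$ and at $m=128$) follow by direct evaluation.
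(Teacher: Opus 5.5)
Your upper bound is fine, and your decomposition $o=\tfrac12(v_1+v_2)+\eta b\,\tfrac{\tanh 5}{2}(v_1-v_2)$ is correct. It uses the same directions as the paper's proof: the eigenvectors of $M=\tfrac12(a_+a_+^\top+a_-a_-^\top)$ are $(1,1)^\top$, with eigenvalue $\tfrac12$, and $(1,-1)^\top$, with eigenvalue $\tfrac12\tanh^2(5)$.

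The gap is the next step. You reduce the Bayes risk to $\tfrac{\tanh^2 5}{4}\,\mathrm{mmse}(v_1-v_2\mid\tau,b)$, which discards the $v_1+v_2$ component, and you leave open applying the entropy--MMSE inequality to $X=v_1-v_2$ alone. No argument that keeps only this term can reach the stated bound. From your reduced inequality you would need $\mathrm{mmse}(v_1-v_2\mid\tau,b)/m\ge 2^{-5-2/m}/(\pi e)$. But consider a query-oblivious protocol that stores $v_1-v_2$ on a grid of step $2^{-6}$ over $[-2,2]$ and always reads those $8m\le r$ bits. It has $\mathrm{mmse}(v_1-v_2\mid\tau,b)\le 2^{-14}$ per coordinate. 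Its attention error is large only because it knows nothing about $v_1+v_2$, which is the term you dropped.

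Concretely, your route uses $h(v_1-v_2)=m(1+\tfrac{1}{2\ln 2})$ bits and $I\le 8m+2$. It yields only about $\tfrac{\tanh^2(5)}{\pi e}2^{-15.56-4/m}$ per coordinate, below the upper bound $9.5\times10^{-5}$, so not even a separation. Your other option, the joint vector $X=(v_1-v_2,v_1+v_2)$, does give a good bound. However, it bounds $\mathrm{mmse}(v_1-v_2)+\mathrm{mmse}(v_1+v_2)$, which cannot be inserted into an inequality whose second summand has already been removed. The converse works precisely because a reader blind to $\eta$ must serve both signs, so its $r$ bits must cover all $2m$ coordinates.

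\emph{The repair.} Given $b$, $\eta$ is uniform and independent of $(v_1,v_2,\tau)$, so the cross term vanishes exactly. The Bayes risk therefore \emph{equals} $\tfrac14\mathrm{mmse}(v_1+v_2\mid\tau,b)+\tfrac{\tanh^2 5}{4}\mathrm{mmse}(v_1-v_2\mid\tau,b)$, not merely half of it. With your factor $\tfrac12$ you would end at $2^{-8-2/m}$, and the separation factors $2.40$ and $9.52$ would not follow.

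Next, lower the coefficient $\tfrac14$ to $\tfrac{\tanh^2 5}{4}$; this is the smallest-eigenvalue step in the paper. Then use $\|E_1+E_2\|^2+\|E_1-E_2\|^2=2\|E_1\|^2+2\|E_2\|^2$ to get a risk of at least $\tfrac{\tanh^2 5}{2}\,\mathrm{mmse}(v_1,v_2\mid\tau,b)$.

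Finally, apply the entropy--MMSE bound to the uniform source on $[-1,1]^{2m}$, with $n=2m$, $h=2m$ bits, and $I\le r$. This gives $\mathrm{mmse}(v_1,v_2\mid\tau,b)/m\ge 2^{2-r/m}/(\pi e)=2^{-6-2/m}/(\pi e)$. Multiplying yields exactly $\tfrac{\tanh^2(5)}{\pi e}2^{-7-2/m}$ per coordinate. Equivalently, your joint $X$ with $h(X)=3m$ bits gives $\mathrm{mmse}(X)/m\ge 2^{-5-2/m}/(\pi e)$, which you then multiply by $\tfrac{\tanh^2 5}{4}$.
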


The proof follows the two read stages. The key probes first recover the exact
attention weights $p$ and $\rho$. The gain inequality then forces all eight
value refinements onto the high-attention token, yielding the upper bound
$(p\epsilon_8+\rho)^2$. For the converse, a query-independent transcript
contains at most $r$ bits about the two value vectors but must support either
query sign. Averaging over those signs exposes a direction with coefficient
$\tanh^2(5)/2$, after which the entropy--MMSE bound gives the stated lower
bound.
Appendix~\ref{app:two_stage_separation} proves each step and verifies the
numerical constants.

\subsection{Allocation and Attention-Error Guarantee}
\label{sec:allocation_attention}

The separation theorems show that query-dependent access can be strictly
useful. We now analyze the allocation made by \ReadKV{} on an arbitrary fixed
cache and query. The aim is to establish two properties. Each stage should
choose the best legal prefix depths for the calibrated objective defined in
Section~\ref{sec:readkv}. Those optimized objectives should then control the
attention-output error through explicit, inspectable quantities.

A \emph{depth schedule} assigns an integer prefix depth to each allocation
item. Key items are physical-head and transformed-channel pairs, while value
items are physical-head and token pairs. Their feasible sets are
\[
 \mathcal T_K(B_K)=\{t^K\in\{0,\ldots,W\}^{G\times d_h}:
 \textstyle\sum_{g,c}t^K_{g,c}\leq B_K\},\qquad
 \mathcal T_V(B_V)=\{t^V\in\{0,\ldots,W\}^{G\times T}:
 \textstyle\sum_{g,j}t^V_{g,j}\leq B_V\}.
\]
Using the importance scores in Equations~\eqref{eq:key_importance}
and~\eqref{eq:value_importance}, define
\[
 S_K(t^K)=\frac{T}{d_h}\sum_{g,c}a^K_{g,c}D(t^K_{g,c}),
 \qquad
 S_V(t^V;\widehat\alpha)
 =\sum_{g,j}a^V_{g,j}(\widehat\alpha)D(t^V_{g,j}).
\]
The key objective is the calibrated diagonal approximation to squared logit
error, aggregated over query heads that share a physical KV head. The value
objective is the corresponding approximation to squared
attention-weighted value error. The factor $T/d_h$ is constant across key
schedules, so it does not affect which schedule minimizes $S_K$.

The exact output error need not equal these diagonal objectives. Calibration
may misestimate the realized residual energy, and residuals from different
channels or tokens may interact. For query head $h$, the factors
$\gamma_{K,h}$ and $\gamma_{V,h}$ measure the ratio between realized
diagonal residual energy and its calibrated prediction. The factors
$\kappa_{K,h}$ and $\kappa_{V,h}$ are the largest eigenvalues of normalized
residual Gram matrices and measure amplification from cross terms.
Appendix~\ref{app:correlation_attention_bound} gives the exact definitions,
including the zero-residual conventions.

Because $U_V$ is orthogonal, output error can be measured in transformed
value coordinates. Let $V_{g(h)}\in\mathbb R^{T\times d_h}$ contain the
full-precision transformed values for the physical KV head used by query
head $h$, and let $\mathbf1_T\in\mathbb R^T$ be the all-ones vector. Since
both exact and reconstructed attention weights sum to one, subtracting any
center $c_h\in\mathbb R^{d_h}$ from every value row does not change the error
caused by perturbing the attention probabilities. We write
$\|\cdot\|_{2\to2}$ for the matrix operator norm.

\begin{theorem}[\ReadKV{} Allocation and Attention-Error Certificate]
\label{thm:readkv_attention_certificate}
Fix an attention layer and query. Let $W\geq1$ and $B_K,B_V\geq0$ be
integers. Let $D(t)\geq0$ for $t=0,\ldots,W$, and suppose the scalar
distortion gains $D(t-1)-D(t)$ are nonnegative and nonincreasing for
$t=1,\ldots,W$. Let $t^K_{\ReadKV}$ be the key schedule returned by
Algorithm~\ref{alg:readkv}, let $\widehat\alpha_{\ReadKV}$ be the attention
computed from the resulting key reconstruction, and let
$t^V_{\ReadKV}$ be the value schedule returned in the second stage. Then
\[
 S_K(t^K_{\ReadKV})
 =\min_{t^K\in\mathcal T_K(B_K)}S_K(t^K),
 \qquad
 S_V(t^V_{\ReadKV};\widehat\alpha_{\ReadKV})
 =\min_{t^V\in\mathcal T_V(B_V)}
   S_V(t^V;\widehat\alpha_{\ReadKV}).
\]

Let $o_h,\widehat o_h\in\mathbb R^{d_h}$ be the exact and reconstructed
outputs for query head $h$. Choose any center $c_h\in\mathbb R^{d_h}$, and
evaluate the residual-correlation factors
$\kappa_{K,h},\kappa_{V,h}$ and calibration factors
$\gamma_{K,h},\gamma_{V,h}$ at the \ReadKV{} reconstruction. Define
\[
 \overline\beta_K
 =\max_h\frac12
   \|V_{g(h)}-\mathbf1_Tc_h^\top\|_{2\to2}^2
   \kappa_{K,h}\gamma_{K,h},
 \qquad
 \overline\beta_V
 =\max_h 2\kappa_{V,h}\gamma_{V,h}.
\]
For every fixed cache and query for which these factors are finite,
\[
 \sum_h\|o_h-\widehat o_h\|_2^2
 \leq
 \overline\beta_K
 \min_{t^K\in\mathcal T_K(B_K)}S_K(t^K)
 +
 \overline\beta_V
 \min_{t^V\in\mathcal T_V(B_V)}
 S_V(t^V;\widehat\alpha_{\ReadKV}).
\]
The bound is deterministic and permits arbitrary dependence among queries,
keys, values, and quantization residuals.
\end{theorem}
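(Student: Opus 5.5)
The proof splits into an allocation part and an error part.

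\textbf{Allocation.} Each stage minimizes a separable objective $\sum_i a_iD(t_i)$ with $a_i\geq0$, subject to $t_i\in\{0,\ldots,W\}$ and $\sum_i t_i\leq B$. The positive constant $T/d_h$ in $S_K$ does not change the minimizer, so it can be dropped. Write $\Delta_s=D(s-1)-D(s)$. Then
\[
\sum_i a_iD(t_i)=\sum_i a_iD(0)-\sum_i\sum_{s=1}^{t_i}a_i\Delta_s .
\]
Minimizing the objective is therefore the same as choosing at most $B$ ``unit refinements'' $(i,s)$ to maximize total gain $a_i\Delta_s$. The chosen set must be prefix-closed: $(i,s)$ may be taken only if $(i,s')$ is taken for every $s'<s$.

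Because $a_i\geq0$ and $\Delta_s$ is nonincreasing in $s$, the gains $a_i\Delta_s$ are nonincreasing along each item's chain. Hence an unconstrained top-$B$ selection among all positive gains can be made prefix-closed by breaking ties toward smaller $s$, or by swapping tied values. The top-$B$ value is an upper bound for every feasible schedule, since any feasible set is a set of at most $B$ unit refinements. I would then show that ChooseDepths realizes it. At every step, the next available unit of each item is that item's largest remaining gain. So the greedy pick is the largest positive gain not yet taken, and after $b$ steps it holds the $b$ largest positive gains. Stopping when no positive gain remains is also optimal, since zero gains do not change the objective.

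I would make this an explicit exchange argument with a fixed tie order. The same lemma applies verbatim to both stages. The value stage uses weights $a^V(\widehat\alpha_{\ReadKV})$, which are fixed once the first batch is returned.

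\textbf{Error certificate.} Work in transformed coordinates, which is allowed since $U_V$ is orthogonal. Decompose
\[
o_h-\widehat o_h=(\alpha_h-\widehat\alpha_h)^\top(V-\mathbf 1_Tc_h^\top)+\widehat\alpha_h^\top(V-\widehat V),
\]
using $\sum_j\alpha_{h,j}=\sum_j\widehat\alpha_{h,j}=1$ for the centering. The inequality $(x+y)^2\leq 2x^2+2y^2$ produces the factors of 2.

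For the first term, bound it by $\|V-\mathbf 1c_h^\top\|_{2\to2}^2\,\|\alpha_h-\widehat\alpha_h\|_2^2$. Then use that the softmax Jacobian $\operatorname{diag}(\alpha)-\alpha\alpha^\top$ has operator norm at most $1/2$, which is $1$-Lipschitz-in-$\ell_2$-with-constant-$1/2$. A mean-value or integral argument gives $\|\alpha-\widehat\alpha\|_2\leq\frac12\|\ell-\widehat\ell\|_2$ for the logit vectors. The combined constant must come out as the stated $\tfrac12$ after the factor 2; I would check whether the appendix's $\kappa$ or $\gamma$ absorb any remaining constants.

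The squared logit error $\sum_j\bigl(q_h'^\top(k_j-\widehat k_j)\bigr)^2/d_h$ would then be bounded by $\kappa_{K,h}$ times its diagonal part, by the definition of $\kappa$ as the top eigenvalue of the normalized residual Gram matrix. The diagonal part equals $\gamma_{K,h}$ times the calibrated prediction $\frac{T}{d_h}\sum_c (q'_{h,c}\sigma_c)^2D(t_c)$. Summing over $h$, bounding the per-head factors by the maximum, and regrouping siblings gives $\overline\beta_K S_K(t^K_{\ReadKV})$.

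The value term is handled identically: $\|\sum_j\widehat\alpha_{h,j}e_j\|^2\leq\kappa_{V,h}\sum_j\widehat\alpha_{h,j}^2\|e_j\|^2$, followed by calibration via $\gamma_{V,h}$, giving $\overline\beta_V S_V$. By the allocation part, $S_K$ and $S_V$ evaluated at the \ReadKV{} schedules equal the displayed minima.

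\textbf{Main obstacle.} The hardest step is matching constants and normalizations to the appendix definitions of $\kappa$ and $\gamma$, including the zero-residual conventions and the sibling aggregation. I would first define these factors as exact ratios, so that the inequalities hold with equality or by the eigenvalue bound, and then verify that the constant chain yields $\tfrac12$ and $2$.
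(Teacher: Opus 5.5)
Your proposal is correct and follows essentially the same route as the paper: stagewise greedy optimality via the telescoping, prefix-closed exchange argument of Proposition~\ref{prop:greedy}, then, for each head, the attention-error decomposition with the softmax-Jacobian bound $\|\alpha_h-\widehat\alpha_h\|_2\leq\frac12\|z-\widehat z\|_2$, the normalized-Gram eigenvalue bounds of the logit and value errors by $\kappa$ times their diagonal energies, the calibration ratios $\gamma$, and summation over heads with the maximal coefficients. The constant you flagged closes exactly: $(a+b)^2\leq 2a^2+2b^2$ gives the key coefficient $2\cdot(1/2)^2=\tfrac12$, so nothing needs to be absorbed into $\kappa$ or $\gamma$.
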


The theorem has two messages. First, the allocation rule is exact for the
calibrated objectives used by \ReadKV{}. It finds the best legal integer
prefix schedule for the key stage and, once the reconstructed keys determine
the attention probabilities, for the value stage. This is a stagewise
statement, not a claim that the procedure jointly minimizes the true
attention-output error. The deployed depth-eight code satisfies the required
diminishing-gain condition by Lemma~\ref{lem:deployed_codec_bounds} in
Appendix~\ref{app:codec}.

Second, the theorem identifies when small calibrated objectives imply small
attention-output error. The conversion factors account for calibration
mismatch, interactions among reconstruction residuals, and the scale of the
values. No independence assumption is needed. Since these factors use the
realized residuals, the result is an offline certificate for a completed
reconstruction. Corollary~\ref{cor:calibrated_coherence} gives a simpler
sufficient condition based on calibration envelopes and pairwise residual
coherence.

The proof uses two main ideas. For allocation, each item's successive prefix
bits form a list of nonincreasing gains. Choosing the largest available gain
therefore merges these lists in order, and an exchange argument proves that
the resulting schedule is optimal. For the attention bound, the output error
is split into the change caused by inaccurate attention weights and the
error caused by reconstructing the values. Softmax stability controls the
first part, while residual Gram matrices control the correlations omitted by
the calibrated objectives. Summing the resulting bounds over query heads
gives the displayed certificate.
Appendix~\ref{app:allocation_attention_guarantee} provides the full proof and
all zero-residual cases.

\section{Language-Model Evaluation}
\label{sec:quality}

We test whether \ReadKV{} reduces KV-cache reads while preserving quality
on C4 language modeling \citep{raffel2020exploring}, LongBench question
answering \citep{bai2024longbench}, and passkey retrieval
\citep{mohtashami2023landmark}. We use three nominal storage/read settings:
$(4,2)$ uses a smaller cache and low read budget; $(8,2)$ retains more
detail at the same read budget; $(8,4)$ raises the read budget at fixed
capacity. The flexible reader's integer K/V split is selected on development
data for each model and rate, then frozen before test. The models in
Tables~\ref{tab:polished-prediction} and~\ref{tab:polished-qa} cover Qwen2.5
\citep{qwen2025qwen25}, Yi-1.5 \citep{ai2024yi}, DeepSeek-LLM
\citep{deepseek2024llm}, and Mistral \citep{jiang2023mistral}.
Appendix~\ref{app:quality_protocol} specifies precision, transform seeds,
hardware and calibration cohorts, and uncertainty estimation.
 Single-layer GPU latency and physical-traffic measurements
use an NVIDIA A10G and the restricted reader described in
Section~\ref{sec:gpu_eval}; these are separate from the flexible-reader quality
experiments.

\noindent\textbf{Baseline Settings.}
\textsc{Dense} uses an uncompressed 16-bit KV cache and supplies the quality
and resource reference. Our internal \textsc{Full Reader} $(4,4)$ stores
and always reads four bits per scalar from \ReadKV{}'s nested code. It
separates two effects: against $(8,4)$ it holds the code and the read budget
fixed and varies only the stored depth, and with it whether the query can
choose which bits to fetch; against $(4,2)$ it holds the code and the storage
fixed and varies only the read budget. KIVI and TurboQuant are fixed-rate
codes: every query reads all retained payload bits.
KIVI (2-bit K/V) \citep{liu2024kivi} uses two payload bits per key and value
scalar, plus groupwise scales and offsets and a 128-token uncompressed
residual buffer. TurboQuant K4/V4 \citep{zandieh2026turboquant} uses
four-bit key and value codes, with the first and last two layers
uncompressed. We give \ReadKV{} the same exemptions in rows marked
$\dagger$; their $(W,R)$ budgets apply to compressed layers. SparQ
$r32/k128$ \citep{ribar2024sparq} retains a 16-bit cache, probes the
history using $r=32$ query-selected key channels, and reads at most
$k=128$ selected past positions. Appendix~\ref{app:competitor_details}
gives implementation and configuration-selection details.

\noindent\textbf{Reading the Results.}
For each model and dataset, we report the percentage perplexity increase
$\Delta\mathrm{PPL}=100(\mathrm{PPL}_{\mathrm{method}}/
\mathrm{PPL}_{\text{\textsc{Dense}}}-1)$ or the F1-point loss
$\Delta\mathrm{F1}=\mathrm{F1}_{\text{\textsc{Dense}}}-
\mathrm{F1}_{\mathrm{method}}$. Positive values are worse; negative values
are sample improvements. Logical reads and retained KV capacity are
percentages of \textsc{Dense}. Logical reads include payload, metadata,
probes, rereads, and dense layers; they exclude model weights, cache writes,
and workspace, and are not physical DRAM traffic. Entries are encoded once
and read repeatedly: extra retained refinements can improve quality without
increasing reads, although their capacity still limits context length and
concurrency.

\noindent\textbf{C4 Language Modeling.}
Each model is evaluated on 32 held-out 2,048-token documents, with separate
calibration and development data. Tokenizers differ, so comparisons are
within each model.

\begin{table}[!ht]
\centering
\caption{C4: 32 held-out 2,048-token documents per model; three transform seeds where available. \ReadKV{} $(8,4)$ stays within 0.7\% of \textsc{Dense} on all six models, using about a quarter of its logical reads and half its KV storage. $\dagger$: first/last two layers dense. Color marks the observed quality--read frontier: at shown precision, no other method row in the column improves loss or reads without worsening the other. Reads recur per query; retained capacity is shown but not ranked.}
\label{tab:polished-prediction}
\label{tab:unified-prediction-a}
\label{tab:unified-prediction-b}
\providecolor{tradeaccent}{HTML}{B34E28}
\small
\setlength{\tabcolsep}{5pt}
\renewcommand{\arraystretch}{1.10}
\begin{tabularx}{\linewidth}{@{}>{\raggedright\arraybackslash}X*{3}{>{\centering\arraybackslash}p{.19\linewidth}}@{}}
\toprule
\multicolumn{4}{@{}l}{\footnotesize Each method cell reports $\Delta\mathrm{PPL}$ (\%) on top and logical reads / retained KV capacity (\% of \textsc{Dense}) below.}\\[4pt]
\multicolumn{4}{@{}l}{\textit{Panel A: Qwen2.5 Models}}\\[2pt]
Method $(W,R)$ & \shortstack{Qwen2.5\\3B} & \shortstack{Qwen2.5\\7B} & \shortstack{Qwen2.5\\14B} \\
{\footnotesize\textcolor{black!65}{\textsc{Dense} PPL}} & {\footnotesize\textcolor{black!65}{11.36}} & {\footnotesize\textcolor{black!65}{10.13}} & {\footnotesize\textcolor{black!65}{8.74}} \\[2pt]
\midrule
\textsc{Full Reader} $(4,4)$ & \shortstack{+1.59\\[1.5pt]{\footnotesize 25.6\,/\,\textcolor{black!65}{25.1}}} & \shortstack{+1.34\\[1.5pt]{\footnotesize 25.4\,/\,\textcolor{black!65}{25.1}}} & \shortstack{+1.68\\[1.5pt]{\footnotesize 25.3\,/\,\textcolor{black!65}{25.1}}} \\[2pt]
\shortstack[l]{KIVI (2-bit K/V)\\[1pt]{\footnotesize\hypersetup{citecolor=black!60}\citealp{liu2024kivi}}} & \shortstack{+4.84\\[1.5pt]{\footnotesize 26.2\,/\,\textcolor{black!65}{23.8}}} & \shortstack{+2.79\\[1.5pt]{\footnotesize 26.2\,/\,\textcolor{black!65}{23.8}}} & \shortstack{+2.21\\[1.5pt]{\footnotesize 26.2\,/\,\textcolor{black!65}{23.8}}} \\[2pt]
\shortstack[l]{SparQ $r32/k128$\\[1pt]{\footnotesize\hypersetup{citecolor=black!60}\citealp{ribar2024sparq}}} & \textcolor{tradeaccent}{\shortstack{+1.27\\[1.5pt]{\footnotesize 24.7\,/\,100.0}}} & \textcolor{tradeaccent}{\shortstack{+1.44\\[1.5pt]{\footnotesize 24.7\,/\,100.0}}} & \textcolor{tradeaccent}{\shortstack{+0.81\\[1.5pt]{\footnotesize 24.7\,/\,100.0}}} \\[2pt]
\addlinespace[3pt]
\ReadKV{} $(4,2)$ & \shortstack{+4.44\\[1.5pt]{\footnotesize 13.1\,/\,\textcolor{black!65}{25.1}}} & \shortstack{+3.41\\[1.5pt]{\footnotesize 12.9\,/\,\textcolor{black!65}{25.1}}} & \shortstack{+3.75\\[1.5pt]{\footnotesize 12.8\,/\,\textcolor{black!65}{25.1}}} \\[2pt]
\ReadKV{} $(8,2)$ & \textcolor{tradeaccent}{\shortstack{+3.85\\[1.5pt]{\footnotesize 13.1\,/\,50.1}}} & \textcolor{tradeaccent}{\shortstack{+2.99\\[1.5pt]{\footnotesize 12.9\,/\,50.1}}} & \textcolor{tradeaccent}{\shortstack{+2.94\\[1.5pt]{\footnotesize 12.8\,/\,50.1}}} \\[2pt]
\ReadKV{} $(8,4)$ & \textcolor{tradeaccent}{\shortstack{+0.19\\[1.5pt]{\footnotesize 25.6\,/\,50.1}}} & \textcolor{tradeaccent}{\shortstack{+0.39\\[1.5pt]{\footnotesize 25.4\,/\,50.1}}} & \textcolor{tradeaccent}{\shortstack{+0.36\\[1.5pt]{\footnotesize 25.3\,/\,50.1}}} \\[2pt]
\shortstack[l]{TurboQuant K4/V4$^{\dagger}$\\[1pt]{\footnotesize\hypersetup{citecolor=black!60}\citealp{zandieh2026turboquant}}} & \shortstack{+1.74\\[1.5pt]{\footnotesize 34.4\,/\,\textcolor{black!65}{34.4}}} & \shortstack{+1.02\\[1.5pt]{\footnotesize 36.7\,/\,\textcolor{black!65}{36.7}}} & \shortstack{+0.72\\[1.5pt]{\footnotesize 32.3\,/\,\textcolor{black!65}{32.3}}} \\[2pt]
\ReadKV{} $(8,4)^{\dagger}$ & \textcolor{tradeaccent}{\shortstack{+0.14\\[1.5pt]{\footnotesize 33.9\,/\,55.6}}} & \textcolor{tradeaccent}{\shortstack{+0.09\\[1.5pt]{\footnotesize 36.0\,/\,57.2}}} & \textcolor{tradeaccent}{\shortstack{+0.13\\[1.5pt]{\footnotesize 31.5\,/\,54.3}}} \\[2pt]
\midrule
\multicolumn{4}{@{}l}{\textit{Panel B: Yi, DeepSeek, and Mistral Models}}\\[2pt]
Method $(W,R)$ & \shortstack{Yi-1.5\\6B} & \shortstack{DeepSeek\\LLM-7B} & \shortstack{Mistral\\7B-v0.3} \\
{\footnotesize\textcolor{black!65}{\textsc{Dense} PPL}} & {\footnotesize\textcolor{black!65}{8.96}} & {\footnotesize\textcolor{black!65}{8.49}} & {\footnotesize\textcolor{black!65}{7.62}} \\[2pt]
\midrule
\textsc{Full Reader} $(4,4)$ & \shortstack{+3.02\\[1.5pt]{\footnotesize 25.4\,/\,\textcolor{black!65}{25.1}}} & \shortstack{+3.44\\[1.5pt]{\footnotesize 25.2\,/\,\textcolor{black!65}{25.1}}} & \shortstack{+2.17\\[1.5pt]{\footnotesize 25.3\,/\,\textcolor{black!65}{25.1}}} \\[2pt]
\shortstack[l]{KIVI (2-bit K/V)\\[1pt]{\footnotesize\hypersetup{citecolor=black!60}\citealp{liu2024kivi}}} & \shortstack{+2.82\\[1.5pt]{\footnotesize 26.2\,/\,\textcolor{black!65}{23.8}}} & \shortstack{+3.54\\[1.5pt]{\footnotesize 26.2\,/\,\textcolor{black!65}{23.8}}} & \shortstack{+2.11\\[1.5pt]{\footnotesize 26.2\,/\,\textcolor{black!65}{23.8}}} \\[2pt]
\shortstack[l]{SparQ $r32/k128$\\[1pt]{\footnotesize\hypersetup{citecolor=black!60}\citealp{ribar2024sparq}}} & \shortstack{+92.44\\[1.5pt]{\footnotesize 24.7\,/\,\textcolor{black!65}{100.0}}} & \textcolor{tradeaccent}{\shortstack{+0.86\\[1.5pt]{\footnotesize 24.7\,/\,100.0}}} & \shortstack{+53.95\\[1.5pt]{\footnotesize 24.7\,/\,\textcolor{black!65}{100.0}}} \\[2pt]
\addlinespace[3pt]
\ReadKV{} $(4,2)$ & \shortstack{+12.13\\[1.5pt]{\footnotesize 12.9\,/\,\textcolor{black!65}{25.1}}} & \shortstack{+9.74\\[1.5pt]{\footnotesize 12.7\,/\,\textcolor{black!65}{25.1}}} & \textcolor{tradeaccent}{\shortstack{+11.00\\[1.5pt]{\footnotesize 12.8\,/\,25.1}}} \\[2pt]
\ReadKV{} $(8,2)$ & \textcolor{tradeaccent}{\shortstack{+10.16\\[1.5pt]{\footnotesize 12.9\,/\,50.1}}} & \textcolor{tradeaccent}{\shortstack{+6.27\\[1.5pt]{\footnotesize 12.7\,/\,50.1}}} & \shortstack{+11.24\\[1.5pt]{\footnotesize 12.8\,/\,\textcolor{black!65}{50.1}}} \\[2pt]
\ReadKV{} $(8,4)$ & \textcolor{tradeaccent}{\shortstack{+0.20\\[1.5pt]{\footnotesize 25.4\,/\,50.1}}} & \textcolor{tradeaccent}{\shortstack{+0.66\\[1.5pt]{\footnotesize 25.2\,/\,50.1}}} & \textcolor{tradeaccent}{\shortstack{+0.49\\[1.5pt]{\footnotesize 25.3\,/\,50.1}}} \\[2pt]
\shortstack[l]{TurboQuant K4/V4$^{\dagger}$\\[1pt]{\footnotesize\hypersetup{citecolor=black!60}\citealp{zandieh2026turboquant}}} & \shortstack{+0.99\\[1.5pt]{\footnotesize 35.4\,/\,\textcolor{black!65}{35.4}}} & \shortstack{+0.45\\[1.5pt]{\footnotesize 36.0\,/\,\textcolor{black!65}{36.0}}} & \shortstack{+0.45\\[1.5pt]{\footnotesize 35.4\,/\,\textcolor{black!65}{35.4}}} \\[2pt]
\ReadKV{} $(8,4)^{\dagger}$ & \shortstack{+0.22\\[1.5pt]{\footnotesize 34.7\,/\,\textcolor{black!65}{56.3}}} & \textcolor{tradeaccent}{\shortstack{+0.11\\[1.5pt]{\footnotesize 35.2\,/\,56.8}}} & \textcolor{tradeaccent}{\shortstack{+0.11\\[1.5pt]{\footnotesize 34.6\,/\,56.3}}} \\[2pt]
\bottomrule
\end{tabularx}
\end{table}

With the first and last two layers uncompressed, \ReadKV{} $(8,4)^\dagger$
achieves lower perplexity and slightly fewer logical reads than TurboQuant
K4/V4 on all six models, while retaining more KV storage.
Figure~\ref{fig:competitor-comparison}(a) displays this comparison with the
same layer exemptions. Without exemptions, \ReadKV{} $(8,4)$ stays within
0.66\% of \textsc{Dense}; it improves perplexity over the evaluated SparQ
configuration on all six models and retains roughly half the capacity, with a
slightly higher read rate.
It also improves on the \textsc{Full Reader}'s 1.34--3.44\% PPL increase at
the same four-bit payload-read budget, illustrating the value of retained
refinements.

The $R=2$ settings reduce reads further but are less consistent across
families: $(8,2)$ stays within 3.9\% PPL increase on Qwen, versus
6.27--11.24\% on Yi, DeepSeek, and Mistral. The smaller $(4,2)$ cache
retains the \textsc{Full Reader}'s quarter-capacity footprint with about
49\% fewer logical reads. Query-dependent allocation also matters: in a
paired Qwen3B control with the same code and K/V split, calibration-only key
allocation raises PPL from 11.789 to 12.011; 31 of 32 documents favor
current-query allocation (Appendix~\ref{app:placement}).
Appendices~\ref{app:absolute_quality}, \ref{app:depth_support},
and~\ref{app:surrogate_experiment} give additional baselines, intermediate
prefix-depth controls, and the surrogate/error comparison.

\noindent\textbf{LongBench Question Answering.}
We evaluate Qwen2.5-7B-Instruct and Mistral-7B-Instruct-v0.3 on 100 questions
per dataset from Qasper, HotpotQA, and 2WikiMultihopQA, using the same
identifiers at 3,500- and 7,500-token prompt limits. All methods use
\textsc{Dense} prompt prefill, which selects the first answer token;
compression affects generation from the second token.

\begin{table}[!ht]
\centering
\caption{LongBench QA at a 7,500-token prompt limit; 100 questions per condition. \ReadKV{} $(8,2)$ loses at most 1.46 F1 points across the six conditions while using about 13\% of \textsc{Dense} logical reads. $\dagger$: first/last two layers dense. Color marks the observed quality--read frontier: at shown precision, no other method row in the column improves loss or reads without worsening the other. Reads recur per query; retained capacity is shown but not ranked.}
\label{tab:polished-qa}
\label{tab:unified-qa-qasper-7500}
\label{tab:unified-qa-hotpotqa-7500}
\label{tab:unified-qa-2wikimqa-7500}
\providecolor{tradeaccent}{HTML}{B34E28}
\small
\setlength{\tabcolsep}{5pt}
\renewcommand{\arraystretch}{1.10}
\begin{tabularx}{\linewidth}{@{}>{\raggedright\arraybackslash}X*{3}{>{\centering\arraybackslash}p{.19\linewidth}}@{}}
\toprule
\multicolumn{4}{@{}l}{\footnotesize Each method cell reports $\Delta\mathrm{F1}$ (points) on top and logical reads / retained KV capacity (\% of \textsc{Dense}) below.}\\[4pt]
\multicolumn{4}{@{}l}{\textit{Panel A: Qwen2.5-7B-Instruct}}\\[2pt]
Method $(W,R)$ & \shortstack{Qasper} & \shortstack{HotpotQA} & \shortstack{2Wiki\\MultihopQA} \\
{\footnotesize\textcolor{black!65}{\textsc{Dense} F1}} & {\footnotesize\textcolor{black!65}{41.13}} & {\footnotesize\textcolor{black!65}{48.34}} & {\footnotesize\textcolor{black!65}{45.30}} \\[2pt]
\midrule
\shortstack[l]{KIVI (2-bit K/V)\\[1pt]{\footnotesize\hypersetup{citecolor=black!60}\citealp{liu2024kivi}}} & \shortstack{+1.56\\[1.5pt]{\footnotesize 20.4\,/\,\textcolor{black!65}{19.9}}} & \shortstack{+0.08\\[1.5pt]{\footnotesize 20.0\,/\,\textcolor{black!65}{19.9}}} & \shortstack{+2.41\\[1.5pt]{\footnotesize 20.1\,/\,\textcolor{black!65}{19.9}}} \\[2pt]
\shortstack[l]{TurboQuant K4/V4$^{\dagger}$\\[1pt]{\footnotesize\hypersetup{citecolor=black!60}\citealp{zandieh2026turboquant}}} & \textcolor{tradeaccent}{\shortstack{-0.06\\[1.5pt]{\footnotesize 36.7\,/\,36.7}}} & \shortstack{+0.42\\[1.5pt]{\footnotesize 36.7\,/\,\textcolor{black!65}{36.7}}} & \textcolor{tradeaccent}{\shortstack{-0.55\\[1.5pt]{\footnotesize 36.7\,/\,36.7}}} \\[2pt]
\shortstack[l]{SparQ $r32/k128$\\[1pt]{\footnotesize\hypersetup{citecolor=black!60}\citealp{ribar2024sparq}}} & \textcolor{tradeaccent}{\shortstack{+1.40\\[1.5pt]{\footnotesize 15.3\,/\,100.0}}} & \shortstack{-0.44\\[1.5pt]{\footnotesize 14.2\,/\,\textcolor{black!65}{100.0}}} & \textcolor{tradeaccent}{\shortstack{+0.69\\[1.5pt]{\footnotesize 14.5\,/\,100.0}}} \\[2pt]
\addlinespace[3pt]
\ReadKV{} $(4,2)$ & \shortstack{+1.68\\[1.5pt]{\footnotesize 12.6\,/\,\textcolor{black!65}{25.0}}} & \textcolor{tradeaccent}{\shortstack{-0.60\\[1.5pt]{\footnotesize 12.6\,/\,25.0}}} & \shortstack{+5.43\\[1.5pt]{\footnotesize 12.6\,/\,\textcolor{black!65}{25.0}}} \\[2pt]
\ReadKV{} $(8,2)$ & \textcolor{tradeaccent}{\shortstack{+1.46\\[1.5pt]{\footnotesize 12.6\,/\,50.0}}} & \shortstack{-0.35\\[1.5pt]{\footnotesize 12.6\,/\,\textcolor{black!65}{50.0}}} & \textcolor{tradeaccent}{\shortstack{+1.09\\[1.5pt]{\footnotesize 12.6\,/\,50.0}}} \\[2pt]
\ReadKV{} $(8,4)$ & \textcolor{tradeaccent}{\shortstack{+0.98\\[1.5pt]{\footnotesize 25.1\,/\,50.0}}} & \shortstack{+0.25\\[1.5pt]{\footnotesize 25.1\,/\,\textcolor{black!65}{50.0}}} & \textcolor{tradeaccent}{\shortstack{+0.29\\[1.5pt]{\footnotesize 25.1\,/\,50.0}}} \\[2pt]
\ReadKV{} $(8,2)^{\dagger}$ & \shortstack{+2.34\\[1.5pt]{\footnotesize 25.1\,/\,\textcolor{black!65}{57.2}}} & \shortstack{-0.12\\[1.5pt]{\footnotesize 25.0\,/\,\textcolor{black!65}{57.2}}} & \shortstack{+2.28\\[1.5pt]{\footnotesize 25.1\,/\,\textcolor{black!65}{57.2}}} \\[2pt]
\ReadKV{} $(8,4)^{\dagger}$ & \shortstack{+1.01\\[1.5pt]{\footnotesize 35.8\,/\,\textcolor{black!65}{57.2}}} & \shortstack{-0.03\\[1.5pt]{\footnotesize 35.8\,/\,\textcolor{black!65}{57.2}}} & \shortstack{+0.96\\[1.5pt]{\footnotesize 35.8\,/\,\textcolor{black!65}{57.2}}} \\[2pt]
\midrule
\multicolumn{4}{@{}l}{\textit{Panel B: Mistral-7B-Instruct-v0.3}}\\[2pt]
Method $(W,R)$ & \shortstack{Qasper} & \shortstack{HotpotQA} & \shortstack{2Wiki\\MultihopQA} \\
{\footnotesize\textcolor{black!65}{\textsc{Dense} F1}} & {\footnotesize\textcolor{black!65}{37.41}} & {\footnotesize\textcolor{black!65}{43.37}} & {\footnotesize\textcolor{black!65}{42.71}} \\[2pt]
\midrule
\shortstack[l]{KIVI (2-bit K/V)\\[1pt]{\footnotesize\hypersetup{citecolor=black!60}\citealp{liu2024kivi}}} & \shortstack{+0.49\\[1.5pt]{\footnotesize 20.3\,/\,\textcolor{black!65}{19.9}}} & \shortstack{+1.22\\[1.5pt]{\footnotesize 20.0\,/\,\textcolor{black!65}{19.9}}} & \shortstack{+1.80\\[1.5pt]{\footnotesize 20.0\,/\,\textcolor{black!65}{19.9}}} \\[2pt]
\shortstack[l]{TurboQuant K4/V4$^{\dagger}$\\[1pt]{\footnotesize\hypersetup{citecolor=black!60}\citealp{zandieh2026turboquant}}} & \textcolor{tradeaccent}{\shortstack{-0.79\\[1.5pt]{\footnotesize 35.4\,/\,35.4}}} & \shortstack{+0.01\\[1.5pt]{\footnotesize 35.4\,/\,\textcolor{black!65}{35.4}}} & \textcolor{tradeaccent}{\shortstack{-0.31\\[1.5pt]{\footnotesize 35.4\,/\,35.4}}} \\[2pt]
\shortstack[l]{SparQ $r32/k128$\\[1pt]{\footnotesize\hypersetup{citecolor=black!60}\citealp{ribar2024sparq}}} & \shortstack{+1.26\\[1.5pt]{\footnotesize 15.0\,/\,\textcolor{black!65}{100.0}}} & \shortstack{+2.65\\[1.5pt]{\footnotesize 14.3\,/\,\textcolor{black!65}{100.0}}} & \textcolor{tradeaccent}{\shortstack{+0.47\\[1.5pt]{\footnotesize 14.4\,/\,100.0}}} \\[2pt]
\addlinespace[3pt]
\ReadKV{} $(4,2)$ & \shortstack{+0.69\\[1.5pt]{\footnotesize 12.6\,/\,\textcolor{black!65}{25.0}}} & \textcolor{tradeaccent}{\shortstack{-0.42\\[1.5pt]{\footnotesize 12.5\,/\,25.0}}} & \shortstack{+1.08\\[1.5pt]{\footnotesize 12.5\,/\,\textcolor{black!65}{25.0}}} \\[2pt]
\ReadKV{} $(8,2)$ & \textcolor{tradeaccent}{\shortstack{-0.66\\[1.5pt]{\footnotesize 12.6\,/\,50.0}}} & \shortstack{-0.23\\[1.5pt]{\footnotesize 12.5\,/\,\textcolor{black!65}{50.0}}} & \textcolor{tradeaccent}{\shortstack{+0.52\\[1.5pt]{\footnotesize 12.5\,/\,50.0}}} \\[2pt]
\ReadKV{} $(8,4)$ & \shortstack{+0.98\\[1.5pt]{\footnotesize 25.1\,/\,\textcolor{black!65}{50.0}}} & \textcolor{tradeaccent}{\shortstack{-0.44\\[1.5pt]{\footnotesize 25.0\,/\,50.0}}} & \textcolor{tradeaccent}{\shortstack{+0.00\\[1.5pt]{\footnotesize 25.0\,/\,50.0}}} \\[2pt]
\ReadKV{} $(8,2)^{\dagger}$ & \shortstack{+0.58\\[1.5pt]{\footnotesize 23.5\,/\,\textcolor{black!65}{56.3}}} & \shortstack{-0.23\\[1.5pt]{\footnotesize 23.5\,/\,\textcolor{black!65}{56.3}}} & \shortstack{+0.87\\[1.5pt]{\footnotesize 23.5\,/\,\textcolor{black!65}{56.3}}} \\[2pt]
\ReadKV{} $(8,4)^{\dagger}$ & \shortstack{+0.60\\[1.5pt]{\footnotesize 34.4\,/\,\textcolor{black!65}{56.3}}} & \textcolor{tradeaccent}{\shortstack{-0.67\\[1.5pt]{\footnotesize 34.4\,/\,56.3}}} & \shortstack{+0.00\\[1.5pt]{\footnotesize 34.4\,/\,\textcolor{black!65}{56.3}}} \\[2pt]
\bottomrule
\end{tabularx}
\end{table}

At the 7,500-token prompt limit, \ReadKV{} $(8,2)$ has F1 changes between $-0.66$ and
$1.46$ points. It uses 37--38\% fewer logical reads than KIVI (2-bit K/V),
65--66\% fewer than protected TurboQuant, and 12--18\% fewer than SparQ.
Figure~\ref{fig:competitor-comparison}(b) compares it with the two
all-layer competitors: KIVI retains a smaller cache, whereas SparQ retains
twice its capacity. \ReadKV{} scores higher than KIVI in all six conditions;
against SparQ it scores higher in two conditions and loses at most 0.41 F1
points in the other four. Every condition is shown.

\begin{figure}[!t]
\centering
\includegraphics[width=\linewidth]{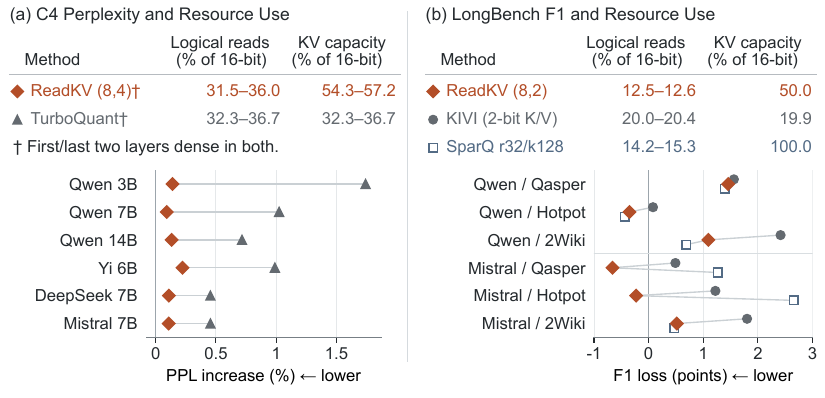}
\setlength{\abovecaptionskip}{3pt}
\caption{Comparison with representative baselines using
Tables~\ref{tab:polished-prediction} and~\ref{tab:polished-qa}.
\textbf{(a)} On C4, \ReadKV{} $(8,4)^\dagger$ has lower PPL and slightly
fewer logical reads than TurboQuant K4/V4 on all six models when the first
and last two layers remain dense, while retaining more KV capacity.
\textbf{(b)} On LongBench at a 7,500-token prompt limit, \ReadKV{} $(8,2)$
uses fewer logical reads than KIVI and SparQ. It has higher F1 than KIVI in
all six conditions and mixed results against SparQ. Resource values are
percentages of the 16-bit cache; lower quality loss is better.}
\label{fig:competitor-comparison}
\end{figure}

Table~\ref{tab:polished-qa} also shows how the read and storage budgets
affect quality. The all-layer $(8,4)$ setting has a largest loss of 0.98 F1
points. The quarter-capacity $(4,2)$ setting stays within two points in
five conditions, but loses 5.43 points on Qwen 2WikiMultihopQA; its Qwen
hardware and calibration cohort differs from $(8,2)$
(Appendix~\ref{app:quality_protocol}). Across both prompt limits, all-layer
$(8,2)$ and $(8,4)$ losses remain below 1.78 and 1.38 points, respectively.
The protected QA rows do not show a uniform quality advantage over
TurboQuant. Appendix~\ref{app:absolute_quality} gives the paired
3,500-token results and protected-row comparisons, including their
cross-cohort qualification.

\noindent\textbf{Passkey Retrieval.}
Appendix~\ref{app:retrieval} reports this long-context retention check;
several methods recover every tested key, so it supplements the C4 and QA
comparisons.

\subsection{GPU Evaluation}
\label{sec:gpu_eval}
To test whether lower logical reads reduce device cost, the timed reader uses
the same progressive code with hardware-oriented constraints: per-head
budgets, equal key and value rates, minimum prefix depth one, and two-token
value packets. Appendix~\ref{app:policy_quality} reports their quality effect
and defines the full device policy.

Table~\ref{tab:gpu-summary} summarizes the principal 8K-token comparison.

\begin{table}[!ht]
\centering\small\setlength{\tabcolsep}{6pt}
\caption{Single-layer decoding on an NVIDIA A10G with 8,192 cached tokens
and batch size one. Retained KV payload excludes temporary workspace.
Latency is the median CUDA Graph replay time. DRAM columns report
warm-cache / cache-evicted MiB from separately profiled eager calls and
include the complete attention operation.}
\label{tab:gpu-summary}
\begin{tabular}{lrrrr}
\toprule
Method & \shortstack{KV payload\\(MiB)} & \shortstack{Latency\\(ms)} &
\shortstack{DRAM reads\\(MiB)} & \shortstack{Total traffic\\(MiB)} \\
\midrule
\textsc{Dense} & 16.00 & 0.051 & 17.73 / 17.86 & 20.85 / 21.06 \\
TurboQuant K4/V4 \citep{zandieh2026turboquant} & 4.19 & 0.154 & 7.28 / 7.92 & 9.38 / 10.84 \\
\ReadKV{} $(8,2)$ & 8.00 & 0.093 & 2.80 / 2.74 & 7.47 / 7.47 \\
\bottomrule
\end{tabular}
\end{table}

On an NVIDIA A10G with 8,192 cached tokens and batch size one, the optimized
\ReadKV{} $(8,2)$ kernel takes 0.093\,ms under graph replay versus 0.154\,ms
for the TurboQuant codec and 0.051\,ms for \textsc{Dense}. Under the same
launcher and timing boundary, this is 39\% lower latency than the tested
quantized codec, while \textsc{Dense} is 45\% lower than \ReadKV{}. Across
warm and cache-evicted conditions, profiling shows 62--65\% fewer DRAM read
bytes and 20--31\% less total traffic than TurboQuant. The DRAM reads are
also smaller than the retained payload: in the warm condition, \ReadKV{}
$(8,2)$ reads 2.80\,MiB against an 8.00\,MiB payload, whereas TurboQuant and
\textsc{Dense} read 7.28 and 17.73\,MiB against 4.19 and 16.00\,MiB, more
than their entire caches. These single-layer measurements start from an existing cache and isolate
decode-time codec costs. Appendix~\ref{app:gpu_measurements} gives the full
timing grid, measurement boundaries, and retained-cache, workspace, and
physical-traffic accounting.

\section{Related Work}
\label{sec:related_work}

Fixed-rate KV-cache quantizers reduce the representation retained for every
future query. KIVI uses different grouping choices for keys and values,
KVQuant combines calibrated nonuniform quantization with outlier handling,
and TurboQuant uses rotations and vector-aware scalar quantization
\citep{liu2024kivi,hooper2024kvquant,zandieh2026turboquant}. Other methods
vary the precision resident in the cache using attention history or
quantization sensitivity. This includes QAQ, MiKV, ZipCache, PM-KVQ, MixKVQ,
and RDKV \citep{dong2024qaq,yang2024mikv,zhang2024zipcache,liu2026pmkvq,
zhang2026mixkvq,zhang2026rdkv}. These methods change what is retained;
\ReadKV{} keeps one record unchanged and varies what each query reads.

Query-adaptive retrieval instead chooses cache content using the current
query. SparQ selects key features before retrieving value tokens, Quest ranks
cache pages, and Loki uses low-dimensional key information to select tokens
\citep{ribar2024sparq,tang2024quest,singhania2024loki}. ThinK also uses query
information, but prunes the retained key cache after an observation window
\citep{xu2025think}. \ReadKV{} differs by assigning several possible
reconstruction depths to every stored scalar rather than making only a
feature, token, or page inclusion decision.

Progressive descriptions and successive refinement provide the coding
background for a representation that remains decodable at several rates
\citep{equitz1991successive,rimoldi1994successive}. Any-Precision LLM and
MatQuant apply related nested or sliced representations to model weights
\citep{park2024anyprecision,nair2025matquant}. The closest systems precedent
for our access pattern is \citet{xie2025reimagining}, who retain several KV
precision tiers and selectively retrieve floating-point bit planes for
ranked pages. Their work establishes that stored and fetched KV precision
can differ. \ReadKV{} uses a nested nonuniform scalar code with every prefix
depth available and allocates an explicit per-query read budget first across
key channels and then across value tokens using reconstructed attention.

The abstract model also connects to static data structures, locally
decodable source coding, and approximate inner-product sketches
\citep{dewolf2008structures,makhdoumi2013local,alon2016compression}. The
marginal-gain allocator and its continuous relaxation use classical discrete
allocation and reverse water-filling tools
\citep{fox1966discrete,shoham1988efficient,segall1976allocation,
cover2006elements}. Our contribution is not these optimization principles in
isolation, but their use in a causal partial-read KV protocol together with
finite separation and attention-error guarantees. Appendix~\ref{app:related}
gives a more detailed method-by-method comparison.

\section{Discussion}
\label{sec:discussion}
KV-cache entries are written once but read repeatedly under changing queries.
This asymmetry motivates separating retained from fetched precision: a cache
can preserve detail for future queries without moving all of it on every
decode step. \ReadKV{} implements this idea with a progressive code and a
causal two-stage reader. The query allocates key precision, and reconstructed
attention allocates value precision. We prove finite settings where this
access strictly outperforms query-independent reading, characterize the
stagewise allocator, and relate its objectives to attention-output error.
Across six base models, \ReadKV{} $(W=8,R=4)$ remains within $0.66\%$ of
uncompressed C4 perplexity and improves over fully reading a four-bit cache at
the same payload-read budget. Question answering and retrieval provide further
evidence of useful quality--read tradeoffs.

The theoretical separations concern explicit input families, and the
attention certificate depends on calibration and residual-correlation factors.
Finally, while the NVIDIA A10G study shows lower
single-layer latency and DRAM traffic than the tested fused quantized codec,
it does not yet establish an end-to-end serving advantage. Fusing allocation,
reconstruction, and attention, while supporting more flexible K/V budgets, is
the main path toward closing this gap. Retaining more than is read pays off
where the cache, not the weights, dominates the bytes moved per step: long
contexts and large batches, and possibly tiered memory that keeps the less
frequently fetched refinement levels in slower, larger memory.
Appendix~\ref{app:throughput_model} gives a first-order model of this regime;
complete-model evaluation under long contexts and memory-limited batching is
its direct test.

\section*{AI Use Statement}
Generative AI tools (ChatGPT/Codex) assisted with theoretical analysis and
proof checking, experiment design and implementation, result interpretation,
literature searches, and manuscript preparation. This included resource
accounting, table generation and consistency checks. Automated checks
compared reported summaries with saved experimental records. The authors
are responsible for the final text, claims, code and other artifacts.

\label{page:references}
\bibliography{references}
\bibliographystyle{plainnat}
\appendix

\section*{Appendix Organization}
The appendices provide definitions, proofs, reproduction details, and
additional experiments. The table maps each main technical topic to its
supporting material.

\noindent\begin{tabularx}{\linewidth}{@{}X l@{}}
\toprule
Topic & Location \\
\midrule
Storage and read accounting & Appendix~\ref{app:traffic} \\
Cache encoding and reading & Algorithm~\ref{alg:readkv} and Appendix~\ref{app:codec} \\
Proofs of query-dependent separations & Appendix~\ref{app:readkv_separation} \\
Integer prefix-allocation optimality & Appendix~\ref{app:allocation} \\
Proof of the allocation and attention-error certificate & Appendix~\ref{app:allocation_attention_guarantee} \\
Quality evaluation and mechanism controls & Appendix~\ref{app:quality} \\
Additional C4 controls and shorter-context QA & Tables~\ref{tab:polished-prediction-grid} and~\ref{tab:polished-qa-short} \\
GPU timing and traffic measurements & Appendix~\ref{app:gpu_measurements} \\
\bottomrule
\end{tabularx}
\medskip

\section{Storage and Traffic Accounting}
\label{app:traffic}

A cache entry occupies capacity while resident, incurs a write on insertion,
and incurs reads when reused. Reuse can amortize the insertion cost, but
does not reduce the retained capacity. We account for these costs separately.

\subsection{Task-Table Cost Accounting}
The quality tables keep the algorithm settings $W,R$ in the method name.
Their resource fields instead report percentages of \textsc{Dense}, the
matched uncompressed 16-bit KV representation, including each method's
required auxiliary information and any full-precision portions.
Figure~\ref{fig:competitor-comparison} uses these same resource quantities. Weight
precision and attention arithmetic are separate protocol choices; an FP32
attention reference does not imply a 32-bit accounted cache.

At a declared cache state $t$, let $C_t$ be the method's accounted
persistent KV-state bytes and $C_t^0$ those of \textsc{Dense}. Then
\[
 \mathrm{Cache}\%(t)=100\frac{C_t}{C_t^0}.
\]
The numerator includes required retained supporting information and copies;
any device-only subtotal is identified separately from host state.
It excludes model weights and temporary workspace: ``modeled cache'' means
the persistent KV representation, not a model's total allocated memory.
For matched calls, let $I_t$ and $I_t^0$ be the logical cache information
requested, in bits. Reading cost is
\[
 \mathrm{KV\ reads}\%=100\frac{\sum_t I_t}{\sum_t I_t^0}.
\]
Logical reads count payload, probes, rereads and fetched supporting state;
the denominator covers all original KV scalars that \textsc{Dense} would
inspect at those states, including entries skipped by a selective method.
A physical KV head is counted once in the underlying cache even when several
query heads share it.

The task tables report capacity at a fixed reference state: 2,048 cached
tokens for prediction, 3,500 or 7,500 for the corresponding QA panel, and
4,096 or 8,192 for passkey retrieval. The QA states are the prompt caps,
not claims that every example has that length. This choice keeps cache
capacity independent of answer length. Endpoint and trace-weighted occupancy
views remain in the machine-readable accounting record; neither is a
cumulative write count.

Logical reads use each method's recorded attention calls and normalize
against \textsc{Dense} at those same states. Under the dense-prefill
generation protocol, a $g$-token answer after an $n_0$-token prompt causes
$g-1$ compressed calls
at cache lengths $n_0+1,\ldots,n_0+g-1$: \textsc{Dense} prefill already
chose the first answer token, and the final emitted token is not forwarded.
These percentages compare information per recorded attention work; differing
stopping lengths do not establish equal-work total byte savings.

Token-associated payload and side information are supplemented by an
explicit conservative allowance for shared decoder data. Each layer call
is charged one load of that layer's calibration arrays and the retained
decoder table needed by the method. Tables shared across layers are counted
once in persistent capacity but once per compressed-layer call in this
read allowance. The accounting record separates these components. This
full-load convention avoids assuming free hardware residency; it is an
analytic allowance, not a measurement of device traffic or instruction-level
loads. It is applied to the competing codecs as well as to ReadKV.

Per-token or per-layer metadata, residual buffers and protected layers
are charged at their declared scope. Layout duplication is reported for
the corresponding representation; packet-tail padding is included in
logical reads where it is exported in the read counters. Shared
calibration and codebooks are charged once per retained instance. A quality
simulation's analytical count is not a measurement of GPU allocation or
physical traffic. Hardware transaction rounding belongs to physical
traffic, whereas complete packets requested by the algorithm belong to its
logical count.

Neither task-table field includes temporary selection, sorting,
reconstruction or attention buffers, even when scratch space is preallocated
and reused. Their simultaneously live storage belongs in peak workspace;
their accesses belong in physical traffic when they cross the measured
memory boundary. Allocation schedules are produced by the reader rather
than fetched from the retained cache, so they receive the same treatment.
The task-table costs include fetched calibration and decoder data but exclude
generated schedules. Including workspace accesses is necessary for a
device-traffic or latency claim, but adding them to retained KV reads would
change the quantity being compared.
Actual peak memory must count simultaneously live KV state, workspace,
weights and other allocations, rather than sum unrelated individual peaks.

\subsection{Payload-Only Cache-Reuse Model}

The following model explains the write/read example in the introduction.
It counts an idealized sequence of decode calls; the preceding subsection
gives the fuller accounting and dense-prefill boundary used by the task
tables.

We count logical payload bits for one cached key or value scalar. The
whole-cache count is obtained by multiplying by the number of key and value
scalars stored per token. A physical KV head is counted once even when several
query heads share it. Unlike the task-table accounting above, this model
omits metadata and implementation overhead to isolate the cost of reuse.

Let $N\geq0$ be the prompt length before decoding begins, and let $L\geq1$ be
the number of decode steps. At step $t\in\{0,\ldots,L-1\}$, the historical
cache contains $N+t$ tokens. We charge one insertion for each prompt token and
each generated token. This includes the token produced at the final decode
step, even though it is not read again within the accounting horizon. The read
count includes only accesses to historical cache entries during decoding. It
excludes prefill attention, current-token self-attention, metadata, and other
memory operations.

The total number of historical scalar reads is
\[
 S(N,L)=\sum_{t=0}^{L-1}(N+t)=LN+\frac{L(L-1)}2.
\]
For a fixed $W_0$-bit codec and a progressive codec that stores $W$ bits and
reads $R$ bits, the corresponding logical payload traffic is
\begin{align*}
 T_{W_0}(N,L)&=W_0(N+L)+W_0 S(N,L),\\
 T_{W,R}(N,L)&=W(N+L)+R S(N,L).
\end{align*}
The first term in each expression is write traffic, and the second is read
traffic. If keys and values use different depths, the formula is applied to
each and the two counts are added. If an implementation omits the final
insertion, $N+L$ is replaced by $N+L-1$ in both write terms.

\subsection{Equal Read Budgets}
Suppose $R=W_0<W$. The two codecs then fetch exactly the same number of
historical payload bits. The progressive codec additionally writes
$(W-W_0)(N+L)$ bits as entries are inserted. It cannot reduce total
logical traffic in this equal-read comparison, but the extra write traffic
may be small relative to the reads accumulated over the sequence. When
$S(N,L)>0$, that fraction is
\[
 \eta(N,L)=\frac{(W-W_0)(N+L)}
 {W_0[ LN+L(L-1)/2]}.
\]
For $W=8$, $W_0=R=4$, $N=8192$, and $L=128$, we obtain
$\eta(N,L)=0.0078735$. Thus, the added write traffic is approximately
$0.79\%$ of the common payload-read traffic. More generally, the extra-write
fraction is at most $\varepsilon>0$ exactly when
\[
 (W-W_0)(N+L)\leq \varepsilon W_0[ LN+L(L-1)/2].
\]

For $N\gg L$, this fraction approaches $(W-W_0)/(W_0L)$. If the sequence
starts from an empty cache, so that $N=0$ and $L>1$, it equals
$2(W-W_0)/[W_0(L-1)]$. These expressions show that repeated reuse, rather
than prompt length by itself, amortizes the additional writes. For example,
when $W=8$ and $W_0=4$, 201 decode steps make the fraction at most one percent
for every $N\geq0$. This statement is uniform over the prompt length. For
fixed $L$, the fraction decreases with $N$ because the derivative of
$(N+L)/[LN+L(L-1)/2]$ has numerator $-L(L+1)/2$. Its largest value therefore
occurs at $N=0$, where $2/(L-1)\leq0.01$ exactly when $L\geq201$.

\subsection{Reduced Read Budgets}
Suppose $W>W_0$ and $R<W_0$. The savings from repeated reads can then offset
the additional write traffic. Let $a=W-W_0>0$ be the extra stored depth and
$\delta=W_0-R>0$ the reduction in read depth. The progressive codec saves
$\delta S(N,L)$ read bits and writes $a(N+L)$ additional bits. Its total
logical payload traffic is therefore smaller exactly when
\[
 \delta\left[LN+\frac{L(L-1)}2\right]>a(N+L).
\]
To express the crossover in terms of the decode length, define
$c=\delta(N-\tfrac12)-a$. The read saving minus the additional write cost is
$\frac{\delta}{2}L^2+cL-aN$. This quadratic has a positive leading
coefficient and is nonpositive at $L=0$. Hence, for $L>0$, the progressive
codec has lower traffic exactly above the nonnegative root
\[
 L>\frac{-c+\sqrt{c^2+2\delta aN}}{\delta}.
\]
For a sequence grown from an empty cache, this condition simplifies to
$L>1+2(W-W_0)/(W_0-R)$. The derivation sums the actual reuse count of every
entry and does not assume that all entries have the same age.

\subsection{Equal Storage}
When $W=W_0$, the two codecs have the same payload capacity and write traffic.
Reducing $R$ then lowers only the repeated-read term. This comparison isolates
the quality obtained from query-dependent access at fixed storage. Whether
the resulting approximation is acceptable must be determined from the
downstream task; the allocation theorem does not claim that a partial prefix
reproduces the full-depth reconstruction.

\subsection{Capacity, Bandwidth, and Batching}
\label{app:throughput_model}
The two budgets enter decoding throughput differently. Consider a device
with $M$ bytes of memory available for KV caches after the weights, serving
requests whose uncompressed cache occupies $c_{\mathrm{kv}}$ bytes each. A
method that retains a fraction $s$ of the dense bytes and fetches a fraction
$r$ per step fits $U=M/(s\,c_{\mathrm{kv}})$ concurrent requests, and one
decoding step moves the weights $B_w$ once plus the fetched caches,
$B_w+U r c_{\mathrm{kv}}$ bytes in total. If the step is bound by memory
traffic, throughput in tokens per unit time is proportional to
\[
 \frac{U}{B_w+U r c_{\mathrm{kv}}}
 =\frac{1}{c_{\mathrm{kv}}\,(r+s\,B_w/M)}.
\]
Fetched bits cost throughput directly; retained bits cost it through the
ratio $B_w/M$ of weight bytes to cache memory, which decides how heavily
retained capacity weighs against fetched bits. A fixed-rate code fetches
what it retains, so its bandwidth demand per retained byte is about one.
\ReadKV{} has $r<s$: when caches rather than weights fill the memory, so
that $B_w/M$ is small, fetching less than is retained is favored. Long
contexts and large batches are the regime in which the cache, rather than
the weights, dominates the bytes moved per step.

As an illustration, take Qwen2.5-7B with 16-bit weights
($B_w\approx15$\,GB; $c_{\mathrm{kv}}\approx0.47$\,GB at 8,192 tokens) on a
device with $M=60$\,GB of cache memory, so $B_w/M=0.25$. With the
whole-model logical reads and capacities of
Table~\ref{tab:polished-prediction}, the model's throughput relative to
\textsc{Dense}, $1.25/(r+0.25\,s)$, is 2.7 for TurboQuant K4/V4 with its
protected layers ($r=s=0.367$; 0.45--1.74\% PPL increase), 3.3 for
all-layer \ReadKV{} $(8,4)$ ($r=0.254$, $s=0.501$; within 0.66\% of
\textsc{Dense}), 3.9 for KIVI (2-bit K/V) ($r=0.262$, $s=0.238$;
2.11--4.84\% PPL increase), and 4.9 for all-layer \ReadKV{} $(8,2)$
($r=0.129$, $s=0.501$).

These are model outputs, not measurements. The model ignores attention
arithmetic, launch and scheduling latency, and workspace traffic, and it
uses logical rather than physical bytes; the single-layer
kernel of Section~\ref{sec:gpu_eval} is not yet bound by memory traffic.
A complete-model evaluation under memory-limited batching is the direct
test of this model.

\subsection{Capacity and Physical Traffic}
Amortizing write traffic does not reduce the resident cache size. At the end
of the sequence, the ratio of payload capacities is $W/W_0$, independently
of $L$. A cache with $W=8$ therefore occupies twice the payload memory of a
four-bit cache. This larger footprint can limit context length or the number
of concurrent requests, so capacity must be reported separately from traffic.

Logical payload counts also differ from physical device traffic. Actual byte
movement includes packet rounding, norms, scales, residual corrections,
descriptors, sorting buffers, and intermediate attention arrays. Cache hits
may reduce DRAM traffic, whereas irregular accesses and additional passes may
increase it. Neither the equations above nor the allocation objective alone
determines device latency.

The two principal comparisons should therefore be interpreted differently.
For \ReadKV{} $(8,4)$, a fixed four-bit codec reads the same logical payload,
while \ReadKV{} retains twice as many payload bits. For \ReadKV{} $(4,2)$,
both methods retain four bits, but \ReadKV{} halves the logical read budget.
We report storage capacity and logical payload traffic separately.
The systems evaluation also measures physical traffic and the implementation
overheads omitted by this scalar model.

\section{Related KV-Cache Methods}
\label{app:related}

Prior work reaches our setting through several lines of research. Progressive
coding studies representations that remain decodable at several precisions.
KV-cache quantization methods vary the precision retained for different cache
entries, while query-adaptive retrieval methods vary what is fetched for the
current query. These choices are related but not equivalent. Changing the
stored cache affects every later query, whereas selective reading leaves
unread information available for future use. The following sections place
\ReadKV{} within each line of work.

\subsection{Nested Codes}
Progressive descriptions are classical in source coding
\citep{equitz1991successive,rimoldi1994successive}. Successive-refinement
theory asks when one nested representation can match the rate--distortion
performance of separate codes designed for each rate. A scalar tree ensures
that every prefix is decodable, but it does not by itself establish this
stronger optimality property. Requiring a common tree can also increase
distortion relative to fitting an independent codebook at each depth
\citep{gray1998quantization}.

Any-Precision LLM is a close representational precursor for neural-network
compression. It recursively splits nonuniform weight centroids and loads only
the requested bit planes \citep{park2024anyprecision}. MatQuant instead trains
one weight representation to remain useful under several precision slices
\citep{nair2025matquant}. Both demonstrate that a single stored code can
support multiple reconstruction depths. \ReadKV{} uses this property for KV
entries and allocates a finite query-time read budget across key channels and
value tokens.

More abstractly, the separation between preprocessing and later probes
connects our problem to static data structures and locally decodable source
coding \citep{dewolf2008structures,makhdoumi2013local}. Approximate
inner-product sketches instead bound the representation size needed to
support geometric queries \citep{alon2016compression}. Our formulation
jointly constrains the stored real-vector representation and the portion
inspected for each future query. We focus on the concrete
progressive KV-cache representation, its causal key-then-value reader, and
finite guarantees for that design.

\subsection{Attention-Guided Precision Allocation}
Several KV-cache methods use attention history to decide what precision to
retain. QAQ, MiKV, and ZipCache all allocate precision using attention-based
importance \citep{dong2024qaq,yang2024mikv,zhang2024zipcache}. These decisions
need not remain fixed after prefill. QAQ updates precision from recent
attention windows, while ZipCache periodically recompresses the cache during
decoding. QAQ also retains an unquantized CPU copy. When a token later needs
greater precision, that copy is transferred back and requantized
\citep[Section~4.2]{dong2024qaq}.

A separate group also adapts the precision retained in the cache. PM-KVQ
starts at higher precision and reduces stored precision as memory fills. It
uses an integer program to allocate precision across transformer blocks
\citep{liu2026pmkvq}. MixKVQ combines query importance with quantization
sensitivity to assign key-channel precision. Its lazy update rule uses
accumulated query information when quantizing blocks outside a recent-token
buffer \citep{zhang2026mixkvq}. RDKV computes attention-sensitive weights
after prefill and applies reverse water-filling to allocate stored bits,
including zero-bit eviction \citep{zhang2026rdkv}.

These methods adapt the precision resident in the cache. \ReadKV{} instead
keeps one progressive code unchanged and adapts only what is read from it for
each query. Additional precision is obtained by reading a longer prefix,
rather than by recovering and requantizing a separate copy. The current query
determines the key-prefix depths, and attention computed from the
reconstructed keys then determines the value-prefix depths. The
calibration-only reader and the fixed-split key-stage control in
Appendix~\ref{app:placement} test the benefit of this current-query
information over importance fixed before evaluation.

\subsection{Query-Adaptive Cache Access}
Other methods adapt the access pattern to the current query. SparQ first uses
query-dependent feature selection to approximate attention, then retrieves
the most important value tokens \citep{ribar2024sparq}. Quest ranks cache
pages from the current query and compact page summaries
\citep{tang2024quest}. Loki uses low-dimensional key information to select
tokens \citep{singhania2024loki}. These methods establish that the data
retained in the cache need not coincide with the data fetched for one query.
ThinK is related but changes the retained cache: it uses an observation window
of queries to select key channels and keeps the resulting pruned
representation \citep{xu2025think}.

\citet{xie2025reimagining} provide the closest precedent for
precision-selective KV access. They evaluate several precision tiers for
ranked KV pages and co-design retrieval of floating-point bit planes with a
compression-aware memory controller. Their work already contains the
principle of retaining more precision than a query fetches. \ReadKV{} uses a
nested nonuniform scalar code with every prefix depth available to the
reader. Under diminishing refinement gains, it exactly optimizes its stated
separable objective within an explicit read budget, first across key channels
using the current query and then across value tokens using reconstructed
attention. The formulation in Section~\ref{sec:general_problem} separates
storage, read budget, and access stages as constraints on one finite record.
Our separation results address a further question: whether query-dependent
addressing can outperform query-independent readers under a common read
budget, even when the competing encoders and decoders are unrestricted.

\subsection{Fixed-Rate Quantization}
Fixed-rate quantizers provide the main compact-cache baselines. KIVI uses
per-channel quantization for keys and per-token quantization for values,
together with a recent full-precision residual buffer \citep{liu2024kivi}.
KVQuant combines calibrated nonuniform levels with per-channel and pre-RoPE
key quantization, as well as sparse outlier treatment
\citep{hooper2024kvquant}. A generic uniform quantizer does not reproduce
either method, and their auxiliary state must be included in storage and read
accounting.

TurboQuant develops a rotation-based scalar quantizer designed for vector
error. A second variant adds a one-bit randomized sketch of the quantization
residual, called QJL, which yields an unbiased inner-product estimator
\citep{zandieh2026turboquant}. Comparisons must therefore identify the
TurboQuant variant and count its auxiliary data. Our direct Lloyd control
includes norm correction but does not reproduce either complete TurboQuant
algorithm.

These baselines support two complementary comparisons. Matching the read
budget tests whether retaining additional refinements improves quality per
fetched bit. Matching storage tests whether query-dependent reading helps at
the same cache capacity. Neither comparison subsumes the other.

\subsection{Bit-Allocation Methods}
The optimizer used by \ReadKV{} also has classical foundations. Selecting the
largest available marginal gain under diminishing returns is a standard
discrete allocation rule \citep{fox1966discrete,shoham1988efficient}.
Continuous bit allocation and reverse water-filling have a similarly long
history \citep{segall1976allocation,cover2006elements}.

We use these tools to obtain an executable schedule that respects the finite
prefix depth of the stored code. The problem-specific content lies in what is
allocated, namely key channels and value tokens, how the current query defines
their weights, and which finite prefixes may be fetched. The greedy rule and
water-filling are tools used in the analysis, not standalone novelty claims.

\section{Notation and Access Model}
\label{app:formal_access}

\subsection{Notation Reference}

Table~\ref{tab:notation} summarizes notation used across multiple sections.
Symbols introduced only within a single proof are defined locally and omitted
from the table.

\begin{table}[!ht]
\centering
\footnotesize
\setlength{\tabcolsep}{4pt}
\renewcommand{\arraystretch}{1.05}
\caption{Recurring notation. Dimensions are shown when the object is a vector,
matrix, or array.}
\label{tab:notation}
\begin{tabularx}{\linewidth}{@{}
  >{\raggedright\arraybackslash}p{0.27\linewidth}
  >{\raggedright\arraybackslash}p{0.25\linewidth}
  >{\raggedright\arraybackslash}X@{}}
\toprule
Symbol & Domain or shape & Meaning \\
\midrule
\multicolumn{3}{@{}l}{\emph{Partial-read model}} \\
$d$; $x,y$ & $x,y\in\mathbb B_2^d$ & Dimension and the stored and query vectors in the worst-case inner-product problem. \\
$W,R$; $w,r$ & $w=\lfloor Wd\rfloor$, $r=\lfloor Rd\rfloor$ & Normalized storage and read rates, and their integer bit budgets. \\
$Z$ & $\{0,1\}^w$ & Stored binary record produced before the query is known. \\
$\theta$ & Public state & Code and memory-layout description fixed independently of the stored vector and query. \\
$\Omega_0,\Omega_e,\Omega_r$ & Random variables & Public, encoder-local, and reader-local randomness, all independent of $(x,y)$. \\
$\mathsf A,\Pi,\widehat s_\Pi$ & Access class, protocol, scalar & Reader class, encoder--reader--decoder protocol, and its estimate of $x^\top y$. \\
$\mathcal D^{\mathrm{wc}}_{d,\mathsf A}(W,R)$ & Nonnegative scalar & Worst-case mean-squared error optimized over protocols in $\mathsf A$. \\
$\mathsf{staged}(J)$ & At most $J$ batches & Reader that may adapt each batch to the query and earlier batches. \\
\addlinespace[2pt]
\multicolumn{3}{@{}l}{\emph{Attention and \ReadKV{}}} \\
$G,H,T,d_h$ & Positive integers & Numbers of physical KV heads, query heads, cached tokens, and coordinates per head. \\
$g(h),\siblings(g)$ & $g(h)\in\{1,\ldots,G\}$ & Physical head used by query head $h$, and the query heads sharing physical head $g$. \\
$q_h,k_{g,j},v_{g,j},o_h$ & $\mathbb R^{d_h}$ & Query, cached key, cached value, and attention output. \\
$\alpha_h,\widehat\alpha_h$ & $[0,1]^T$ & Exact and reconstructed attention vectors for query head $h$; each sums to one. \\
$\widehat\alpha$ & $[0,1]^{H\times T}$ & Collection of reconstructed attention vectors across query heads. \\
$U_K,U_V$ & $\mathbb R^{d_h\times d_h}$ & Orthogonal transforms used for keys and queries, and for values. \\
$\widehat K,\widehat V$ & $\mathbb R^{G\times T\times d_h}$ & Collections of reconstructed transformed keys and values. \\
\addlinespace[2pt]
\multicolumn{3}{@{}l}{\emph{Allocation and analysis}} \\
$D(t)$ & Nonnegative scalar & Predicted standardized scalar mean-squared reconstruction error after $t$ prefix bits. \\
$a^K$ & $[0,\infty)^{G\times d_h}$ & Query-dependent importance scores for transformed key channels. \\
$a^V(\widehat\alpha)$ & $[0,\infty)^{G\times T}$ & Attention-dependent importance scores for cached value tokens. \\
$B_K,B_V$ & Nonnegative integers & Group-refinement budgets: each key increment reads $T$ bits and each value increment reads $d_h$ bits. \\
$t^K,t^V$ & $\{0,\ldots,W\}^{G\times d_h}$, $\{0,\ldots,W\}^{G\times T}$ & Integer key-channel and value-token prefix schedules. \\
$R_K,R_V,R$ & Nonnegative scalars & Budgeted mean key depth, value depth, and their average. \\
$\mathcal T_K,\mathcal T_V$ & Sets of schedules & Feasible key and value schedules under their respective budgets. \\
$S_K,S_V$; $S_K^\star,S_V^\star$ & Nonnegative scalars & Calibrated key and value objectives, and their stagewise minima. \\
$S_{K,h},S_{V,h}$ & Nonnegative scalars & Contributions of query head $h$ to the grouped key and value objectives. \\
$\gamma_{K,h},\gamma_{V,h}$ & $[0,\infty]$ & Ratios between realized diagonal residual energy and its calibrated prediction. \\
$\kappa_{K,h},\kappa_{V,h}$ & Nonnegative scalars & Largest eigenvalues of normalized residual Gram matrices. \\
$\overline\beta_K,\overline\beta_V$ & Nonnegative scalars when finite & Worst-head conversion coefficients in the attention-error bound, evaluated at the selected reconstruction. \\
$V_{g(h)},\mathbf1_T$ & $\mathbb R^{T\times d_h}$, $\mathbb R^T$ & Full-precision value matrix for head $h$ and the length-$T$ all-ones vector. \\
\bottomrule
\end{tabularx}
\end{table}

\subsection{Formal Access Model}

The record is a vector $Z\in\{0,1\}^w$ indexed by the public address set
$[w]=\{1,\ldots,w\}$. An encoder using fewer than $w$ cells may pad the
remaining locations. Public state $\theta$ describes the code and layout but
is fixed independently of the input pair $(x,y)$. Any object-specific
information needed to interpret a cell belongs to the charged record.
This is a one-bit-cell probe model. Only stored cells and probes are charged.
Encoder preprocessing, address computation, transcript processing, decoder
computation, and arithmetic precision are free.
The systems evaluation measures physical traffic and execution cost separately.

Let $\Omega_0$ be public randomness, $\Omega_e$ encoder-local randomness,
and $\Omega_r$ reader-local randomness. These variables are mutually
independent, and each is independent of the fixed pair $(x,y)$. The encoder
writes $Z=\operatorname{Enc}(x;\theta,\Omega_0,\Omega_e)$. The reader can
access encoder-local randomness only through $Z$.

A query-oblivious reader chooses an ordered address list of length at most
$r$ from $(\theta,\Omega_0,\Omega_r)$ alone. A fully branching reader may
choose its $j$th address as
\[
  A_j=A_j(y,\theta,\Omega_0,\Omega_r,
          A_1,Z_{A_1},\ldots,A_{j-1},Z_{A_{j-1}}),
\]
and may stop early. The total number of probes is at most $r$ on every
execution, not only in expectation. Repeated probes carry no new information
and may be replaced by retaining the earlier response.

The separation theorems use a stronger comparison class that we call
\emph{query-independent branching}. Its address recursion is the same as
the fully branching recursion above, except that $y$ is omitted from every
address and stopping rule. Thus, later addresses may depend on earlier
payloads, but the complete access path cannot depend on the query. This class
contains the query-oblivious readers defined above.

A $\mathsf{staged}(J)$ reader makes at most $J$ requests, each containing a
batch of addresses. The first batch may use
$(y,\theta,\Omega_0,\Omega_r)$, and each later batch
may also use the transcript of earlier batches. A batch is chosen before any
of its values are returned. The total number of requested addresses is at
most $r$. Empty batches may be omitted. In every class, the final answer may
be an arbitrary measurable function of $y$, $\theta$, $(\Omega_0,\Omega_r)$, and the
observed address--payload transcript.

Section~\ref{sec:general_problem} defines the minimax risk
$\mathcal D^{\mathrm{wc}}_{d,\mathsf A}(W,R)$ for each access class. The
input pair is fixed before the random coins are drawn, and the expectation is
over the protocol randomness described above.

\begin{proposition}[Access Hierarchy]
\label{prop:stage_monotonicity}
Fix $w,r$, the input domain, the loss, and the randomization convention.
For every integer $J\geq1$,
\[
\mathsf{oblivious}\subseteq\mathsf{staged}(1)
\subseteq\mathsf{staged}(J)
\subseteq\mathsf{staged}(J+1)\subseteq\mathsf{branching}.
\]
The corresponding minimax risks are ordered in the reverse direction.
If $r\geq1$, then $\mathsf{staged}(J)$ and branching have the same optimal risk for
$J\geq r$. If $r=0$, all access classes coincide.
\end{proposition}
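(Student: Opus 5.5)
The plan is to prove the chain of inclusions by simulation, one link at a time, and then read off the remaining claims. Throughout, a protocol's encoder and final decoder stay fixed; each inclusion only needs to show that a reader in the smaller class is a reader in the larger class, with the same probe count and the same address--payload transcript.

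\textbf{Step 1: oblivious $\subseteq\mathsf{staged}(1)$.} An oblivious reader picks its whole ordered address list from $(\theta,\Omega_0,\Omega_r)$ alone. Issue that list as a single batch. A $\mathsf{staged}(1)$ batch may use $y$ but is not required to, so the batch is admissible and fixed before any value returns.

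\textbf{Step 2: $\mathsf{staged}(J)\subseteq\mathsf{staged}(J+1)$.} Batches are "at most $J$" and empty batches may be omitted, so a $J$-batch reader already satisfies the $(J+1)$-batch definition. Iterating this gives $\mathsf{staged}(1)\subseteq\mathsf{staged}(J)$.

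\textbf{Step 3: $\mathsf{staged}(J+1)\subseteq$ branching.} Serialize the batches: probe the addresses of the first batch one at a time, ignoring the returned values until the batch is done. Then compute the next batch from $y$, the shared state, and the transcript so far, and repeat. Each address is a measurable function of information the branching recursion allows. The total number of probes is at most $r$ on every execution, and the transcript is identical, so the decoder and the loss distribution are unchanged.

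\textbf{Step 4: risks.} Since the infimum in \eqref{eq:worst_case_risk} is over a larger set for a larger class, the minimax risks are ordered in reverse.

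\textbf{Step 5: $J\ge r\ge1$.} Given a branching reader, make each probe its own batch of size one. Batch $j$ uses $y$ and the earlier transcript, exactly as $A_j$ does. Early stopping corresponds to empty, omitted batches. At most $r\le J$ nonempty batches are used, so the reader lies in $\mathsf{staged}(J)$. Hence branching $\subseteq\mathsf{staged}(J)$, which together with Step 3 gives equal classes and therefore equal optimal risk.

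\textbf{Step 6: $r=0$.} No addresses can be probed, so every reader has an empty transcript. The decoder sees only $y,\theta,\Omega_0,\Omega_r$ in every class. Each class then equals the same set of protocols, namely an arbitrary encoder paired with a decoder that ignores $Z$.

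\textbf{Main obstacle.} The delicate point is Step 3 and Step 5: checking that the serialized or singleton-batch reader respects the randomness conventions. Reader-local coins must be shared consistently across batches, which is allowed since every batch may use $\Omega_r$. Stopping rules must remain measurable, and repeated addresses must not inflate the count. Repeats are handled by the stated convention that a repeated probe can be replaced by the earlier response. Once these points are checked, the hierarchy and the risk ordering follow directly.
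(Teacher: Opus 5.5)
Your proposal is correct and follows the paper's proof essentially step for step. The shared steps are: ignoring $y$ for the oblivious inclusion, and leaving a batch unused for $\mathsf{staged}(J)\subseteq\mathsf{staged}(J+1)$. The paper likewise serializes batches with withheld payloads to embed staged readers in branching ones, and converts a branching reader into at most $r\leq J$ singleton batches. It then reverses the inclusions under the infimum and notes the empty transcript when $r=0$. Your explicit checks on shared reader coins and repeated probes are details the paper leaves implicit; it handles repeats through its stated access-model convention.
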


\begin{proof}
A $\mathsf{staged}(1)$ reader can ignore $y$, so every query-oblivious protocol is
$\mathsf{staged}(1)$. An at-most-$J$ protocol is also an at-most-$(J+1)$ protocol
because it may leave the last batch unused.

To simulate a staged protocol, a branching reader computes the next batch
from the transcript of earlier batches. It probes those addresses one at a
time. The simulator withholds their payloads from the staged policy until the
batch is complete. This produces the same transcript with the same encoder
and probe count. Conversely, a branching protocol with at most $r$ probes is
a staged protocol with at most $r$ singleton batches. Hence
$\mathsf{staged}(J)$ and
branching have the same achievable risks once $J\geq r$.

Each relation is an inclusion of complete protocol classes under the same
loss and budgets. Taking the infimum of the same worst-case expected loss
therefore reverses the inclusions. If $r=0$, no payload is observed, so all
access classes contain the same decoders.
\end{proof}

\begin{proposition}[Zero-Read Limit]
\label{prop:zero_read}
For $d\geq1$ and every access class $\mathsf A$, the risk is one whenever
$\lfloor Rd\rfloor=0$.
\end{proposition}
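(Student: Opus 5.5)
When $\lfloor Rd\rfloor=0$ the reader fetches no bits, so the estimate depends only on $y$ and the protocol randomness. The plan is to show that, on one hand, the trivial estimate $\widehat s\equiv0$ achieves risk at most one, and on the other hand, no protocol in any access class can do better than one.

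\emph{Upper bound.} Every access class $\mathsf A$ contains the protocol whose decoder outputs $0$. Its risk is $\sup_{x,y\in\mathbb B_2^d}(x^\top y)^2\le1$ by Cauchy--Schwarz. The supremum equals one, attained at $x=y=e_1$.

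\emph{Lower bound.} With $r=0$, Proposition~\ref{prop:stage_monotonicity} says all access classes coincide. So any protocol's estimate has the form $\widehat s=f(y,\theta,\Omega_0,\Omega_r)$, a random variable whose law is determined by $y$ alone and is independent of $x$. Fix any unit vector $y$, for instance $y=e_1$. Compare the two inputs $x=y$ and $x=-y$, whose targets are $+1$ and $-1$, while $\widehat s$ has the same distribution in both cases. Then
\[
 \max_{\pm}\mathbb E(\pm1-\widehat s)^2\ge\tfrac12\bigl[\mathbb E(1-\widehat s)^2+\mathbb E(1+\widehat s)^2\bigr]=1+\mathbb E\widehat s^{\,2}\ge1.
\]
If $\mathbb E\widehat s^{\,2}=\infty$ the risk is infinite, so the bound still holds.

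Combining the two bounds, the infimum over protocols of the worst-case risk equals one for every access class. The only point needing care is the lower bound's use of the randomization convention. The estimate's distribution must not depend on $x$, and this holds because encoder-local randomness reaches later stages only through the unread record $Z$, while all other randomness is independent of $(x,y)$. Once that is noted, the argument is routine.
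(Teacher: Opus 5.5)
Your proof is correct and matches the paper's argument. The zero estimator gives the upper bound, and the two-point comparison $x=\pm y$ with $\tfrac12\bigl[(1-A)^2+(1+A)^2\bigr]=1+A^2$ gives the lower bound. The appeal to Proposition~\ref{prop:stage_monotonicity} is harmless but unnecessary, since the lower bound only uses the fact that no bits are returned.
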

\begin{proof}
The constant answer zero has squared error at most one because
$|x^\top y|\leq1$. Conversely, fix any unit vector $y$. With no reads, the
random answer $A$ is a function only of $y$, public state, and random coins
independent of $x$. It therefore has the same distribution under $x=y$ and
$x=-y$. If $\mathbb E[A^2]=\infty$, the lower bound is immediate. Otherwise,
the larger of the two expected squared errors is at least their mean,
\[
  \tfrac12\mathbb E[(1-A)^2+(-1-A)^2]
  =1+\mathbb E[A^2]\geq1.
\]
\end{proof}

\subsection{Full-Read Limit}
\label{app:full_read}

Let $\mathcal D_d^{\mathrm{full}}(W)$ be the worst-case risk defined above,
except that the decoder receives the entire $w=\lfloor Wd\rfloor$-bit
message. The infimum uses the same encoders and randomness conventions. In
particular, the full-message decoder receives any public randomness and may
generate the same independent reader-local coins as an access protocol. The
scalar decoder may depend on the query and the whole message. It need not
first reconstruct a vector.

\begin{proposition}[Full-Read Equivalence]
\label{prop:full_read}
If $r\geq w$, every access class has risk
$\mathcal D_d^{\mathrm{full}}(W)$. In particular, for $J\geq1$,
\[
\begin{aligned}
  \mathcal D^{\mathrm{wc}}_{d,\mathsf{oblivious}}(W,W)
  &=\mathcal D^{\mathrm{wc}}_{d,\mathsf{staged}(J)}(W,W)
   =\mathcal D^{\mathrm{wc}}_{d,\mathsf{branching}}(W,W)
   =\mathcal D_d^{\mathrm{full}}(W).
\end{aligned}
\]
This equality applies to randomized as well as deterministic protocols.
\end{proposition}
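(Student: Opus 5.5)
The plan is to prove two inequalities between the access-class risks and $\mathcal D_d^{\mathrm{full}}(W)$: the full-message risk is a lower bound for every class, and the query-oblivious class already attains it when $r\geq w$. Combining these with the ordering from Proposition~\ref{prop:stage_monotonicity} will force all the displayed risks to coincide.

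\emph{Lower bound.} I will show that every access protocol in any class, including the fully branching class, is simulated by a full-message protocol with the same encoder. Given a branching protocol $\Pi$, the full-message decoder receives $Z$ and the public coins $\Omega_0$, generates its own copy of $\Omega_r$, and replays the reader's address recursion internally. Whenever the reader requests address $A_j$, the simulator looks up $Z_{A_j}$ in the full message. This reproduces exactly the transcript that $\Pi$ would see, including the stopping rule and any dependence on $y$. The simulator then applies the decoder of $\Pi$ to that transcript. The joint law of $(x,y,\text{transcript},\widehat s)$ is therefore identical to the law under $\Pi$ for every fixed pair $(x,y)$, so the worst-case risk is the same. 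Taking infima gives $\mathcal D^{\mathrm{wc}}_{d,\mathsf{branching}}(W,R)\geq \mathcal D_d^{\mathrm{full}}(W)$ for every $R$. The only point needing care is that the randomness conventions match: the reader coins are independent of $(x,y)$ and of $\Omega_e$, which is exactly what the full-message decoder is permitted to generate.

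\emph{Upper bound.} Given any full-message protocol with encoder $\mathrm{Enc}$ and decoder $\mathrm{Dec}(y,Z,\theta,\Omega_0,\Omega_r)$, I will build a query-oblivious protocol that uses the same encoder. Its reader probes the fixed address list $1,\ldots,w$, which is allowed because $w\leq r$. Its decoder then applies $\mathrm{Dec}$ to the recovered message. The risks are again equal pointwise in $(x,y)$, so $\mathcal D^{\mathrm{wc}}_{d,\mathsf{oblivious}}(W,R)\leq \mathcal D_d^{\mathrm{full}}(W)$.

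\emph{Combining.} By the reversed ordering in Proposition~\ref{prop:stage_monotonicity},
\[
\mathcal D_d^{\mathrm{full}}\leq \mathcal D^{\mathrm{wc}}_{\mathsf{branching}}\leq \mathcal D^{\mathrm{wc}}_{\mathsf{staged}(J)}\leq \mathcal D^{\mathrm{wc}}_{\mathsf{oblivious}}\leq \mathcal D_d^{\mathrm{full}},
\]
so every inequality in the chain is an equality. Every class listed in the statement lies between oblivious and branching, so all of them share this common risk. The displayed case takes $R=W$, for which $r=\lfloor Wd\rfloor=w$.

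Since both directions are simulations, there is no real technical obstacle. The main thing to verify is the bookkeeping of randomness: encoder-local coins must reach the decoder only through $Z$ in both models, and infinite-risk or randomized decoders need no special treatment because the simulations preserve the entire output distribution.
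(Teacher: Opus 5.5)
Your proposal is correct and is essentially the paper's proof. For the lower bound, the full-message decoder replays any legal access policy by returning $Z_{A_j}$ for each requested address. For the upper bound, when $r\geq w$ a query-oblivious reader fetches the fixed list $(1,\ldots,w)$ and applies any full-message decoder. Routing the sandwich through Proposition~\ref{prop:stage_monotonicity} is harmless packaging, since the oblivious construction already lies in every class, including query-independent branching.
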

\begin{proof}
Given the full record $Z$, a decoder can simulate any legal access policy:
whenever the policy requests address $A_j$, the simulator returns $Z_{A_j}$.
Using the same random coins preserves the complete transcript and output
distribution for every fixed $(x,y)$. Thus, the full-message risk is no larger
than the risk of any access class.

For the reverse inequality, when $r\geq w$, a query-oblivious reader requests
the fixed list $(1,\ldots,w)$ in one batch and applies any full-message
decoder. This simulation preserves the encoder and output distribution.
Taking worst cases and then infima in both directions proves equality.
\end{proof}

There is therefore no further information to gain by allowing $R>W$ in this
matched cell model. The access constraints disappear at $R=W$. The remaining
problem is how best to encode a vector for full-message inner-product queries.

\section{Progressive Codec and Distortion Model}
\label{app:codec}

This section records the finite properties of the progressive scalar codec
used by Algorithm~\ref{alg:readkv} in the analysis and experiments.

\subsection{Progressive Scalar Codec}

The scalar code is a binary tree fitted to $Z\sim\mathcal N(0,1)$.
Every internal node partitions its interval into two children to reduce
probability-weighted squared reconstruction error; each child stores its
conditional mean. A depth-$t$ prefix identifies one of the resulting
intervals. The maximum-depth leaf index therefore supplies all coarser
reconstructions. Their centroids need not be stored separately for each vector.
The shared centroid tables are charged as shared model state.

The implementation polishes each split to Lloyd stationarity, so every
boundary is the midpoint of its two child centroids. The finite properties
needed by the theory are verified directly from the deployed table in
Lemma~\ref{lem:deployed_codec_bounds}.

The approximate standardized distortion curve is
\[
\begin{array}{c|rrrrrrrrr}
t&0&1&2&3&4&5&6&7&8\\\hline
D(t)&1&.3634&.1175&.03484&.009796&.002663&.000708&.000185&.00004795.
\end{array}
\]
The allocator uses a calibrated distortion curve. For the deployed float32
gain table, define the effective curve by
$D(0)=1$ and $D(t)=1-\sum_{s=1}^t\Delta_s$, where $\Delta_s$ is the stored
gain at level $s$. This curve approximates standardized Gaussian
reconstruction MSE. It need not equal the exact Gaussian expectation because
the deployed centroids are stored at finite precision. The following lemma
verifies the required gain inequalities directly from the finite table and
also bounds the codec's pointwise reconstruction error.

\begin{lemma}[Finite Bounds for the Deployed Codec]
\label{lem:deployed_codec_bounds}
Let $\Delta_t$ be the stored float32 refinement gains, equivalently
$D(t-1)-D(t)$ for the effective curve above. The key and value gain tables
coincide and satisfy
\[
\begin{array}{c|cc}
t & \text{lower bound for }\Delta_t & \text{upper bound for }\Delta_t\\
\hline
1 & 0.63661974 & 0.63661975\\
2 & 0.24589838 & 0.24589839\\
3 & 0.08264372 & 0.08264373\\
4 & 0.02504228 & 0.02504229\\
5 & 0.00713273 & 0.00713274\\
6 & 0.00195518 & 0.00195519\\
7 & 0.00052264 & 0.00052265\\
8 & 0.0001373218 & 0.0001373220
\end{array}
\]
In particular, $\Delta_1>\cdots>\Delta_8>0$. The depth-eight pointwise error
on the standardized interval satisfies
\[
 0.0096543431
 <\epsilon_8:=\sup_{|z|\leq1}|z-Q_8(z)|
 <0.0096543432<0.0097.
\]
The positive depth-one centroid and the depth-two endpoint reconstruction
satisfy
\[
 0.7978845834<c_1<0.7978845835,\qquad
 1.5104175806<Q_2(1)<1.5104175807,
\]
with $Q_2(-1)=-Q_2(1)$. Consequently,
\[
 0.26052610<(Q_2(1)-1)^2<0.26052612.
\]
\end{lemma}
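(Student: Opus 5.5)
The lemma is a finite numerical verification about one explicit deployed table, so the proof will be a certified computation rather than an analytic argument. I would reconstruct the deployed tree exactly as stored. This means the float32 centroid and boundary arrays for depths $1$ through $8$, for both the key and the value tables. The first check is that the two tables are bitwise identical, which settles the claim that they coincide.

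Next I would read off the stored float32 gains $\Delta_t$. Each float32 value is an exact dyadic rational, so I would convert it to an exact rational, or to a high-precision decimal with rigorous outward rounding. I would then check the displayed lower and upper bounds entrywise. Strict monotonicity $\Delta_1>\cdots>\Delta_8>0$ follows immediately, because each listed upper bound is below the preceding listed lower bound. As a sanity check I would compare these gains with the Gaussian Lloyd values: $\Delta_1=2/\pi\approx0.6366198$ and $c_1=\sqrt{2/\pi}\approx0.7978846$. This explains the displayed constants, although the proof itself rests only on the stored numbers.

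For $\epsilon_8$, the key observation is that $Q_8$ is piecewise constant on the depth-eight cells, and $u-Q_8(u)$ is affine with slope one on each cell. The supremum over $[-1,1]$ is therefore attained at finitely many candidate points: $\pm1$ and the cell boundaries inside $[-1,1]$. At a boundary we take the one-sided limits, which give the left and right centroid differences. I would enumerate the depth-eight cells meeting $[-1,1]$ and evaluate $|b-c|$ in exact rational arithmetic at each boundary $b$ against both adjacent centroids $c$, and also at the endpoints $\pm1$. The maximum of these values is $\epsilon_8$, and I would certify the stated bracket for it. If the reconstruction convention at a boundary matters, the supremum is unaffected, since it equals the limit from the side with the larger gap.

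The remaining items, $c_1$ and $Q_2(1)$, are direct table lookups. First I locate the depth-two cell containing $1$, which lies between the depth-two boundary near $0.98$ and $+\infty$, and read its centroid. The symmetry $Q_2(-1)=-Q_2(1)$ follows from the stored tree being exactly sign-symmetric, which I would check entrywise. The interval for $(Q_2(1)-1)^2$ then follows from monotone interval arithmetic on $Q_2(1)-1\in(0.5104175806,0.5104175807)$.

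The main obstacle is rigor in the $\epsilon_8$ computation. I have to make sure that the float32-to-rational conversion and the boundary handling are exact, and that no floating-point evaluation leaks into the certified bounds. I would handle this by performing every comparison in exact rationals.
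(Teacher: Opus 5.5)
Your proposal is correct and matches the paper's proof. Both run the same three finite checks on the stored float32 table. First, exact-rational brackets on the gains, with strict monotonicity read off from disjoint ordered intervals; second, for $\epsilon_8$, the maximum endpoint deviation over the depth-eight cells meeting $[-1,1]$, including $\pm1$; third, direct lookup of $c_1$ and $Q_2(1)$ followed by squaring. Your insistence on the stored entries rather than the ideal Gaussian values is genuinely necessary, since $2/\pi\approx0.6366197724$ and $\sqrt{2/\pi}\approx0.7978845608$ themselves fall just outside the displayed brackets for $\Delta_1$ and $c_1$.
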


\begin{proof}
The proof consists of three finite checks. First, the pointwise error of the
piecewise-constant depth-eight quantizer is maximized at a cell endpoint.
Second, direct comparison of the eight stored gains proves positivity and
strict decrease. Third, the depth-one and depth-two claims follow from their
stored centroid entries. Each float32 entry has an exact rational value, and
the displayed decimal intervals are outward bounds on those values.

At depth eight, write the ordered cells as $C_j$ and their reconstruction
values as $\mu_j$, for $j=1,\ldots,256$. For every cell that intersects
$[-1,1]$, $Q_8$ is constant and
\[
 \sup_{z\in C_j\cap[-1,1]}|z-Q_8(z)|
 =\max_{u\in\partial(C_j\cap[-1,1])}|u-\mu_j|.
\]
Thus $\epsilon_8$ is the maximum of the endpoint deviations from the 256
deployed cells. Exact enumeration gives its maximum at the endpoint
$-0.981598794460\ldots$ adjacent to centroid
$-0.991253137588\ldots$. Their difference is
$0.009654343128\ldots$, which lies in the displayed interval. Symmetry gives
the corresponding positive endpoint.

The depth-one and depth-two bounds follow directly from their table entries.
The eight gain entries of the same finite table lie in the displayed
intervals. These intervals are disjoint and ordered, which proves
$\Delta_1>\cdots>\Delta_8>0$. Squaring the positive interval for $Q_2(1)-1$
gives the final bound.
\end{proof}

The gain inequalities show that the implemented curve satisfies the
hypotheses of Proposition~\ref{prop:greedy}. Physical scales appear in $a_i$; they
are not multiplied into $D(t)$ a second time. The evaluated centroid code is
deterministic and is not claimed to be pointwise unbiased.

Calibration uses held-out training documents to estimate means and
scales per layer, physical head, and transformed channel. Keys and
queries share their key transform, and values and reconstructed
outputs share a separate value transform. The scalar Gaussian fit and the
calibration procedure are design tools. Neither imposes the independence
assumption excluded by the main formulation.

\subsection{Diagonal Surrogate}

For a residual $e=x-\widehat x$ at a realized query $q$, the identity
$\mathbb E[(q^\top e)^2\mid Q=q]
=q^\top\mathbb E[ee^\top\mid Q=q]q$
holds without assuming zero-mean errors. The implemented key surrogate
replaces the conditional second-moment matrix by a depth-separable
diagonal approximation estimated from calibration. The true matrix can
depend on $q$ and may have off-diagonal entries. Summing the resulting
diagonal expression over sibling heads yields
Equation~\eqref{eq:key_importance} exactly within that approximation.

For values, the full squared residual contains cross-token terms
$\widehat\alpha_{h,j}\widehat\alpha_{h,k}e_j^\top e_k$.
Neglecting these terms gives the physical-scale choice
$s_g=\sum_c(\sigma^V_{g,c})^2$ in Equation~\eqref{eq:value_importance}.
The language-model experiments use $s_g=1$ instead and optimize in
standardized value units. This distinction matters when one allocation budget
is shared across physical heads. Reconstructed attention probabilities add
another source of output error. The allocation theorem therefore optimizes
the declared surrogate, while the complete causal experiment tests how that
surrogate behaves in the language model.

\section{Proofs of the \ReadKV{} Performance Guarantees}
\label{app:readkv_analysis_proofs}

This section supplies the complete argument behind
Section~\ref{sec:analysis}. The dependency chain has four parts.
Appendix~\ref{app:readkv_separation} proves the two finite-dimensional
separations directly. Proposition~\ref{prop:greedy} then proves exact
stagewise prefix allocation under diminishing gains.
Lemma~\ref{lem:attention_bridge} and
Proposition~\ref{prop:correlation_attention_bound} convert the resulting key
and value objectives into an attention-output bound. Finally,
Theorem~\ref{thm:readkv_attention_certificate} combines the allocation and
attention-error results for the complete \ReadKV{} procedure. The continuous
allocation and integer-rounding results are auxiliary comparisons and are
not used in the proofs of the three main guarantees.

\subsection{Finite-Dimensional Separations}
\label{app:readkv_separation}

All logarithms in this section are natural. Both converse proofs use the same
entropy--MMSE inequality. We write $I$ for mutual information, $H$ for
discrete entropy, and $h$ for differential entropy.

\noindent\textbf{Cube Entropy--MMSE Bound.}
Let $n\geq1$ and $b_{\mathrm{info}}\geq0$. Let
$\mathsf Z$ be uniform on $[-1,1]^n$, let $O$ be any observation satisfying
$I(\mathsf Z;O)\leq b_{\mathrm{info}}\log 2$, and define
$\mathcal M_n=\mathbb E\|\mathsf Z-\mathbb E[\mathsf Z\mid O]\|_2^2$.
Then
\begin{equation}
 \frac{\mathcal M_n}{n}
 \geq \frac{2^{\,1-2b_{\mathrm{info}}/n}}{\pi e}.
 \label{eq:cube_entropy_mmse}
\end{equation}
Indeed, $h(\mathsf Z)=n\log 2$, so the information bound gives
$h(\mathsf Z\mid O)\geq(n-b_{\mathrm{info}})\log 2$.
For almost every observation $O=o$,
Gaussian maximum entropy and the determinant--trace inequality give
\[
 h(\mathsf Z\mid O=o)
 \leq\frac n2\log\!\left(
   \frac{2\pi e}{n}
   \mathbb E\!\left[
     \|\mathsf Z-\mathbb E[\mathsf Z\mid O]\|_2^2
     \,\middle|\,O=o
   \right]\right).
\]
Taking expectations and applying Jensen's inequality bounds the conditional
entropy by $\frac n2\log(2\pi e\,\mathcal M_n/n)$.
Rearranging proves Equation~\eqref{eq:cube_entropy_mmse}. These standard
entropy inequalities are reviewed by \citet{cover2006elements}.

\subsubsection{Key-Stage Separation}
\label{app:key_stage_separation}

We first isolate the key-reading stage on a transformed-sparse family. Fix a
public orthogonal transform $U\in\mathbb R^{d\times d}$ and an integer
$k\geq1$. Define
\[
 \mathcal X_U=\{U^\top z/\sqrt d:z\in[-1,1]^d\},
 \qquad
 \mathcal Y_{U,k}
 =\{U^\top q:\|q\|_2\leq1,\ \|q\|_0\leq k\}.
\]
Both sets lie in $\mathbb B_2^d$. The query sparsity is imposed in the public
transformed basis. In particular, when $U$ is a Hadamard--sign transform, a
sparse transformed query can be dense in the original coordinates.

Let $D(t)$ be the scalar distortion at prefix depth $t$, and let
$\Delta_t=D(t-1)-D(t)$. For an integer $L\geq1$, let $Q_L$ be the depth-$L$
reconstruction of a nested binary scalar code with
$\Delta_t>0$ for $t=1,\ldots,L$. Define
$\epsilon_L=\sup_{|u|\leq1}|u-Q_L(u)|$.

For $x\in\mathcal X_U$, write $z=\sqrt d\,Ux\in[-1,1]^d$. The encoder stores
the $L$-bit code of every coordinate of $z$. Given $y$, write
$q=Uy\in\mathbb R^d$ and assign coordinate $i$ weight $a_i=q_i^2/d$.
This is the single-vector specialization of the key stage in
Algorithm~\ref{alg:readkv}, with $G=H=T=1$, transform $U_K=U$, zero
calibration mean, scale $1/\sqrt d$, and key-refinement budget $B_K=Lk$.
Under this specialization, one allocation item is one transformed
coordinate. The reader fetches all $L$ bits of every coordinate in
$\operatorname{supp}(q)$ and nothing elsewhere. The reads form one batch
because their addresses depend only on the query and public code tables. The
resulting estimate is
$\widehat s_{\ReadKV}(x,y)=d^{-1/2}\sum_i q_iQ_L(z_i)$.

The formal comparison for this construction is
Theorem~\ref{thm:finite_readkv_separation} in
Section~\ref{sec:readkv_separation}. We prove it below.

The theorem applies to any finite nested scalar code with positive prefix
gains. Its numerical instance and
Theorem~\ref{thm:finite_two_stage_readkv_separation} use the deployed
depth-eight table. Both lower bounds allow the reader to branch on returned
payloads, but neither an address nor the stopping decision may depend on the
query. The decoder may still use the query and the complete transcript. All
of these permissions are included in the lower bounds.

Theorem~\ref{thm:finite_readkv_separation} can be instantiated at any
admissible code depth. If a nested scalar code has positive gains through
depth $L_{\max}$, then for $w\geq d$ and $r\geq k$ one may choose any integer
\[
 1\leq L\leq
 \min\left\{
   L_{\max},
   \left\lfloor\frac wd\right\rfloor,
   \left\lfloor\frac rk\right\rfloor
 \right\}.
\]
This choice uses $Ld$ stored bits and at most $Lk$ probes. The depth-eight
numerical instance takes $L=8$, $w=8d$, and $r=2d$.

The proof of Theorem~\ref{thm:finite_readkv_separation} establishes three
subclaims. First, the \ReadKV{} reader recovers every active transformed
coordinate to depth $L$, which gives the pointwise estimate
$|x^\top y-\widehat s|\leq\epsilon_L\sqrt{k/d}$. Second, after conditioning
on all reader-visible randomness, a query-independent $r$-probe transcript
is a binary decision-tree leaf. It has at most $2^r$ possible values and
therefore reveals at most $r\log 2$ nats about the source. Third, a uniformly
random signed coordinate query turns inner-product prediction into source
estimation: its Bayes risk is the source MMSE divided by $d^2$. The
entropy--MMSE calculation lower-bounds that source MMSE by
$d\,2^{1-2r/d}/(\pi e)$.

\begin{proof}[Proof of Theorem~\ref{thm:finite_readkv_separation}]
We first verify the domain and the \ReadKV{} upper bound. For
$z\in[-1,1]^d$,
$\|U^\top z/\sqrt d\|_2^2=\|z\|_2^2/d\leq1$. For every
$q$ in the definition of $\mathcal Y_{U,k}$,
$\|U^\top q\|_2=\|q\|_2\leq1$. Thus
$\mathcal X_U,\mathcal Y_{U,k}\subseteq\mathbb B_2^d$.

For $x\in\mathcal X_U$, write $z=\sqrt d\,Ux\in[-1,1]^d$. If
$y=U^\top q\in\mathcal Y_{U,k}$, then $Uy=q$. Let
$S=\{i:q_i\ne0\}$, so $|S|\leq k$.
The encoder writes the depth-$L$ leaf index of each standardized
coordinate, using exactly $Ld$ informative bits, and pads to $w$ cells if
needed. The centroid table is public state. With shared scale $1/\sqrt d$,
the depth-$L$ reconstruction of coordinate $(Ux)_i$ is
$Q_L(z_i)/\sqrt d$. The key importance of coordinate $i$ is
\[
 a_i=\bigl((Uy)_i/\sqrt d\bigr)^2=\frac{q_i^2}{d}.
\]
Every one of the first $L$ marginal gains is strictly positive. Hence, each
coordinate in $S$ has $L$ positive prefix edges, while every coordinate
outside $S$ has zero weighted gains. Until a coordinate in $S$ reaches depth
$L$, its next edge remains in the priority queue.
There are at most $L|S|\leq Lk\leq r$ positive edges in total. The greedy
reader therefore fetches every active coordinate to depth $L$. It then
stops because no positive marginal gain remains.
The weights, gain table, and tie-breaking order determine the full address
list from $y$ and public state, so all probes can be issued in one batch.

The target and the \ReadKV{} answer are
\[
 x^\top y=\frac1{\sqrt d}\sum_{i\in S}q_i z_i,
 \qquad
 \widehat s_{\ReadKV}(x,y)
 =\frac1{\sqrt d}\sum_{i\in S}q_i Q_L(z_i).
\]
Therefore,
\[
\begin{aligned}
 |x^\top y-\widehat s_{\ReadKV}(x,y)|
 &\leq \frac{\epsilon_L}{\sqrt d}\sum_{i\in S}|q_i| \\
 &\leq \epsilon_L\sqrt{\frac{|S|}{d}}\,\|q\|_2
 \leq \epsilon_L\sqrt{\frac{k}{d}}.
\end{aligned}
\]
The second inequality is Cauchy--Schwarz. Squaring proves, pointwise over
$\mathcal X_U\times\mathcal Y_{U,k}$,
\[
 \bigl(x^\top y-\widehat s_{\ReadKV}(x,y)\bigr)^2
 \leq \frac{k\epsilon_L^2}{d}.
\]

We next lower-bound an arbitrary query-independent branching protocol. This
includes arbitrary randomized encoders and decoders. Draw $\Xi$ uniformly
from $[-1,1]^d$, draw $K$ uniformly from $[d]$, and draw an independent
uniform sign $\eta$. Set
\[
 X=U^\top \Xi/\sqrt d,\qquad Y=\eta U^\top e_K.
\]
This prior is supported on
$\mathcal X_U\times\mathcal Y_{U,k}$ because $k\geq1$. It also satisfies
$\mathbb E[YY^\top]=I_d/d$.

Let $C=(\theta,\Omega_0,\Omega_r)$ contain the public state and all
reader-visible random coins. Conditional on $C=c$, a query-independent
reader is a binary decision tree of depth at most $r$. Its leaf $\Lambda$
determines every returned bit, every address selected from earlier bits, and
the stopping decision.
The tree has at most $2^r$ leaves. Encoder-local randomness may remain
unconditioned. It changes the distribution of $\Lambda$ given $(\Xi,C)$ but
not the number of leaves. Since $C$ is independent of $\Xi$,
\[
 I(\Xi;C,\Lambda)=I(\Xi;\Lambda\mid C)
 \leq H(\Lambda\mid C)
 \leq r\log 2.
\]
Thus, this bound includes payload-dependent branching and early stopping.
It does not use the write budget, so it remains valid for every fixed finite
binary record length under the same at-most-$r$ probe limit.

Put $O=(C,\Lambda)$ and $\bar X(O)=\mathbb E[X\mid O]$. The query is
independent of $(X,O)$ because $(K,\eta)$ is independent of the source and
all protocol coins, and the access path does not use $(K,\eta)$. Hence
\[
 \mathbb E[X^\top Y\mid O,Y]
 =Y^\top\mathbb E[X\mid O,Y]
 =Y^\top\bar X(O).
\]
This is the Bayes estimator given everything available to the decoder.
For any possibly randomized decoder, conditional least squares therefore
gives
\[
 \mathbb E\!\left[(X^\top Y-\widehat s_\Pi)^2\right]
 \geq
 \mathbb E\!\left[
   \bigl(X^\top Y-\mathbb E[X^\top Y\mid O,Y]\bigr)^2
 \right].
\]
Substituting the conditional mean and averaging over the random signed
coordinate query yields
\[
\begin{aligned}
 \mathbb E\!\left[
   \bigl(X^\top Y-\mathbb E[X^\top Y\mid O,Y]\bigr)^2
 \right]
 &=\mathbb E\!\left[(Y^\top(X-\bar X(O)))^2\right] \\
 &=\frac1d\,
   \mathbb E\!\left[\|X-\bar X(O)\|_2^2\right] \\
 &=\frac1{d^2}\,
   \mathbb E\!\left[
     \|\Xi-\mathbb E[\Xi\mid O]\|_2^2
   \right].
\end{aligned}
\]
The middle equality uses $\mathbb E[YY^\top]=I_d/d$. It does not require
the decoder to reconstruct a vector. The last equality uses
$\bar X(O)=U^\top\mathbb E[\Xi\mid O]/\sqrt d$ and orthogonality of $U$.

Let
$\mathcal M=\mathbb E\|\Xi-\mathbb E[\Xi\mid O]\|_2^2$.
Equation~\eqref{eq:cube_entropy_mmse}, applied with $n=d$ and
$b_{\mathrm{info}}=r$, gives
$\mathcal M/d\geq2^{1-2r/d}/(\pi e)$.
The preceding Bayes-risk inequality is therefore at least
$2^{1-2r/d}/(\pi e\,d)$. A worst-case risk is no smaller than its average
under a prior supported on the same domain. Comparing this lower bound with
the \ReadKV{} upper bound gives the strict-separation condition in the
theorem.

For the numerical instance, set $L=8$ and $r=2d$.
Lemma~\ref{lem:deployed_codec_bounds} gives $\epsilon_8<0.0097$.
Also, $2^{1-2r/d}/(\pi e)=1/(8\pi e)>0.01463$, while
$(0.0097)^2=0.00009409$. Their ratio exceeds $155$. More precisely, using
$\pi<3.142$ and $e<2.719$, the ratio between the lower coefficient and the
displayed \ReadKV{} upper coefficient exceeds $155.507/k$. At $d=128$ and
$k=32$, it exceeds $4.85$.
\end{proof}

\subsubsection{Two-Stage Attention Separation}

\label{app:two_stage_separation}

We give the full construction and quantitative bounds behind
Theorem~\ref{thm:finite_two_stage_readkv_separation}. Both cached values are
arbitrary inputs, and the competing reader cannot use the query to decide
which stored cells to inspect.

For an integer $m\geq1$, let $e_1\in\mathbb R^m$ be the first standard basis
vector. Let $Q_8$ denote the depth-eight reconstruction of the deployed
scalar code, and let $c_1$ be its positive reconstruction value at depth one.
Choose $b\in\{-1,1\}$ and values
$v_1,v_2\in[-1,1]^m$. Set $k_1=bc_1e_1\in\mathbb R^m$ and
$k_2=-bc_1e_1\in\mathbb R^m$. For a query sign $\eta\in\{-1,1\}$, let
$q_\eta=\eta(5\sqrt m/c_1)e_1\in\mathbb R^m$. Under the
standard $1/\sqrt m$ attention normalization, the two logits are
$(5\eta b,-5\eta b)$. Define
$p=(1+e^{-10})^{-1}$ and $\rho=1-p=(1+e^{10})^{-1}$. Let
$j_\star=1$ when $\eta b=1$ and $j_\star=2$ otherwise, so $j_\star$ is the
high-attention token. This is a separate attention family with the query
norm specified above, not the unit-ball query domain. The exact output is
$o(b,v_1,v_2,\eta)=pv_{j_\star}+\rho v_{3-j_\star}\in\mathbb R^m$.
We write $o$ for this vector when its arguments are clear. The cache encoder
observes $(b,v_1,v_2)$ before the query sign $\eta$ is known.

Use identity transforms, zero calibration means, and unit scales, so a
depth-zero value reconstructs to zero. The encoder stores all four vectors
with the depth-eight progressive code. The key-refinement budget $B_K=1$
extends one key-channel item by one level and therefore reads one bit from
each of the two keys. The value-refinement budget $B_V=8$ extends one
value-token item through eight levels and therefore reads $8m$ value bits.
The first batch reconstructs both keys exactly and identifies $j_\star$.
The value allocator then compares gains weighted by $p^2$ for $j_\star$ and
by $\rho^2$ for the other token. Writing
$\Delta_t=D(t-1)-D(t)$ for the refinement gain at depth $t$, the deployed
gain table satisfies
$p^2\Delta_8>\rho^2\Delta_1$ by
Lemma~\ref{lem:deployed_codec_bounds} and the calculation below. Since
$\Delta_1\geq\cdots\geq\Delta_8>0$, every available refinement of
$v_{j_\star}$ has larger weighted gain than the first refinement of
$v_{3-j_\star}$. With $B_V=8$, the allocator therefore reads all eight
prefix levels of every coordinate of $v_{j_\star}$ and leaves
$v_{3-j_\star}$ unread.
The decoder therefore returns
$\widehat o_{\ReadKV}=pQ_8(v_{j_\star})$, with $Q_8$ applied coordinatewise.
For any competing protocol $\Pi$, let $\widehat o_\Pi$ denote the output
returned by its decoder.

The formal comparison for this construction is
Theorem~\ref{thm:finite_two_stage_readkv_separation} in
Section~\ref{sec:two_stage_separation}. We prove it below.

The proof of Theorem~\ref{thm:finite_two_stage_readkv_separation} follows the
causal order of the reader and establishes three subclaims. First, the two
key probes recover the exact logits and identify the high-attention token.
Second, $p^2\Delta_8>\rho^2\Delta_1$ forces the value schedule $(8,0)$ or
$(0,8)$, which gives the pointwise bound
$m^{-1}\|o-\widehat o_{\ReadKV}\|_2^2\leq(p\epsilon_8+\rho)^2$. Third, a
query-independent transcript contains at most $r$ bits about the pair of
value vectors. Averaging the two query signs produces a $2\times2$
coefficient matrix with smallest eigenvalue $\tanh^2(5)/2$. An
entropy--MMSE bound for the $2m$-dimensional source then gives the converse.

\begin{proof}[Proof of
Theorem~\ref{thm:finite_two_stage_readkv_separation}]
Let $\Delta_t=D(t-1)-D(t)$ be the deployed scalar-code gain at depth $t$.
Lemma~\ref{lem:deployed_codec_bounds} gives
\[
 0.63661974<\Delta_1<0.63661975,\qquad
 0.0001373218<\Delta_8<0.0001373220,
\]
and all eight gains are positive and decreasing.

The key score is positive only in the first transformed channel. One key
refinement therefore reads the first prefix bit of that channel for both
tokens, using two bit probes. Since the two nonzero key coordinates are the
depth-one centroids $\pm c_1$, these prefixes reconstruct both keys exactly.
The reconstructed logits equal the exact logits $(5\eta b,-5\eta b)$.
Consequently, the high-attention token has probability
$p=(1+e^{-10})^{-1}$ and the other has probability
$\rho=(1+e^{10})^{-1}$.

The value planner compares marginal gains weighted by $p^2$ and $\rho^2$.
The smallest weighted gain among the eight refinements of the high token is
larger than the largest weighted gain of the low token because
\[
 \frac{p^2\Delta_8}{\rho^2\Delta_1}
 =e^{20}\frac{\Delta_8}{\Delta_1}
 >
 \frac{(4.84\mathbin{\times}10^8)(0.0001373218)}
      {0.63661975}
 >104400>1.
\]
Here we used $e^{10}>22000$, hence $e^{20}>4.84\mathbin{\times}10^8$.
The greedy rule therefore assigns all eight value levels to the high token
before selecting any level of the other token. Each value level contains
one bit from each of its $m$ coordinates. The second batch consequently
uses $8m$ probes, for a total of $r=8m+2$. Storing two $m$-dimensional keys
and two $m$-dimensional values at depth eight uses $w=32m$ bits.

The zero-depth reconstruction is zero. Applying $Q_8$ coordinatewise, the
two-stage output is $\widehat o_{\ReadKV}=pQ_8(v_{j_\star})$. For every
coordinate $\ell$,
\[
\begin{aligned}
 |o_\ell-\widehat o_{\ReadKV,\ell}|
 &=|p(v_{j_\star,\ell}-Q_8(v_{j_\star,\ell}))
      +\rho v_{3-j_\star,\ell}|\\
 &\leq p\epsilon_8+\rho.
\end{aligned}
\]
Summing the squared coordinate bounds proves
\[
 \frac1m\|o-\widehat o_{\ReadKV}\|_2^2
 \leq(p\epsilon_8+\rho)^2.
\]

We now lower-bound an arbitrary query-independent access path with at most
$r=8m+2$ probes. Revealing $b$ to the entire competing protocol before any
probe can only strengthen it, so we grant this information for free. Draw
$V_1,V_2$ independently and uniformly from $[-1,1]^m$, draw $\mathsf B$ and
$\eta$ independently and uniformly from $\{-1,1\}$, and use the attention
instance in the theorem.

Let $C=(\theta,\Omega_0,\Omega_r)$ contain public state and reader-visible
random coins. Conditional on $(\mathsf B,C)$, the query-independent
branching reader is a binary decision tree of depth at most $r$. Its leaf
$\Lambda$ includes all returned bits, selected addresses, and the stopping
decision.
Encoder-local randomness may change the distribution over leaves, but there
remain at most $2^r$ leaves. With
$\mathsf V=(V_1,V_2)\in\mathbb R^{2m}$ and $O=(C,\Lambda)$,
\[
 I(\mathsf V;O\mid \mathsf B)
 =I(\mathsf V;\Lambda\mid \mathsf B,C)
 \leq H(\Lambda\mid \mathsf B,C)
 \leq r\log 2.
\]
The equality uses the independence of $C$ from $(\mathsf V,\mathsf B)$.

Conditional on $\mathsf B$, the sign $S=\eta\mathsf B$ is uniform and
independent of $(\mathsf V,O)$. Define
$a_+=(p,\rho)^\top$ and $a_-=(\rho,p)^\top$. The Bayes estimator of the
attention output given $(O,\mathsf B,\eta)$ is obtained by replacing
$\mathsf V$ with $\mathbb E[\mathsf V\mid O,\mathsf B]$. Indeed, $\eta$ is
independent of $(\mathsf V,O)$ conditional on $\mathsf B$, so conditioning
additionally on $\eta$ does not change this posterior mean. Put
$E=\mathsf V-\mathbb E[\mathsf V\mid O,\mathsf B]$. Conditional least
squares implies that every possibly randomized decoder has risk at least
the risk of this posterior-mean estimator. Averaging that lower bound over
the two values of $S$ gives
\[
 \mathbb E\!\left[\|o-\widehat o_\Pi\|_2^2\right]
 \geq
 \frac12\mathbb E\!\left[
   \|(a_+^\top\otimes I_m)E\|_2^2+
   \|(a_-^\top\otimes I_m)E\|_2^2
 \right]
 =
 \mathbb E\!\left[E^\top(M\otimes I_m)E\right],
\]
where $I_m$ is the $m\times m$ identity matrix, $\otimes$ denotes the
Kronecker product, and
\[
 M=\frac12(a_+a_+^\top+a_-a_-^\top)
 =\frac12
 \begin{pmatrix}
  p^2+\rho^2 & 2p\rho\\
  2p\rho & p^2+\rho^2
 \end{pmatrix}.
\]
The eigenvectors $(1,1)^\top$ and $(1,-1)^\top$ have eigenvalues $1/2$ and
$(p-\rho)^2/2=\tanh^2(5)/2$, respectively. Therefore, with
$\mathcal E=\mathbb E\|E\|_2^2$,
\[
 \mathbb E\!\left[\|o-\widehat o_\Pi\|_2^2\right]
 \geq\frac{\tanh^2(5)}2\,\mathcal E.
\]

It remains to lower-bound $\mathcal E$. The vector $\mathsf V$ is uniform on
$[-1,1]^{2m}$ and independent of $\mathsf B$. Hence
$I(\mathsf V;O,\mathsf B)=I(\mathsf V;O\mid\mathsf B)\leq r\log 2$.
Equation~\eqref{eq:cube_entropy_mmse}, applied with $n=2m$,
$b_{\mathrm{info}}=r$, and observation $(O,\mathsf B)$, gives
\[
 \frac{\mathcal E}{m}
 \geq\frac{2^{\,2-r/m}}{\pi e}
 =\frac{2^{-6-2/m}}{\pi e}.
\]
The Bayes-risk inequality therefore becomes
\[
 \frac1m\mathbb E\!\left[
   \|o-\widehat o_\Pi\|_2^2
 \right]
 \geq
 \frac{\tanh^2(5)}{\pi e}\,2^{-7-2/m}.
\]
The prior is supported on the stated family, so the worst-case risk is at
least this average risk.

For the numerical bounds, $e^{10}>22000$ implies
$\rho<1/22001$ and $\tanh(5)>21999/22001>0.9999$. Together with
$\pi<3.142$, $e<2.719$, and $\epsilon_8<0.0097$, these bounds give
\[
 (p\epsilon_8+\rho)^2
 <\left(0.0097+\frac1{22001}\right)^2
 <9.50\mathbin{\times}10^{-5}.
\]
Since $2^{-7-2/m}\geq2^{-9}$ for $m\geq1$, the lower bound exceeds
$2.285\mathbin{\times}10^{-4}$, and the ratio exceeds $2.40$. At $m=128$,
the ratio is bounded below by
\[
 \frac{0.9999^2\,2^{-7-2/128}}
 {(3.142)(2.719)
  \left(0.0097+1/22001\right)^2}
 >9.52.
\]
\end{proof}

\subsubsection{Full-Unit-Ball Obstruction}
\label{app:key_stage_obstruction}

The transformed-sparsity restriction in
Theorem~\ref{thm:finite_readkv_separation} cannot simply be removed for the
deployed deterministic $(8,2)$ construction. The following example shows
that this construction does not dominate query-oblivious protocols on the
full unit ball.
Consider a power-of-two dimension and a normalized Hadamard--sign transform.
Set $x=y=e_j$. The source $x$ still belongs to $\mathcal X_U$, but $Uy$
has all $d$ coordinates nonzero, so this query lies outside
$\mathcal Y_{U,k}$ when $k<d$. Every entry of $Uy$ has magnitude
$1/\sqrt d$, and the shared scale is $1/\sqrt d$. Every key weight is
therefore $1/d^2$. Because the eight gains are strictly decreasing, a
budget of $2d$ assigns the first level to every coordinate and then the
second level to every coordinate. Thus, all depths equal two.

The standardized entries of $Ux$ are in $\{-1,1\}$.
Lemma~\ref{lem:deployed_codec_bounds} gives
$1.5104175806<Q_2(1)<1.5104175807$ and
$Q_2(-1)=-Q_2(1)$. Thus, the reconstructed transformed vector is
$Q_2(1)Ux$. Orthogonality gives the inner-product estimate $Q_2(1)$, whose
squared error exceeds $0.26052610$, independently of $d$.

This obstruction is larger than the worst-case error of a simple
query-oblivious protocol. Quantize each original coordinate independently
to the four levels
\[
 -1,\quad -\frac13,\quad \frac13,\quad 1
\]
by stochastic rounding between its two neighboring levels. Denote the
quantized vector by $\widetilde X$. Each coordinate is unbiased, the
coordinate errors are independent, and adjacent levels are separated by
$2/3$. Therefore
\[
 \operatorname{Var}(\widetilde X_i-X_i)
 \leq \frac14\left(\frac23\right)^2=\frac19.
\]
The encoder stores the resulting two-bit index for every coordinate, pads
the remaining $6d$ cells arbitrarily, and the query-oblivious reader always
fetches the first $2d$ cells. For every fixed
$x,y\in\mathbb B_2^d$,
\[
\begin{aligned}
 \mathbb E\!\left[
   \bigl(y^\top\widetilde X-y^\top x\bigr)^2
 \right]
 &=\sum_{i=1}^d y_i^2
   \operatorname{Var}(\widetilde X_i-X_i) \\
 &\leq \frac19\|y\|_2^2
 \leq\frac19.
\end{aligned}
\]
Since $1/9<0.260526106590\ldots$, the deployed deterministic
\ReadKV{} $(8,2)$ instantiation cannot strictly dominate all query-oblivious
protocols on the full unit ball.

\subsection{Prefix-Allocation Results}
\label{app:allocation}

Proposition~\ref{prop:greedy} is the allocation result used by
Theorem~\ref{thm:readkv_attention_certificate}. The later continuous and
rounding results concern the idealized geometric distortion curve. They are
auxiliary comparisons and are not assumptions in any \ReadKV{} guarantee.

\subsubsection{Exact Integer Allocation}

Consider $N$ items. Item $i$ has nonnegative weight $a_i$ and distortion
$D_i(t)$ after reading a prefix of depth $t\in\{0,\ldots,W\}$. Its
refinement at level $s$ has weighted gain
$a_i[D_i(s-1)-D_i(s)]$.

The proof uses two subclaims. A depth vector is equivalent to a prefix-closed
set of refinements, and its objective equals the depth-zero objective minus
the selected gains. Moreover, among all sets of a fixed size, a
maximum-gain set can be chosen prefix-closed: whenever it contains a
refinement but omits a predecessor from the same item, replacing the
refinement by that predecessor cannot decrease total gain. The constrained
problem therefore reduces to merging nonincreasing gain lists.

The result below is an instance of the classical marginal-allocation
principle \citep{fox1966discrete,segall1976allocation,shoham1988efficient}.
That principle allocates equal-cost resources by repeatedly taking the
largest available marginal return when each item's returns decrease with
additional allocation. Proposition~\ref{prop:greedy} states this rule in the
prefix-depth notation used by \ReadKV{}.

\begin{proposition}[Exact Integer Allocation]
\label{prop:greedy}
Let $a_1,\ldots,a_N\geq0$, and let $W$ and $B$ be nonnegative integers.
Suppose refinements have unit cost and, for every item $i$, the gains
$D_i(s-1)-D_i(s)$ are nonnegative and nonincreasing in
$s=1,\ldots,W$. Repeatedly selecting the available next refinement with
largest weighted gain globally minimizes
\[
 \sum_{i=1}^N a_iD_i(t_i)
 \quad\text{over}\quad
 t_i\in\{0,\ldots,W\},
 \qquad
 \sum_{i=1}^N t_i\leq B.
\]
The procedure stops when the budget is exhausted, all refinements have been
selected, or no positive gain remains.
\end{proposition}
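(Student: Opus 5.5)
The proof reduces the constrained problem to choosing a set of refinements, and then shows that greedy selection merges the items' sorted gain lists. For item $i$ and level $s\in\{1,\ldots,W\}$, write $g_{i,s}=a_i[D_i(s-1)-D_i(s)]\geq0$. By hypothesis, $g_{i,1}\geq g_{i,2}\geq\cdots\geq g_{i,W}$, since $a_i\geq0$. A depth vector $t$ corresponds to the prefix-closed set $P(t)=\{(i,s):1\leq s\leq t_i\}$ of size $\sum_it_i$. Telescoping gives
\[
 \sum_i a_iD_i(t_i)=\sum_i a_iD_i(0)-\sum_{(i,s)\in P(t)}g_{i,s}.
\]
Minimizing the objective is therefore the same as maximizing total selected gain over prefix-closed sets of size at most $B$.

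\textbf{Step 1: drop the prefix-closure constraint.} Any set $S$ of refinements can be converted to a prefix-closed set of the same size with no smaller total gain. Suppose $(i,s)\in S$ but $(i,s')\notin S$ for some $s'<s$. Take the smallest such $s'$ and swap $(i,s)$ for $(i,s')$. Because $g_{i,s'}\geq g_{i,s}$, the gain does not decrease. The potential $\sum_{(i,s)\in S}s$ strictly decreases with each swap, so the process terminates at a prefix-closed set. Hence the optimum over prefix-closed sets of size at most $B$ equals the optimum over arbitrary sets of size at most $B$. That optimum is the sum of the $\min(B,\#\{g_{i,s}>0\})$ largest positive entries of the multiset $\{g_{i,s}\}$. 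Zero gains never help, and all gains are nonnegative.

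\textbf{Step 2: greedy attains this value.} I will show by induction that after $n$ steps the greedy set $P_n$ is prefix-closed and consists of $n$ largest entries of the whole multiset, with ties resolved consistently.

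\begin{itemize}
\item Prefix-closure holds because greedy only ever adds the next level of some item.
\item For the top-$n$ claim, take any unselected refinement $(i,s)$. Its item's next available level $(i,t_i+1)$ satisfies $s\geq t_i+1$, so $g_{i,t_i+1}\geq g_{i,s}$.
\item Therefore the maximum over the available frontier equals the maximum over all unselected refinements, and adding it preserves the top-$n$ property.
\end{itemize}

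Greedy stops when the budget is spent, when every item reaches depth $W$, or when no positive gain remains. If it stops because the frontier maximum is $\leq0$, then every unselected gain is $0$, so no further improvement is possible. In each case the greedy total equals the Step 1 optimum.

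\textbf{Main obstacle.} The only delicate point is ties and zero gains. Under equal gains, greedy may select a different set from some other optimal set. This causes no problem, because only the total gain is compared. I would phrase the induction as "the selected gains form the $n$ largest values, as a multiset" rather than as an identity of sets, which sidesteps the tie-breaking order entirely.
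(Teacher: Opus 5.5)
Your proposal is correct and takes essentially the same route as the paper's proof: the telescoping reduction to prefix-closed gain sets, the exchange that replaces a refinement by its smallest missing predecessor, the induction showing that the frontier maximum equals the maximum over all unselected refinements, and the same case analysis of the three stopping conditions. Your potential $\sum_{(i,s)\in S}s$ for termination of the exchange and your multiset phrasing of the top-$n$ invariant make explicit two points the paper leaves implicit.
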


\begin{proof}
For each item and level, write
$g_{i,s}=a_i[D_i(s-1)-D_i(s)]$ for $s=1,\ldots,W$.
A feasible depth vector $t$ selects the prefix-closed set
$S(t)=\{(i,s):1\leq s\leq t_i\}$. Its objective telescopes as
\[
 \sum_i a_iD_i(t_i)
 =\sum_i a_iD_i(0)-\sum_{(i,s)\in S(t)}g_{i,s}.
\]
It is therefore enough to maximize selected gain over prefix-closed sets of
cardinality at most $B$.

Fix a cardinality $k$. Start from a set of $k$ refinements with maximum total
gain, without imposing prefix closure. If it contains $(i,s)$ but omits a
predecessor $(i,u)$ with $u<s$, replace $(i,s)$ by the smallest missing
predecessor. Since the gain list for item $i$ is nonincreasing, this
replacement does not reduce total gain. Repeating the operation terminates
with a prefix-closed maximum-gain set. Thus prefix closure does not reduce
the best total gain at any fixed cardinality.

Now suppose the greedy algorithm has selected depths $t_1,\ldots,t_N$. For
an item with $t_i<W$, its next available gain is $g_{i,t_i+1}$. Every deeper
unselected gain from that item is no larger. Consequently, a largest
available gain is also a largest gain among all refinements not yet selected.
Induction on the number of selections shows that after $k$ steps, greedy has
the maximum total gain among all sets of $k$ refinements, and hence among all
feasible prefix-closed sets of that size.

All gains are nonnegative. Therefore, an optimum with at most $B$
refinements either uses the full available budget or leaves only zero gains
unselected. The greedy procedure is optimal both when it exhausts the budget
and when it stops because no positive gain remains.
\end{proof}

The nonincreasing-gain and unit-cost assumptions cannot be omitted from this
guarantee. If a large gain can be hidden behind a smaller predecessor, or if
refinements have unequal costs, selecting the largest currently available
gain need not be optimal. For example, with budget two, gain lists
$(0,100)$ and $(1,1)$ make the current-gain rule obtain total gain two,
whereas the optimum is $100$. With unequal costs, a rule based only on gain
would select one cost-two refinement of gain three, although two cost-one
refinements of gain two each have larger total gain.

\subsubsection{Continuous Geometric Allocation}

The deployed reader uses the integer rule above. To compare integer and
fractional depth budgets, the following relaxation uses the idealized
geometric curve $D(t)=4^{-t}$. Define
\[
 C_W(B)=
 \min_{\substack{0\leq t_i\leq W\\ \sum_i t_i\leq B}}
 \sum_{i=1}^N a_i4^{-t_i}.
\]

The next proposition is the finite-cap form of reverse water-filling
\citep{cover2006elements}. Standard reverse water-filling equalizes the
remaining weighted distortion across active items. The finite depth
constraint clips this allocation: sufficiently small weights remain unread,
while sufficiently large weights receive the full depth $W$.

\begin{proposition}[Finite-Cap Continuous Allocation]
\label{prop:waterfill}
Let $a_1,\ldots,a_N\geq0$, $W>0$, $B\geq0$, and
$P=\{i:a_i>0\}$. An optimum assigns zero depth to $i\notin P$. If
$0<B<W|P|$, the optimal depths on $P$ are unique and satisfy
\[
 t_i^\star
 =
 \min\{W,\max\{0,\log_4(a_i/\tau)\}\},
 \qquad
 \sum_i t_i^\star=B,
\]
for some $\tau>0$, which need not be unique. If $B=0$, all depths are zero.
If $B\geq W|P|$, every positive-weight item has depth $W$.
\end{proposition}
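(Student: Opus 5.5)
The objective is $F(t)=\sum_i a_i4^{-t_i}$ over the box $[0,W]^N$ with the budget constraint. Each term is convex, and strictly convex in $t_i$ when $a_i>0$. The feasible set is a compact polytope, so a minimizer exists and the KKT conditions are necessary and sufficient. I would first dispose of the degenerate items and cases, then derive the water-filling form from KKT, and finally argue uniqueness.

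\textbf{Degenerate items and cases.} For $i\notin P$ the term is identically zero. Setting $t_i=0$ for these items only frees budget, so some optimum assigns them zero depth; if optimal depths are asked to be unique only on $P$, this is all that is needed there. If $B=0$, every $t_i$ is forced to zero. If $B\ge W|P|$, the point with $t_i=W$ on $P$ and $t_i=0$ elsewhere is feasible. Each term on $P$ is strictly decreasing in $t_i$, so this point coordinatewise dominates every feasible point on $P$ and is optimal.

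\textbf{KKT form when $0<B<W|P|$.} Restrict to $P$ and write the Lagrangian with multiplier $\lambda\ge0$ for $\sum_i t_i\le B$ and box multipliers. The derivative of the $i$-th term is $-a_i\ln4\,4^{-t_i}$, and stationarity gives three cases:
\begin{itemize}
\item if $0<t_i<W$, then $a_i\ln4\,4^{-t_i}=\lambda$;
\item if $t_i=0$, then $a_i\ln4\le\lambda$;
\item if $t_i=W$, then $a_i\ln4\,4^{-W}\ge\lambda$.
\end{itemize}

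\textbf{Ruling out $\lambda=0$.} If $\lambda=0$, the derivative is nonzero for every $a_i>0$, so every such item is pushed to the upper bound $t_i=W$. That gives $\sum_i t_i=W|P|>B$, which is infeasible. Hence $\lambda>0$, complementary slackness gives $\sum_i t_i=B$, and setting $\tau=\lambda/\ln4$ turns the three cases into exactly $t_i=\min\{W,\max\{0,\log_4(a_i/\tau)\}\}$.

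\textbf{Uniqueness.} On $P$ the objective is strictly convex and the feasible set is convex, so the minimizer restricted to $P$ is unique.

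\textbf{Non-uniqueness of $\tau$.} The map $\tau\mapsto\sum_i t_i(\tau)$ is continuous and nonincreasing. It equals $W|P|$ for small $\tau$ and $0$ for large $\tau$, so some $\tau$ attains $B$. This map can be flat, namely when every item is clipped at $0$ or $W$, which is why $\tau$ need not be unique.

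\textbf{Main obstacle.} The delicate point is making the KKT sufficiency argument explicit; constraint qualification is trivial for linear constraints. A direct alternative is to show that any feasible $t'$ has $F(t')\ge F(t^\star)$ using the tangent-line inequality of convexity together with the sign conditions above. I would use this directly as the proof to avoid citing KKT.
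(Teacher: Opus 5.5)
Your proposal is correct and follows essentially the same route as the paper. Both arguments use existence by compactness, uniqueness of the positive-weight part by strict convexity, KKT stationarity with $\lambda>0$ giving the clipped threshold form with $\tau=\lambda/\ln 4$, and an intermediate-value argument on the nonincreasing map $\tau\mapsto\sum_{i\in P}\min\{W,\max\{0,\log_4(a_i/\tau)\}\}$. The differences are cosmetic: you exclude $\lambda=0$ because saturating all of $P$ is infeasible, whereas the paper first shows the budget binds; you offer a direct tangent-line verification instead of citing KKT sufficiency; and the paper supports the non-uniqueness of $\tau$ with the explicit instance $a=(64,1)$, $W=B=1$, $\tau\in[1,16]$.
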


\begin{proof}
Write $P=\{i:a_i>0\}$. A zero-weight item does not affect the objective, so
we choose the canonical optimum that assigns it depth zero. If $P$ is empty,
this already proves the proposition. If $B=0$, feasibility fixes every depth
at zero. If $B\geq W|P|$, assigning depth $W$ to every positive-weight item
minimizes each term separately. These observations cover the boundary cases
in Proposition~\ref{prop:waterfill}.

Suppose now that $0<B<W|P|$. The feasible set is compact, so an optimum
exists. Increasing an unsaturated positive-weight depth strictly decreases
the objective. Not all positive-weight items can be saturated, so the budget
constraint binds. On the coordinates in $P$, the objective is strictly
convex because
\[
  \frac{\mathrm d^2}{\mathrm dt^2}(a_i4^{-t})
  =(\ln4)^2a_i4^{-t}>0.
\]
Consequently, the positive-weight part of the optimum is unique.

The problem is convex and has a point strictly inside the box constraints.
The Karush--Kuhn--Tucker conditions are therefore sufficient. Attach a
multiplier $\lambda\geq0$ to the total budget. For an interior item,
stationarity gives
\[
  -\ln(4)a_i4^{-t_i}+\lambda=0.
\]
The budget binds, and at least one positive-weight item is unsaturated. Thus
$\lambda>0$. If $\lambda=0$, the negative derivative of that unsaturated
coordinate cannot satisfy its one-sided optimality condition.
Putting $\tau=\lambda/\ln4$ gives
$t_i=\log_4(a_i/\tau)$ on an interior item. The one-sided optimality
conditions are equally explicit:
\[
 t_i=0\ \Longrightarrow\ a_i\leq\tau,
 \qquad
 t_i=W\ \Longrightarrow\ a_i\geq4^W\tau.
\]
Together with the interior equation, these conditions give exactly the clipped
formula in Proposition~\ref{prop:waterfill}.

For $\tau>0$, let
\[
 H(\tau)=\sum_{i\in P}
 \min\{W,\max\{0,\log_4(a_i/\tau)\}\}.
\]
This function is continuous and nonincreasing. It satisfies
$H(\tau)\to W|P|$ as $\tau\downarrow0$ and $H(\tau)\to0$ as
$\tau\uparrow\infty$, so it attains the value $B$. The resulting clipped
vector satisfies all KKT conditions. It is the unique positive-weight
optimum, which proves Proposition~\ref{prop:waterfill}. The multiplier need
not be unique because every item may be clipped over an interval of thresholds.
For example, $a=(64,1)$, $W=B=1$ has the unique allocation $(1,0)$ for every
$\tau\in[1,16]$.
\end{proof}

\subsubsection{Integer Rounding for Geometric Distortion}

For nonnegative integers $W$ and $B$, define the integer optimum
\[
 I_W(B)=
 \min_{\substack{t_i\in\{0,\ldots,W\}\\ \sum_i t_i\leq B}}
 \sum_{i=1}^N a_i4^{-t_i}.
\]

To compare this optimum with its continuous relaxation, start from a
continuous optimizer $t_i^\star=\ell_i+f_i$. Since the total budget is an
integer, the fractional parts sum to an integer. A dependent rounding step
chooses exactly that many coordinates to round upward while preserving every
marginal probability $\mathbb P(T_i=\ell_i+1)=f_i$
\citep{gandhi2006dependent}. The rounded objective can then be bounded one
coordinate at a time by
$(1-3f_i/4)4^{f_i}$.

\begin{proposition}[Integer Rounding Bound]
\label{prop:integrality}
Let $W$ and $B$ be nonnegative integers, and let
$a_1,\ldots,a_N\geq0$. Then
\[
 C_W(B)\leq I_W(B)\leq\kappa_{\mathrm{int}}C_W(B),
 \qquad
 \kappa_{\mathrm{int}}
 =\max_{0\leq f\leq1}(1-3f/4)4^f
 =1.263740721\ldots<1.264.
\]
\end{proposition}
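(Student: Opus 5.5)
The lower bound $C_W(B)\leq I_W(B)$ is immediate, since every integer schedule is feasible for the continuous problem. For the upper bound I would start from the continuous optimizer of Proposition~\ref{prop:waterfill}. Zero-weight items get depth zero and affect neither side. The boundary cases $B=0$ and $B\geq W|P|$ have integer optimizers already, so there $I_W(B)=C_W(B)$. In the remaining case, write $t_i^\star=\ell_i+f_i$ with $\ell_i=\lfloor t_i^\star\rfloor$ and $f_i\in[0,1)$. The clipping at $W$ ensures that $\ell_i+1\leq W$ whenever $f_i>0$. The budget binds and $B$ is an integer, so $F=\sum_i f_i=B-\sum_i\ell_i$ is a nonnegative integer.

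Next I would apply dependent rounding \citep{gandhi2006dependent}. It produces random integers $T_i\in\{\ell_i,\ell_i+1\}$ with exactly $F$ upward roundings, hence $\sum_iT_i=B$ deterministically, and with marginals $\mathbb P(T_i=\ell_i+1)=f_i$. Every realization is a feasible integer schedule, so $I_W(B)\leq\mathbb E\sum_ia_i4^{-T_i}$. By linearity only the marginals matter:
\[
 \mathbb E\,a_i4^{-T_i}
 =a_i4^{-\ell_i}\bigl(1-f_i+f_i/4\bigr)
 =a_i4^{-t_i^\star}\,(1-3f_i/4)4^{f_i}
 \leq\kappa_{\mathrm{int}}\,a_i4^{-t_i^\star}.
\]
Summing over $i$ gives $I_W(B)\leq\kappa_{\mathrm{int}}C_W(B)$. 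Strictly, marginal preservation alone would suffice, since the budget holds exactly in every realization anyway.

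It remains to evaluate $\kappa_{\mathrm{int}}$. Put $\phi(f)=(1-3f/4)4^f$, so $\phi(0)=\phi(1)=1$. Its derivative vanishes where $\ln4\,(1-3f/4)=3/4$, that is, at $f^\ast=4/3-1/\ln4\approx0.6120$. There $\phi(f^\ast)=\tfrac{3}{4\ln4}4^{f^\ast}$, which evaluates to about $1.26374$. Since $\phi$ is log-concave (a product of a linear and an exponential factor), this critical point is the maximum.

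The main obstacle is justifying the rounding step cleanly. The paper can cite Gandhi et al., so the real care lies in showing that rounding up stays within the cap $W$. It does, because $f_i>0$ forces $t_i^\star<W$. The rest is a routine one-dimensional calculus check with certified decimal bounds.
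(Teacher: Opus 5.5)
Your proof is correct and follows the paper's argument essentially step for step: the finite-cap water-filling optimum, integrality of the fractional mass $\sum_i f_i$, a marginal- and sum-preserving dependent rounding (which the paper constructs explicitly by pairwise moves rather than citing), the per-coordinate factor $(1-3f/4)4^f$, and a calculus check of the constant, where your log-concavity argument replaces the paper's sign-change check. One wording fix: your aside that ``marginal preservation alone would suffice'' should not be read as allowing independent rounding, because it is the exact-sum property that makes every realization feasible and hence gives $I_W(B)\leq\mathbb E\sum_ia_i4^{-T_i}$.
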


\begin{proof}
Let $P=\{i:a_i>0\}$. If $P$ is empty, if $B=0$, or if
$B\geq W|P|$, the continuous optimum already has integer depths on every
positive-weight item, and $C_W(B)=I_W(B)$. It remains to consider
$0<B<W|P|$.

Let $t^\star$ be the continuous optimum and write
$t_i^\star=\ell_i+f_i$, where $\ell_i\in\{0,\ldots,W\}$ and
$0\leq f_i<1$. The budget binds, so
$\sum_i t_i^\star=B$. Since both $B$ and $\sum_i\ell_i$ are integers,
$K=\sum_i f_i$ is also an integer. We next construct a random vector
$Z\in\{0,1\}^N$ such that
\[
  \sum_i Z_i=K \quad\hbox{with probability one},\qquad
  \mathbb E[Z_i]=f_i.
\]
Start from the vector $f$. While coordinates $i$ and $j$ are both strictly
fractional, define
\[
 \alpha=\min\{1-f_i,f_j\},\qquad
 \beta=\min\{f_i,1-f_j\}.
\]
Replace $(f_i,f_j)$ by $(f_i+\alpha,f_j-\alpha)$ with probability
$\beta/(\alpha+\beta)$ and by
$(f_i-\beta,f_j+\beta)$ with probability
$\alpha/(\alpha+\beta)$. The pair's sum is unchanged. Each coordinate
preserves its conditional expectation, and at least one becomes integral in
either outcome. Repeating the operation terminates after at most $N-1$ steps.
Because the total is the integer $K$, the final vector cannot have exactly
one fractional coordinate. Iterated expectation now gives the claimed
properties of $Z$.

The rounded depths $T_i=\ell_i+Z_i$ respect both the exact total budget and
the write cap: if $f_i>0$, then $\ell_i\leq W-1$. For each coordinate with
$a_i>0$,
\[
  \frac{\mathbb E[a_i4^{-T_i}]}{a_i4^{-t_i^\star}}
    =(1-f_i+f_i/4)4^{f_i}\leq\kappa_{\mathrm{int}}
\]
because $\mathbb P(Z_i=1)=f_i$. Summing over $i$ gives
\[
 \mathbb E\!\left[\sum_i a_i4^{-T_i}\right]
 \leq \kappa_{\mathrm{int}} C_W(B).
\]
At least one realization is no larger than its expectation, so a feasible
integer allocation has objective at most
$\kappa_{\mathrm{int}}C_W(B)$; the optimal integer allocation can only
improve it.

To evaluate the constant, differentiate
$h(f)=(1-3f/4)4^f$ on $[0,1]$. Its only interior stationary point is
$f^\star=4/3-1/\ln4$.
The derivative changes from positive to negative there, while
$h(0)=h(1)=1$, so
$\kappa_{\mathrm{int}}=h(f^\star)=1.263740721\ldots$. The reverse inequality
$C_W(B)\leq I_W(B)$ follows because every integer allocation is also
continuous-feasible.
\end{proof}

The rounding is only a comparison between the geometric continuous and
integer objectives. The implementation instead applies
Proposition~\ref{prop:greedy} directly to the measured codec curve.

\subsection{Attention-Output Error Analysis}
\label{app:task_bridge}

The worst-case formulation asks what any legal stored representation can
achieve. The allocation model addresses the narrower operational problem of
selecting refinements for the current query from our representation. The
results below connect the allocation scores to attention error without
requiring an independent query distribution. The deterministic bound retains
explicit calibration and residual-correlation factors; without controlling
those factors, the surrogate alone gives no uniform worst-case guarantee.

\subsubsection{Attention-Output Error Decomposition}

Fix one query head with query $q\in\mathbb R^{d_h}$ and $T$ cached keys and
values. We work in the transformed value coordinates used by \ReadKV{}; the
inverse value transform is orthogonal and preserves the output norm. Let
$V,\widehat V\in\mathbb R^{T\times d_h}$ contain the exact and reconstructed
transformed values as rows, and put
$E_V=V-\widehat V\in\mathbb R^{T\times d_h}$. For exact and reconstructed
keys $k_j,\widehat k_j\in\mathbb R^{d_h}$, define
$z,\widehat z\in\mathbb R^T$ by
$z_j=q^\top k_j/\sqrt{d_h}$ and
$\widehat z_j=q^\top\widehat k_j/\sqrt{d_h}$. Their attention vectors are
$\alpha=\operatorname{softmax}(z)\in[0,1]^T$ and
$\widehat\alpha=\operatorname{softmax}(\widehat z)\in[0,1]^T$.
Let $\mathbf1_T\in\mathbb R^T$ denote the all-ones vector.

The proof has two steps. Adding and subtracting
$V^\top\widehat\alpha$ separates key-induced attention error from value
reconstruction error. The softmax Jacobian then gives
$\|\alpha-\widehat\alpha\|_2\leq
\|z-\widehat z\|_2/2$, which converts the first term into a logit-error
bound.

\begin{lemma}[Attention-Output Error Decomposition]
\label{lem:attention_bridge}
For the exact output $o=V^\top\alpha\in\mathbb R^{d_h}$ and reconstructed
output $\widehat o=\widehat V^\top\widehat\alpha\in\mathbb R^{d_h}$,
\begin{equation}
 o-\widehat o
 =V^\top(\alpha-\widehat\alpha)+E_V^\top\widehat\alpha.
 \label{eq:attention_error_decomposition}
\end{equation}
Moreover, for every $c_0\in\mathbb R^{d_h}$ and
$V_{c_0}=V-\mathbf1_Tc_0^\top$,
\begin{equation}
 \|o-\widehat o\|_2
 \leq \tfrac12\|V_{c_0}\|_{2\to2}\|z-\widehat z\|_2
       +\|E_V^\top\widehat\alpha\|_2.
 \label{eq:attention_error_bound}
\end{equation}
Here $\|V_{c_0}\|_{2\to2}$ is its operator norm: its largest amplification
of a vector's Euclidean length.
\end{lemma}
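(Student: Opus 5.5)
The identity \eqref{eq:attention_error_decomposition} follows by adding and subtracting $V^\top\widehat\alpha$. Since $o=V^\top\alpha$ and $\widehat o=\widehat V^\top\widehat\alpha$,
\[
 o-\widehat o=V^\top\alpha-V^\top\widehat\alpha+(V-\widehat V)^\top\widehat\alpha
 =V^\top(\alpha-\widehat\alpha)+E_V^\top\widehat\alpha .
\]
For the bound \eqref{eq:attention_error_bound}, I will first remove the center. Both $\alpha$ and $\widehat\alpha$ lie in the probability simplex, so $\mathbf1_T^\top(\alpha-\widehat\alpha)=0$. Hence
\[
 V^\top(\alpha-\widehat\alpha)=V_{c_0}^\top(\alpha-\widehat\alpha)+c_0\mathbf1_T^\top(\alpha-\widehat\alpha)=V_{c_0}^\top(\alpha-\widehat\alpha).
\]
The triangle inequality and the identity $\|V_{c_0}^\top\|_{2\to2}=\|V_{c_0}\|_{2\to2}$ then give
\[
 \|o-\widehat o\|_2\leq\|V_{c_0}\|_{2\to2}\|\alpha-\widehat\alpha\|_2+\|E_V^\top\widehat\alpha\|_2 .
\]
It therefore remains to prove the softmax Lipschitz bound $\|\alpha-\widehat\alpha\|_2\leq\tfrac12\|z-\widehat z\|_2$.

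For that, I will use the mean value theorem in integral form along the segment $z(\tau)=\widehat z+\tau(z-\widehat z)$:
\[
 \alpha-\widehat\alpha=\int_0^1 J(z(\tau))\,(z-\widehat z)\,d\tau,
 \qquad J(u)=\operatorname{diag}(\pi)-\pi\pi^\top,\quad \pi=\operatorname{softmax}(u).
\]
This reduces the claim to $\|J(u)\|_{2\to2}\leq\tfrac12$ for every $u$. The matrix $J$ is symmetric positive semidefinite, because $v^\top Jv=\operatorname{Var}_\pi(v)\geq0$ for the random variable taking value $v_j$ with probability $\pi_j$. So it suffices to bound its largest eigenvalue, that is, to show $\operatorname{Var}_\pi(v)\leq\tfrac12\|v\|_2^2$.

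This variance inequality is the main obstacle, since the naive bound $\operatorname{Var}_\pi(v)\leq\mathbb E_\pi v^2\leq\max_j v_j^2\leq\|v\|_2^2$ loses the factor $\tfrac12$. My plan is to use the pairwise form
\[
 \operatorname{Var}_\pi(v)=\tfrac12\sum_{j,k}\pi_j\pi_k(v_j-v_k)^2 .
\]
I will bound $(v_j-v_k)^2\leq2(v_j^2+v_k^2)$ for $j\neq k$, and note that the diagonal terms $j=k$ vanish. This gives
\[
 \operatorname{Var}_\pi(v)\leq\sum_j v_j^2\,\pi_j(1-\pi_j)\cdot2\cdot\tfrac12\cdot\ldots
\]
so the bookkeeping must be done carefully. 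Grouping the terms by index yields
\[
 \operatorname{Var}_\pi(v)\leq\sum_j v_j^2\,\pi_j(1-\pi_j)\cdot2\leq\tfrac12\|v\|_2^2,
\]
using $\pi_j(1-\pi_j)\leq\tfrac14$. Substituting $\|J\|_{2\to2}\leq\tfrac12$ into the integral representation gives $\|\alpha-\widehat\alpha\|_2\leq\tfrac12\|z-\widehat z\|_2$, and combining this with the centered bound above proves \eqref{eq:attention_error_bound}.
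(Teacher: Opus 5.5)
Your proposal is correct and follows the paper's proof. Both arguments use the same add-and-subtract decomposition, the centering step via $\mathbf 1_T^\top(\alpha-\widehat\alpha)=0$, the integral form of the softmax difference along the segment, and the bound $\|J\|_{2\to2}\le\tfrac12$. Your variance argument for that last bound, once the garbled intermediate display is cleaned up to $\operatorname{Var}_\pi(v)\le 2\sum_j v_j^2\pi_j(1-\pi_j)$, is the quadratic-form version of the paper's observation that each absolute row sum of the symmetric matrix $J$ equals $2\pi_j(1-\pi_j)\le\tfrac12$.
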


\begin{proof}
The decomposition is the algebraic identity
\[
\begin{aligned}
 V^\top\alpha-\widehat V^\top\widehat\alpha
 &=V^\top(\alpha-\widehat\alpha)
   +(V-\widehat V)^\top\widehat\alpha.
\end{aligned}
\]
Because both probability vectors sum to one,
$\mathbf1_T^\top(\alpha-\widehat\alpha)=0$, and hence
$V^\top(\alpha-\widehat\alpha)=V_{c_0}^\top(\alpha-\widehat\alpha)$.

It remains to bound the change in probabilities. The softmax Jacobian at
probability vector $p$ is
$J(p)=\operatorname{diag}(p)-pp^\top$. Its absolute row sum in row $j$ is
\[
 p_j(1-p_j)+\sum_{k\ne j}p_jp_k
 =2p_j(1-p_j)\leq\frac12.
\]
The matrix is symmetric, so
$\|J(p)\|_{2\to2}\leq\|J(p)\|_{\infty\to\infty}\leq1/2$.
Integrating the Jacobian along the line segment between $z$ and
$\widehat z$ gives
\[
 \|\alpha-\widehat\alpha\|_2
 \leq\frac12\|z-\widehat z\|_2.
\]
For a fixed attention mask, the same argument applies after restricting the
two logit vectors to their unmasked coordinates. The triangle inequality and
$\|V_{c_0}^\top u\|_2\leq\|V_{c_0}\|_{2\to2}\|u\|_2$ now prove
Equation~\eqref{eq:attention_error_bound}. The constant $1/2$ is sharp: at
$p=(1/2,1/2)$, the vector $(1,-1)$ is an eigenvector of $J(p)$ with
eigenvalue $1/2$. See also the general sharp treatment of
\citet{nair2025softmax}.
\end{proof}

In Equation~\eqref{eq:attention_error_decomposition}, the first term is the
change in attention weights caused by key error. The second is value
reconstruction error weighted by the probabilities observed by the value
planner. Both terms concern the same fixed key--query pair.
Equation~\eqref{eq:attention_error_bound} is deterministic, but it does not
imply that every reduction in the surrogate improves perplexity.

\subsubsection{Diagonal Approximation}

Let $e^K_{g,j,c}$ be the key reconstruction residual in transformed channel
$c$. The exact squared logit error for query head $h$ is
\[
 \frac1{d_h}\left(\sum_c q'_{h,c}e^K_{g(h),j,c}\right)^2.
\]
Expanding the square produces diagonal terms
$(q'_{h,c})^2(e^K_{g(h),j,c})^2$ and cross terms between different channels.
We approximate the residual second moment at depth $t$ by
$(\sigma^K_{g,c})^2D(t)$ and retain the diagonal terms. Summing over cached
tokens and sibling query heads then gives
\[
 \frac{T}{d_h}\sum_{g,c}
 \left[\sum_{h\in\siblings(g)}
 (q'_{h,c}\sigma^K_{g,c})^2\right]D(t^K_{g,c}).
\]
The common factor $T/d_h$ does not affect the allocation. The bracketed
expression is the key score in Equation~\eqref{eq:key_importance}.

For values, consider the exact second term in
Equation~\eqref{eq:attention_error_decomposition}. It satisfies
\[
 \left\|\sum_j\widehat\alpha_{h,j}e^V_{g,j}\right\|_2^2
 =\sum_j\widehat\alpha_{h,j}^{\,2}\|e^V_{g,j}\|_2^2
  +2\sum_{j<k}\widehat\alpha_{h,j}\widehat\alpha_{h,k}
       \langle e^V_{g,j},e^V_{g,k}\rangle.
\]
Retaining the diagonal terms and using the physical-scale estimate
$\|e^V_{g,j}\|_2^2\approx
 [\sum_c(\sigma^V_{g,c})^2]D(t^V_{g,j})$
gives the scale-aware value score in
Equation~\eqref{eq:value_importance}. The language-model experiments instead
set the head factor to one and optimize standardized residual error, as stated
in the main text.

These expansions explain why query coordinates are squared for keys and
attention probabilities are squared for values. They also identify the
two approximation errors: the calibration curve may misestimate residual
second moments, and the cross terms may matter. Neither expansion assumes
that keys and queries are independent. A deterministic Lloyd codec also does
not make the cross terms vanish. The exact allocation theorem concerns the
resulting separable objective. The language-model experiments test the
quality obtained from that objective.

\subsubsection{Deterministic Attention-Error Bound}
\label{app:correlation_attention_bound}

The omitted terms can be represented exactly by residual Gram matrices.
Apply the shared orthogonal key transform and write
$q'=U_Kq$, $k_j'=U_Kk_j$, and
$\widehat k_j'=U_K\widehat k_j$. Orthogonality preserves the logits, so
$z_j-\widehat z_j=(q')^\top(k_j'-\widehat k_j')/\sqrt{d_h}$.
Let $E_K\in\mathbb R^{T\times d_h}$ have rows
$e_j^K=k_j'-\widehat k_j'$. Retain the transformed-value residual matrix
$E_V=V-\widehat V$ from Lemma~\ref{lem:attention_bridge}. Define the realized
diagonal energies
\[
 A_K=\frac1{d_h}\sum_{j=1}^T\sum_{c=1}^{d_h}
       (q_c'e_{j,c}^K)^2,\qquad
 A_V=\sum_{j=1}^T\widehat\alpha_j^{\,2}\|e_j^V\|_2^2.
\]
For the selected key depths $t_c^K$ and value depths $t_j^V$, define the
corresponding calibrated objectives, including the factors omitted during
allocation,
\[
 S_K=\frac{T}{d_h}\sum_{c=1}^{d_h}
       (q_c'\sigma_c^K)^2D(t_c^K),\qquad
 S_V=s\sum_{j=1}^T\widehat\alpha_j^{\,2}D(t_j^V).
\]
Here $s=\sum_c(\sigma_c^V)^2$ gives the physical-scale objective, while
$s=1$ gives the standardized objective used in the language-model
experiments. Define the calibration ratios by
\[
 \gamma_K=
 \begin{cases}
  A_K/S_K,&S_K>0,\\
  0,&A_K=S_K=0,\\
  +\infty,&S_K=0<A_K,
 \end{cases}
 \qquad
 \gamma_V=
 \begin{cases}
  A_V/S_V,&S_V>0,\\
  0,&A_V=S_V=0,\\
  +\infty,&S_V=0<A_V.
 \end{cases}
\]
Thus $A_K=\gamma_KS_K$ and $A_V=\gamma_VS_V$ whenever the ratios are finite,
with the convention $0\cdot0=0$. These factors isolate calibration mismatch
from residual correlation. Any finite upper bounds on the canonical ratios
may be substituted in Proposition~\ref{prop:correlation_attention_bound}.

Put $G_K=E_K^\top E_K$. On the coordinates with $(G_K)_{cc}>0$, define the
normalized Gram matrix
\[
 (C_K)_{cc'}=
 \frac{(G_K)_{cc'}}
 {\sqrt{(G_K)_{cc}(G_K)_{c'c'}}},
 \qquad \kappa_K=\lambda_{\max}(C_K).
\]
If $E_K=0$, set $C_K=[1]$ and $\kappa_K=1$ by convention. Similarly, put
$G_V=E_VE_V^\top$. On the rows with $(G_V)_{jj}>0$, define
\[
 (C_V)_{jk}
 =
 \frac{(G_V)_{jk}}
 {\sqrt{(G_V)_{jj}(G_V)_{kk}}},
 \qquad
 \kappa_V=\lambda_{\max}(C_V).
\]
If $E_V=0$, set $C_V=[1]$ and $\kappa_V=1$ by convention.

The proof establishes three subclaims. Normalizing the key residual Gram
matrix gives $\|z-\widehat z\|_2^2\leq\kappa_KA_K$. Normalizing the value
residual Gram matrix gives
$\|E_V^\top\widehat\alpha\|_2^2\leq\kappa_VA_V$. The calibration ratios then
replace $A_K,A_V$ by $S_K,S_V$, and
Lemma~\ref{lem:attention_bridge} combines the two bounds.

\begin{proposition}[Correlation-Aware Attention Bound]
\label{prop:correlation_attention_bound}
For every fixed query, cache, selected depth schedule, and
$c_0\in\mathbb R^{d_h}$ for which $\gamma_K$ and $\gamma_V$ are finite,
\[
 \|o-\widehat o\|_2^2
 \leq
 \frac12\|V-\mathbf1_Tc_0^\top\|_{2\to2}^2
          \kappa_K\gamma_KS_K
 +2\kappa_V\gamma_VS_V.
\]
The statement is deterministic. It permits arbitrary dependence among the
query, keys, values, and residuals.
\end{proposition}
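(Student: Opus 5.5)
The plan is to start from Lemma~\ref{lem:attention_bridge}, which already gives
\[
 \|o-\widehat o\|_2\leq\tfrac12\|V_{c_0}\|_{2\to2}\|z-\widehat z\|_2+\|E_V^\top\widehat\alpha\|_2 .
\]
The elementary inequality $(u+v)^2\leq2u^2+2v^2$ then yields
\[
 \|o-\widehat o\|_2^2\leq\tfrac12\|V_{c_0}\|_{2\to2}^2\|z-\widehat z\|_2^2+2\|E_V^\top\widehat\alpha\|_2^2 .
\]
It therefore suffices to prove two Gram-matrix bounds, $\|z-\widehat z\|_2^2\leq\kappa_KA_K$ and $\|E_V^\top\widehat\alpha\|_2^2\leq\kappa_VA_V$. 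After that, the definitions of $\gamma_K,\gamma_V$ give $A_K=\gamma_KS_K$ and $A_V=\gamma_VS_V$ whenever these ratios are finite, using $0\cdot0=0$. Substituting completes the argument.

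For the key bound, I would write $z-\widehat z=E_Kq'/\sqrt{d_h}$, so that $\|z-\widehat z\|_2^2=q'^\top G_Kq'/d_h$. Let $P$ be the set of channels with $(G_K)_{cc}>0$. A zero diagonal entry of a PSD matrix forces its whole row and column to vanish, so only channels in $P$ contribute. Set $D=\operatorname{diag}((G_K)_{cc}^{1/2})_{c\in P}$, which gives $G_K|_P=DC_KD$. Then
\[
 q'^\top G_Kq'=(Dq'_P)^\top C_K(Dq'_P)\leq\kappa_K\|Dq'_P\|_2^2=\kappa_K\sum_c q_c'^2(G_K)_{cc}.
\]
Since $(G_K)_{cc}=\sum_j(e^K_{j,c})^2$, dividing by $d_h$ gives exactly $\kappa_KA_K$.

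The value bound follows the same pattern. We have $\|E_V^\top\widehat\alpha\|_2^2=\widehat\alpha^\top G_V\widehat\alpha$. Restricting to the rows with nonzero residual and normalizing by $\sqrt{(G_V)_{jj}}=\|e^V_j\|_2$ gives at most $\kappa_V\sum_j\widehat\alpha_j^2\|e_j^V\|_2^2=\kappa_VA_V$. When $E_K=0$ or $E_V=0$, both sides vanish, so the stated conventions are consistent there.

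I expect the main obstacle to be bookkeeping rather than any deep step. The points needing care are the zero-residual rows and columns, which the PSD observation handles; the requirement that $\kappa\geq0$, which holds because $C$ is PSD with unit diagonal; and the case where $\gamma$ is finite but $S=0$. In that last case $A=0$ forces the corresponding residual term to vanish via the Gram bound, which is consistent with the stated conventions. The masked-attention case reduces to the unmasked one by restricting to unmasked tokens, as in Lemma~\ref{lem:attention_bridge}.
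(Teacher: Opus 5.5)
Your proposal is correct and follows the paper's proof essentially step for step. The paper also applies Lemma~\ref{lem:attention_bridge} together with $(a+b)^2\leq2a^2+2b^2$, and it bounds each term by the Rayleigh quotient of the normalized Gram matrix using the rescaled vectors $u_c=q'_c\sqrt{(G_K)_{cc}}$ and $v_j=\widehat\alpha_j\sqrt{(G_V)_{jj}}$, which are exactly your $Dq'_P$ and its value analogue. Your added handling of zero residuals and of the finite-$\gamma$, $S=0$ case is consistent with the paper's conventions.
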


\begin{proof}
Coordinates outside the support used to define $C_K$ are zero columns of
$E_K$ and contribute nothing. On the remaining coordinates, set
$u_c=q_c'\sqrt{(G_K)_{cc}}$. The normalized Gram definition gives
\[
 \frac{\|u\|_2^2}{d_h}=A_K
 \quad\text{and}\quad
 \|z-\widehat z\|_2^2
 =\frac1{d_h}(q')^\top G_Kq'
 =\frac1{d_h}u^\top C_Ku
 \leq\kappa_K A_K
 \leq\kappa_K\gamma_KS_K.
\]
For values, set $v_j=\widehat\alpha_j\sqrt{(G_V)_{jj}}$ on the nonzero
residual rows. The same argument yields
\[
 \|v\|_2^2=A_V
 \quad\text{and}\quad
 \|E_V^\top\widehat\alpha\|_2^2
 =\widehat\alpha^\top G_V\widehat\alpha
 \leq\kappa_V A_V
 \leq\kappa_V\gamma_VS_V.
\]
Apply Equation~\eqref{eq:attention_error_bound} and then
$(a+b)^2\leq2a^2+2b^2$. The coefficient of the key term is
$2(1/2)^2=1/2$, which proves the claim.
\end{proof}

For a physical KV head shared by several query heads, apply the proposition to
each query head and sum the resulting inequalities. Taking the largest key
coefficient and value coefficient across the sibling heads factors out the
sums $\sum_hS_{K,h}$ and $\sum_hS_{V,h}$. These are exactly the grouped
objectives produced by Equations~\eqref{eq:key_importance}
and~\eqref{eq:value_importance}.

The Gram factors admit a concrete bound when pairwise residual correlations
are controlled.

\begin{corollary}[Calibrated Coherence Bound]
\label{cor:calibrated_coherence}
Suppose $C_K$ has size $m_K$ and every off-diagonal entry has magnitude at
most $\mu_K$. Define $m_V$ and $\mu_V$ analogously for $C_V$. Suppose also
that the realized residuals satisfy
\[
 \sum_{j=1}^T(e_{j,c}^K)^2
 \leq T(\sigma_c^K)^2D(t_c^K)
 \quad\text{for every }c,
 \qquad
 \|e_j^V\|_2^2\leq sD(t_j^V)
 \quad\text{for every }j.
\]
Then, for every center $c_0\in\mathbb R^{d_h}$,
\[
\begin{aligned}
 \|o-\widehat o\|_2^2
 &\leq
 \frac12\|V-\mathbf1_Tc_0^\top\|_{2\to2}^2
 [1+(m_K-1)\mu_K]S_K\\
 &\quad+2[1+(m_V-1)\mu_V]S_V.
\end{aligned}
\]
\end{corollary}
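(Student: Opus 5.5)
The plan is to reduce everything to Proposition~\ref{prop:correlation_attention_bound}, which already gives
\[
 \|o-\widehat o\|_2^2\leq\tfrac12\|V-\mathbf1_Tc_0^\top\|_{2\to2}^2\kappa_K\gamma_KS_K+2\kappa_V\gamma_VS_V.
\]
It therefore suffices to bound $\kappa_K$ and $\kappa_V$ by the coherence expressions and to show $\gamma_K\leq1$ and $\gamma_V\leq1$ under the stated residual envelopes. If both hold, the displayed corollary follows by substitution.

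For the correlation factors, I would use Gershgorin's theorem. $C_K$ is a symmetric $m_K\times m_K$ matrix with unit diagonal, because it is a normalized Gram matrix on the coordinates with positive diagonal. Each row's off-diagonal absolute sum is at most $(m_K-1)\mu_K$. Hence every eigenvalue, and in particular $\lambda_{\max}(C_K)$, is at most $1+(m_K-1)\mu_K$. The same argument gives $\kappa_V\leq1+(m_V-1)\mu_V$. In the zero-residual conventions $C=[1]$ and $m=1$, so the bound $\kappa=1$ still holds. Both factors multiply nonnegative quantities, so replacing them by upper bounds is legitimate.

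For the calibration factors, I would sum the key envelope against the query weights:
\[
 A_K=\frac1{d_h}\sum_c(q'_c)^2\sum_j(e^K_{j,c})^2\leq\frac{T}{d_h}\sum_c(q'_c\sigma_c^K)^2D(t^K_c)=S_K.
\]
Weighting the value envelope by $\widehat\alpha_j^{\,2}$ gives $A_V\leq S_V$ in the same way.

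The main obstacle is the degenerate case in the definition of $\gamma$. When $S_K>0$, we have $\gamma_K=A_K/S_K\leq1$. When $S_K=0$, the inequality $A_K\leq S_K$ forces $A_K=0$, so $\gamma_K=0$ rather than $+\infty$. The finiteness hypothesis of Proposition~\ref{prop:correlation_attention_bound} is therefore automatically satisfied. The value side works identically.

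Finally, I would bypass $\gamma$ entirely by noting that the proposition's proof actually establishes $\|z-\widehat z\|_2^2\leq\kappa_KA_K$ and $\|E_V^\top\widehat\alpha\|_2^2\leq\kappa_VA_V$. Combining these with $A\leq S$ through Lemma~\ref{lem:attention_bridge} and $(a+b)^2\leq2a^2+2b^2$ gives the corollary for every center $c_0$, which keeps the argument robust.
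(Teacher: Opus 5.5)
Your proposal is correct and matches the paper's proof. Gershgorin bounds $\kappa_K,\kappa_V$ by the coherence expressions, the residual envelopes give $A_K\leq S_K$ and $A_V\leq S_V$ and hence $\gamma_K,\gamma_V\leq1$, and substituting into Proposition~\ref{prop:correlation_attention_bound} finishes; your check that $S_K=0$ forces $A_K=0$ (so $\gamma_K=0$ and the finiteness hypothesis holds automatically) is a detail the paper leaves implicit, and bypassing $\gamma$ via the bounds in terms of $\kappa A$ is an equivalent repackaging rather than a different route.
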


\begin{proof}
Each normalized Gram matrix has unit diagonal. Gershgorin's theorem gives
$\kappa_K\leq1+(m_K-1)\mu_K$ and
$\kappa_V\leq1+(m_V-1)\mu_V$.
The two residual envelopes imply $A_K\leq S_K$ and $A_V\leq S_V$. Hence the
canonical ratios satisfy $\gamma_K\leq1$ and $\gamma_V\leq1$. Substitute
these inequalities and the coherence bounds into
Proposition~\ref{prop:correlation_attention_bound}.
\end{proof}

The dependence on residual correlation cannot be removed. For any
$m\geq1$, take one key-residual row with $m$ unit entries and
$q'=\mathbf1_m/\sqrt m$, with $d_h=m$. Then
$\|z-\widehat z\|_2^2=1$ but $A_K=1/m$. Likewise, take $m$ identical
unit-norm value-residual rows and
$\widehat\alpha=\mathbf1_m/m$. Their weighted sum has squared norm one, while
$A_V=1/m$. Thus both exact-to-diagonal ratios can equal $m$, so no
dimension-independent constant can replace the correlation factors.

Proposition~\ref{prop:correlation_attention_bound} provides a valid
deterministic certificate from the allocation objectives to attention-output
error. Its strength depends on explicit, auditable quantities. The
$\gamma$ factors measure how well calibration predicts realized diagonal
error, while the $\kappa$ factors measure the amplification from omitted
cross terms. The proposition does not assert that these factors are uniformly
small for every language-model cache.

\subsection{Stagewise Allocation and Attention-Error Certificate}
\label{app:allocation_attention_guarantee}

We now analyze Algorithm~\ref{alg:readkv} on an arbitrary fixed cache and
query. Fix positive integers $T,d_h,G,H$ and the query-head-to-KV-head map
$g:\{1,\ldots,H\}\to\{1,\ldots,G\}$ from Section~\ref{sec:readkv}. The result
has two parts. Each stage chooses the best legal prefix depths for its
calibrated objective. Those optimized objectives then control the resulting
attention-output error through explicit factors for calibration mismatch and
residual correlation.

A \emph{depth schedule} assigns an integer prefix depth to every allocation
item. The legal key schedules use depths $t^K_{g,c}$ for physical head $g$
and transformed channel $c$. The legal value schedules use depths
$t^V_{g,j}$ for physical head $g$ and cached token $j$. Their feasible sets
are
\[
 \resizebox{\linewidth}{!}{$\displaystyle \mathcal T_K(B_K)=\left\{t^K\in\{0,\ldots,W\}^{G\times d_h}:\sum_{g,c}t^K_{g,c}\leq B_K\right\},\qquad \mathcal T_V(B_V)=\left\{t^V\in\{0,\ldots,W\}^{G\times T}:\sum_{g,j}t^V_{g,j}\leq B_V\right\}.$}
\]
Using the importance scores from
Equations~\eqref{eq:key_importance} and~\eqref{eq:value_importance}, define
the calibrated objectives
\[
 S_K(t^K)=\frac{T}{d_h}\sum_{g,c}a^K_{g,c}D(t^K_{g,c}),
 \qquad
 S_V(t^V;\widehat\alpha)
 =\sum_{g,j}a^V_{g,j}(\widehat\alpha)D(t^V_{g,j}).
\]
$S_K$ is the diagonal approximation to squared logit error, aggregated over
the query heads that share each physical KV head. $S_V$ is the corresponding
calibrated value objective: it uses physical value scales when
$s_g=\sum_c(\sigma^V_{g,c})^2$, and standardized units when $s_g=1$.
The calibration ratios below account for either choice. The factor
$T/d_h$ is constant across key schedules and therefore does not change the
key allocation.

For later use, separate these grouped objectives by query head:
\[
 S_{K,h}(t^K)
 =\frac{T}{d_h}\sum_c
   (q'_{h,c}\sigma^K_{g(h),c})^2D(t^K_{g(h),c}),
 \qquad
 S_{V,h}(t^V;\widehat\alpha)
 =s_{g(h)}\sum_j\widehat\alpha_{h,j}^{\,2}
   D(t^V_{g(h),j}).
\]
The definitions of the grouped importance scores imply
$S_K(t^K)=\sum_hS_{K,h}(t^K)$ and
$S_V(t^V;\widehat\alpha)=\sum_hS_{V,h}(t^V;\widehat\alpha)$.

The exact attention error can differ from these diagonal objectives for two
reasons. First, calibration may mispredict the realized diagonal residual
energy. For each query head $h$, $\gamma_{K,h}$ and $\gamma_{V,h}$ are the
ratios between that realized energy and its calibrated prediction. Second,
residuals from different key channels or value tokens may be correlated. A
\emph{residual Gram matrix} records their pairwise inner products.
The factors $\kappa_{K,h}$ and $\kappa_{V,h}$ are the largest eigenvalues of
the corresponding normalized Gram matrices. They measure the amplification
caused by the omitted cross terms.
Appendix~\ref{app:correlation_attention_bound} gives the exact formulas,
including the zero-residual conventions.

For query head $h$, let $V_{g(h)}\in\mathbb R^{T\times d_h}$ be the matrix
whose rows are the full-precision transformed values in its physical KV head,
and let $\mathbf1_T\in\mathbb R^T$ be the all-ones vector. For any transformed
center $c_h\in\mathbb R^{d_h}$, subtracting $c_h$ from every row does not
change the output error caused by a change in attention probabilities,
because both probability vectors sum to one. The notation
$\|\cdot\|_{2\to2}$ denotes the operator norm, or the largest Euclidean
amplification of a matrix.

The theorem is proved through three subclaims. First, diminishing refinement
gains make the marginal-gain rule minimize $S_K$ over every legal key
schedule. Second, once that schedule fixes
$\widehat\alpha_{\ReadKV}$, the same argument minimizes
$S_V(\,\cdot\,;\widehat\alpha_{\ReadKV})$ over every legal value schedule.
Third, Proposition~\ref{prop:correlation_attention_bound} bounds each
$\|o_h-\widehat o_h\|_2^2$ by a linear combination of
$S_{K,h}$ and $S_{V,h}$. Summing those bounds gives the final certificate.
Its coefficients account for calibration mismatch, residual correlation,
and the centered-value operator norm.

Lemma~\ref{lem:deployed_codec_bounds} in Appendix~\ref{app:codec} verifies
that the deployed depth-eight Lloyd tree has decreasing positive gains
$\Delta_1>\cdots>\Delta_8>0$, where
$\Delta_t=D(t-1)-D(t)$. Hence, the diminishing-gain hypothesis and the exact
stagewise allocation conclusions of
Theorem~\ref{thm:readkv_attention_certificate} apply to the codec used in the
experiments.

The formal statement is
Theorem~\ref{thm:readkv_attention_certificate} in
Section~\ref{sec:allocation_attention}. We prove it below.

Applied headwise, Corollary~\ref{cor:calibrated_coherence} gives a directly
checkable sufficient condition for finite conversion factors. It bounds the
calibration mismatch coordinatewise and the residual interactions through
pairwise coherence.

The value-stage optimum is conditional on the reconstructed attention from
the key stage; it is not a joint optimum of the true attention error. The
conversion factors depend on the selected reconstruction, so this is an
a posteriori certificate rather than an approximation ratio to the best
attention-error schedule.
Computing the canonical factors requires the full-precision cache as well
as the reconstruction; the certificate is an offline check, not an estimate
available from the fetched prefixes alone.

\begin{proof}[Proof of Theorem~\ref{thm:readkv_attention_certificate}]
Within either stage, item $i$ has importance $a_i\geq0$, depth $t_i$, and
weighted marginal gain $a_i[D(t_i)-D(t_i+1)]$. The assumed nonnegative,
nonincreasing gains are exactly the hypotheses of
Proposition~\ref{prop:greedy}. The legal key schedules in
$\mathcal T_K(B_K)$ are the feasible prefix allocations in that proposition.
Algorithm~\ref{alg:readkv} uses the weights $a^K_{g,c}$, while $S_K$
multiplies their objective by the positive constant $T/d_h$. This constant
does not change the optimizer. Therefore,
\[
 S_K(t^K_{\ReadKV})
 =\min_{t^K\in\mathcal T_K(B_K)}S_K(t^K).
\]
This key schedule fixes $\widehat\alpha_{\ReadKV}$ and hence every value
weight. Applying Proposition~\ref{prop:greedy} again, now with those fixed
weights, gives
\[
 S_V(t^V_{\ReadKV};\widehat\alpha_{\ReadKV})
 =\min_{t^V\in\mathcal T_V(B_V)}
   S_V(t^V;\widehat\alpha_{\ReadKV}).
\]

It remains to connect these two optimized quantities to attention error.
Proposition~\ref{prop:correlation_attention_bound} gives
\[
 \|o_h-\widehat o_h\|_2^2
 \leq
 \beta_{K,h}S_{K,h}(t^K_{\ReadKV})
 +\beta_{V,h}S_{V,h}(t^V_{\ReadKV};
                     \widehat\alpha_{\ReadKV}),
\]
where
\[
 \beta_{K,h}
 =\tfrac12\|V_{g(h)}-\mathbf1_Tc_h^\top\|_{2\to2}^2
   \kappa_{K,h}\gamma_{K,h},
 \qquad
 \beta_{V,h}=2\kappa_{V,h}\gamma_{V,h}.
\]
The per-head objectives defined above sum to the grouped objectives because
the grouped importance scores sum over sibling query heads. Explicitly,
\[
 \sum_h S_{K,h}(t^K_{\ReadKV})
 =\frac{T}{d_h}\sum_{g,c}
   \left[\sum_{h\in\siblings(g)}
   (q'_{h,c}\sigma^K_{g,c})^2\right]D(t^K_{g,c})
 =S_K(t^K_{\ReadKV}),
\]
and
\[
 \sum_h S_{V,h}(t^V_{\ReadKV};\widehat\alpha_{\ReadKV})
 =\sum_{g,j}s_g
   \left[\sum_{h\in\siblings(g)}
   \widehat\alpha_{h,j}^{\,2}\right]D(t^V_{g,j})
 =S_V(t^V_{\ReadKV};\widehat\alpha_{\ReadKV}).
\]
Summing the per-head bounds and replacing each coefficient by its maximum
over $h$ proves
\[
 \sum_h\|o_h-\widehat o_h\|_2^2
 \leq
 \overline\beta_K S_K(t^K_{\ReadKV})
 +\overline\beta_V
  S_V(t^V_{\ReadKV};\widehat\alpha_{\ReadKV}).
\]
Substituting the two optimality equalities proves the displayed deterministic
bound in the theorem.
\end{proof}

\section{Experimental Details and Results}
\label{app:quality}

Section~\ref{sec:quality} gives the principal C4 and LongBench tables and
compares selected operating points in Figure~\ref{fig:competitor-comparison}.
This appendix specifies the protocol and comparison methods, then adds C4
controls, shorter-prompt QA, and passkey retrieval. The remaining experiments
test current-query information, intermediate precision levels, surrogate
quality, and the restrictions used by the GPU reader.

\subsection{Evaluation Protocol}
\label{app:quality_protocol}
\noindent\textbf{Language Modeling.}
For each model, the C4 suite contains 32 calibration documents, eight
development documents, and 32 disjoint test documents. A fixed rule selects
one 2,048-token window from each document. The principal seed-dependent
\ReadKV{} comparisons use transform seeds 20260801--20260803. Every
comparison retains its matched \textsc{Dense} reference and document
manifest.

Model weights are bf16. Both compressed and \textsc{Dense} conditions use
float32 attention, softmax, and final log-softmax. Queries and keys are
captured after rotary position encoding and use one shared orthogonal
transform. Values and reconstructed outputs use a second transform.
The compressed cache is used throughout causal evaluation. \ReadKV{}
selects its prefixes anew for each query. Compression errors can therefore
affect subsequent model states.

\noindent\textbf{Question Answering and Retrieval.}
The LongBench evaluation uses Qasper, HotpotQA, and 2WikiMultihopQA
\citep{bai2024longbench}, with 100 questions per task and instruction model.
The same questions are evaluated at prompt limits of 3,500 and 7,500 tokens,
giving paired short- and long-context conditions. We use greedy decoding,
allow 128 output tokens for Qasper and 32 for the other tasks, and score the
first answer line by normalized token F1, taking the maximum over reference
answers.

The main protocol uses \textsc{Dense} prompt prefill, which determines the
first answer token, then applies each method's cache representation and
read rule to subsequent decoding.

Passkey retrieval \citep{mohtashami2023landmark} uses 200 five-digit keys at
each context length, balanced across five insertion positions, with a
24-token output cap. A trial succeeds when the generated answer contains the
key. It uses the same \textsc{Dense}-prefill boundary as question answering.

\noindent\textbf{Hardware and Calibration Provenance.}
Reference quality runs use NVIDIA L4 GPUs, while Mistral instruction-model
runs and later controls use NVIDIA A10G GPUs. Each comparison retains its
original hardware and calibration cohort. In the Qwen generation study, the
\ReadKV{} $(4,2)$, \textsc{Full Reader} $(4,4)$, Lloyd controls, and new
protected \ReadKV{} rows are NVIDIA A10G follow-ups that use the NVIDIA L4 cohort's
\textsc{Dense} scores on the same questions. The protected Qwen rows also
use a different calibration artifact and are compared with the existing NVIDIA L4
TurboQuant references. The joins verify question identifiers, prompt lengths
and truncation, gold answers, and generation and scoring settings; they do not establish hardware
or calibration equivalence. These cross-cohort comparisons describe operating
points but do not isolate a single algorithmic change. The protected Mistral
rows and their references use an NVIDIA A10G and the same calibration artifact.
 All latency and DRAM measurements use an NVIDIA A10G.

\noindent\textbf{Aggregation and Uncertainty.}
For language modeling, perplexity is the exponential of token-weighted
negative log likelihood (NLL). Tables average perplexity across seeds, while
$\Delta\mathrm{PPL}$ uses unrounded means and matched \textsc{Dense}
scores. Confidence intervals are computed for NLL differences and do not
represent uncertainty in the arithmetic mean of perplexity.

Bootstrap intervals resample paired documents or task examples rather than
individual tokens, layers, or heads. Reported intervals use at least 10,000
replicates, and the depth and factorial controls use 20,000. Passkey
resampling is stratified by insertion position. Intervals from a single
transform seed capture document variation but not variation across transforms
or model families.

\noindent\textbf{Budget Realization.}
\label{app:rate_accounting}
Whenever a quality method permits a configurable K/V split, we select its
integer stage rates on development NLL using the primary transform seed and
freeze them before testing. Test data and the remaining transform seeds do not
retune the split.
Requested rates are converted to nonnegative integer increment budgets over
the $Gd_h$ key items and $GT$ value items. Ties are resolved in a fixed order
while preserving prefix feasibility. Every reported schedule is checked
against its requested and realized budgets.

\subsection{Baselines and Controls}
\label{app:competitor_details}
Section~\ref{sec:quality} introduces the principal comparison methods. This
section defines the appendix-only controls and the implementation choices
that affect their quality and resource costs.

\noindent\textbf{Fixed-Depth Controls.}
The \emph{Lloyd-2 (norm-corrected)} control tests direct two-bit quantization
without a progressive code. It uses the same transforms and per-channel
standardization as \ReadKV{}, but maps each scalar to a non-nested
four-level Gaussian Lloyd--Max codebook \citep{lloyd1982quantization}. It
stores and reads two payload bits per scalar. The encoder also stores each
key norm in fp16 and uses it during reconstruction. Values are not
norm-corrected. Resource costs include these norms, the calibration arrays,
and the shared codebooks.

\noindent\textbf{KIVI.}
KIVI \citep{liu2024kivi} quantizes keys per channel over groups of 32 tokens
and values per token over groups of 32 channels. Each group stores an fp16
scale and offset. Both the two-bit configuration and the appendix-only
four-bit row retain a 128-token uncompressed residual buffer. The reported
costs include the buffer and all quantization parameters.

\noindent\textbf{TurboQuant.}
TurboQuant K4/V4 \citep{zandieh2026turboquant} uses its supported
norm-corrected four-bit configuration. Keys are normalized, Hadamard-rotated,
and quantized to 16 Lloyd--Max centroids. Values use per-token asymmetric
uniform quantization. Resource costs include the retained key norms and the
value scales and offsets. The serving configuration leaves the first and
last two layers uncompressed, and the protected \ReadKV{} C4 and QA rows use
the same exemptions. The displayed $(W,R)$ rates describe compressed layers;
the reported resource costs include all dense boundary layers.

\noindent\textbf{Sparse Selection.}
SparQ \citep{ribar2024sparq} retains a 16-bit cache. It uses $r$
query-selected key channels to estimate token relevance, then reads the
complete keys and values of at most $k$ past positions. The current token is
always included, giving at most $k+1$ positions, and a running value mean
approximates omitted attention mass. Logical-read costs include the probe
channels, selected tokens, and mean state.

The displayed SparQ $r32/k128$ configuration uses $r=32$ and $k=128$.
The C4 row labeled \emph{SparQ ($R=2$ target)} uses $r=16,k=64$, except on
Yi, where development selection chooses $r=8,k=64$. The
\emph{SparQ + nested code} control applies sparse token selection to the
eight-bit progressive code. It reads depth-two prefixes on the $r=32$
scoring channels, followed by depth-eight keys and values for at most
$k=128$ selected tokens.

For C4, Qwen3B, Qwen14B, Yi, and DeepSeek select configurations on eight
development documents from $r\in\{8,16,32\}$,
$k\in\{32,64,128,256\}$, and a fixed set of nine sparse-progressive
variants. The lowest-NLL candidate within each read budget is fixed before
testing. Qwen7B uses the setting selected in its original development run.
Mistral uses a 12-candidate pure-SparQ sweep over the same $r,k$ grid
on eight development documents, frozen before testing. It selects $r16/k64$
and $r32/k128$ for the two read targets, reproducing the earlier transferred
settings and held-out scores. This is a bounded development search, not a
claim of global optimality. Mistral's hybrid configuration remains transferred.
The QA and passkey studies also use fixed $r32/k128$ and hybrid
configurations. Each SparQ test uses one seed.

\subsection{Complete Quality Results}
\label{app:absolute_quality}
Tables~\ref{tab:polished-prediction} and~\ref{tab:polished-qa} in
Section~\ref{sec:quality} report all principal C4 and 7,500-token QA
operating points. The tables below extend them with C4 controls and paired
3,500-token QA. They use the same format: quality change above logical
reads and retained KV capacity, both as percentages of \textsc{Dense}.
Positive quality changes are worse.

\noindent\textbf{C4 Language-Modeling Results.}
To the principal methods in Table~\ref{tab:polished-prediction},
Table~\ref{tab:polished-prediction-grid} adds
\ReadKV{} $(8,2)^\dagger$, \emph{KIVI (4-bit K/V)},
\emph{Lloyd-2 (norm-corrected)}, \emph{SparQ ($R=2$ target)}, and
\emph{SparQ + nested code}. Together, these rows compare direct low-bit
quantization, sparse retrieval, and a sparse reader applied to the
progressive code.

\begin{table}[!t]
\centering
\caption{Additional C4 language-modeling quality and KV costs on the held-out documents used in Table~\ref{tab:polished-prediction}. $\dagger$ denotes configurations in which the first and last two layers remain uncompressed. The Qwen2.5-14B Lloyd-2 condition was not run. The low-read SparQ control uses $r16/k64$, except on Yi where it uses $r8/k64$. SparQ + nested code uses $r32/k128$.}
\label{tab:polished-prediction-grid}
\providecolor{tradeaccent}{HTML}{B34E28}
\fontsize{9}{10}\selectfont
\setlength{\tabcolsep}{2pt}
\renewcommand{\arraystretch}{1.04}
\begin{tabular}{@{}p{.225\linewidth}*{6}{>{\centering\arraybackslash}p{.118\linewidth}}@{}}
\toprule
Method $(W,R)$ & \shortstack{Qwen2.5\\3B} & \shortstack{Qwen2.5\\7B} & \shortstack{Qwen2.5\\14B} & \shortstack{Yi-1.5\\6B} & \shortstack{DeepSeek\\LLM-7B} & \shortstack{Mistral\\7B-v0.3} \\
{\fontsize{8}{9}\selectfont\textcolor{black!65}{\textsc{Dense} PPL}} & {\fontsize{8}{9}\selectfont\textcolor{black!65}{11.36}} & {\fontsize{8}{9}\selectfont\textcolor{black!65}{10.13}} & {\fontsize{8}{9}\selectfont\textcolor{black!65}{8.74}} & {\fontsize{8}{9}\selectfont\textcolor{black!65}{8.96}} & {\fontsize{8}{9}\selectfont\textcolor{black!65}{8.49}} & {\fontsize{8}{9}\selectfont\textcolor{black!65}{7.62}} \\[2pt]
\midrule
\multicolumn{7}{@{}l}{\fontsize{8}{9}\selectfont Top: $\Delta\mathrm{PPL}$ (\%)}\\
\multicolumn{7}{@{}l}{\fontsize{8}{9}\selectfont Bottom: logical reads / retained KV capacity (\% of \textsc{Dense})}\\[4pt]
\ReadKV{} $(8,2)^{\dagger}$ & \shortstack{+3.05\\[1.5pt]{\fontsize{7.8}{8}\selectfont 22.7\,/\,\textcolor{black!65}{55.6}}} & \shortstack{+1.93\\[1.5pt]{\fontsize{7.8}{8}\selectfont 25.3\,/\,\textcolor{black!65}{57.2}}} & \shortstack{+2.37\\[1.5pt]{\fontsize{7.8}{8}\selectfont 20.1\,/\,\textcolor{black!65}{54.3}}} & \shortstack{+7.82\\[1.5pt]{\fontsize{7.8}{8}\selectfont 23.8\,/\,\textcolor{black!65}{56.3}}} & \shortstack{+4.39\\[1.5pt]{\fontsize{7.8}{8}\selectfont 24.4\,/\,\textcolor{black!65}{56.8}}} & \shortstack{+8.39\\[1.5pt]{\fontsize{7.8}{8}\selectfont 23.7\,/\,\textcolor{black!65}{56.3}}} \\[3pt]
\shortstack[l]{KIVI (4-bit K/V)\\[1pt]{\fontsize{7}{7}\selectfont\hypersetup{citecolor=black!60}\citealp{liu2024kivi}}} & \shortstack{+0.07\\[1.5pt]{\fontsize{7.8}{8}\selectfont 37.6\,/\,\textcolor{black!65}{35.5}}} & \shortstack{+0.09\\[1.5pt]{\fontsize{7.8}{8}\selectfont 37.6\,/\,\textcolor{black!65}{35.5}}} & \shortstack{+0.06\\[1.5pt]{\fontsize{7.8}{8}\selectfont 37.6\,/\,\textcolor{black!65}{35.5}}} & \shortstack{+0.07\\[1.5pt]{\fontsize{7.8}{8}\selectfont 37.6\,/\,\textcolor{black!65}{35.5}}} & \shortstack{+0.10\\[1.5pt]{\fontsize{7.8}{8}\selectfont 37.6\,/\,\textcolor{black!65}{35.5}}} & \shortstack{+0.09\\[1.5pt]{\fontsize{7.8}{8}\selectfont 37.6\,/\,\textcolor{black!65}{35.5}}} \\[3pt]
\shortstack[l]{Lloyd-2\\(norm-corrected)\\[1pt]{\fontsize{7}{7}\selectfont\hypersetup{citecolor=black!60}\citealp{lloyd1982quantization}}} & \shortstack{+40.99\\[1.5pt]{\fontsize{7.8}{8}\selectfont 13.1\,/\,\textcolor{black!65}{13.0}}} & \shortstack{+33.83\\[1.5pt]{\fontsize{7.8}{8}\selectfont 13.1\,/\,\textcolor{black!65}{13.0}}} & {\fontsize{8}{9}\selectfont not run} & \shortstack{+159.37\\[1.5pt]{\fontsize{7.8}{8}\selectfont 13.1\,/\,\textcolor{black!65}{13.0}}} & \shortstack{+3766.75\\[1.5pt]{\fontsize{7.8}{8}\selectfont 13.1\,/\,\textcolor{black!65}{13.0}}} & \shortstack{+1366.07\\[1.5pt]{\fontsize{7.8}{8}\selectfont 13.1\,/\,\textcolor{black!65}{13.0}}} \\[3pt]
\shortstack[l]{SparQ ($R=2$ target)\\[1pt]{\fontsize{7}{7}\selectfont\hypersetup{citecolor=black!60}\citealp{ribar2024sparq}}} & \shortstack{+5.22\\[1.5pt]{\fontsize{7.8}{8}\selectfont 12.5\,/\,\textcolor{black!65}{100.0}}} & \shortstack{+11.57\\[1.5pt]{\fontsize{7.8}{8}\selectfont 12.5\,/\,\textcolor{black!65}{100.0}}} & \shortstack{+6.13\\[1.5pt]{\fontsize{7.8}{8}\selectfont 12.5\,/\,\textcolor{black!65}{100.0}}} & \shortstack{+291.84\\[1.5pt]{\fontsize{7.8}{8}\selectfont 9.4\,/\,\textcolor{black!65}{100.0}}} & \shortstack{+2.98\\[1.5pt]{\fontsize{7.8}{8}\selectfont 12.5\,/\,\textcolor{black!65}{100.0}}} & \shortstack{+1112.46\\[1.5pt]{\fontsize{7.8}{8}\selectfont 12.5\,/\,\textcolor{black!65}{100.0}}} \\[3pt]
\shortstack[l]{SparQ + nested code\\[1pt]{\fontsize{7}{7}\selectfont\hypersetup{citecolor=black!60}\citealp{ribar2024sparq}}} & \shortstack{+2.33\\[1.5pt]{\fontsize{7.8}{8}\selectfont 8.3\,/\,\textcolor{black!65}{50.1}}} & \shortstack{+2.48\\[1.5pt]{\fontsize{7.8}{8}\selectfont 8.1\,/\,\textcolor{black!65}{50.1}}} & \shortstack{+1.57\\[1.5pt]{\fontsize{7.8}{8}\selectfont 8.0\,/\,\textcolor{black!65}{50.1}}} & \shortstack{+5.55\\[1.5pt]{\fontsize{7.8}{8}\selectfont 8.1\,/\,\textcolor{black!65}{50.1}}} & \shortstack{+0.92\\[1.5pt]{\fontsize{7.8}{8}\selectfont 7.9\,/\,\textcolor{black!65}{50.1}}} & \shortstack{+196.78\\[1.5pt]{\fontsize{7.8}{8}\selectfont 8.0\,/\,\textcolor{black!65}{50.1}}} \\[3pt]
\bottomrule
\end{tabular}
\end{table}

\FloatBarrier

The low-read configurations vary more across model families than the $R=4$
results in the main text. On the three Qwen2.5 models, \ReadKV{} $(4,2)$
stays within 4.5\% of \textsc{Dense} perplexity; retaining eight bits at
the same read budget improves all three results. The sparse reader with the nested code uses about
8\% of \textsc{Dense} logical reads and 50\% of its retained KV capacity.
Its $\Delta\mathrm{PPL}$ is below 5.56\% on five of the six models. The
transferred Mistral hybrid setting is less accurate; it was not retuned in
the pure-SparQ sweep. Direct two-bit Lloyd quantization
and low-read SparQ also incur large losses on some models.
At the low-read target, SparQ is more accurate on DeepSeek than
\ReadKV{} $(8,2)$ (2.98\% versus 6.27\% PPL increase), while retaining
about twice the cache capacity. The four-bit KIVI control has at most 0.10\% PPL
increase across the six models, at a higher logical-read rate than
\ReadKV{} $(8,4)$.

\noindent\textbf{LongBench Question-Answering Results.}
Each of the six model--dataset conditions uses the same 100 questions
across methods and prompt limits. To test dependence on prompt length,
Table~\ref{tab:polished-qa-short} reports the paired 3,500-token condition
for the methods shown at 7,500 tokens in Table~\ref{tab:polished-qa}.

\begin{table}[!t]
\centering
\caption{LongBench question-answering quality and KV costs at a 3,500-token prompt limit. The evaluation uses the same 100 questions and dense-prefill protocol as Table~\ref{tab:polished-qa}. $\dagger$ denotes configurations in which the first and last two layers remain uncompressed. Color marks the observed quality--read frontier: at shown precision, no other method row in the column improves loss or reads without worsening the other. Reads recur per query; retained capacity is shown but not ranked.}
\label{tab:polished-qa-short}
\label{tab:unified-qa-qasper}
\label{tab:unified-qa-hotpotqa}
\label{tab:unified-qa-2wikimqa}
\providecolor{tradeaccent}{HTML}{B34E28}
\fontsize{9}{10}\selectfont
\setlength{\tabcolsep}{2pt}
\renewcommand{\arraystretch}{1.04}
\begin{tabular}{@{}p{.225\linewidth}*{6}{>{\centering\arraybackslash}p{.118\linewidth}}@{}}
\toprule
 & \multicolumn{3}{c}{Qwen2.5-7B-Instruct} & \multicolumn{3}{c}{Mistral-7B-Instruct-v0.3}\\
\cmidrule(lr){2-4}\cmidrule(lr){5-7}
Method $(W,R)$ & \shortstack{Qasper} & \shortstack{HotpotQA} & \shortstack{2Wiki\\MultihopQA} & \shortstack{Qasper} & \shortstack{HotpotQA} & \shortstack{2Wiki\\MultihopQA} \\
{\fontsize{8}{9}\selectfont\textcolor{black!65}{\textsc{Dense} F1}} & {\fontsize{8}{9}\selectfont\textcolor{black!65}{35.51}} & {\fontsize{8}{9}\selectfont\textcolor{black!65}{42.49}} & {\fontsize{8}{9}\selectfont\textcolor{black!65}{44.87}} & {\fontsize{8}{9}\selectfont\textcolor{black!65}{28.10}} & {\fontsize{8}{9}\selectfont\textcolor{black!65}{41.42}} & {\fontsize{8}{9}\selectfont\textcolor{black!65}{37.46}} \\[2pt]
\midrule
\multicolumn{7}{@{}l}{\fontsize{8}{9}\selectfont Top: $\Delta\mathrm{F1}$ (points)\quad \textcolor{tradeaccent}{Color}: quality--read frontier}\\
\multicolumn{7}{@{}l}{\fontsize{8}{9}\selectfont Bottom: logical reads / retained KV capacity (\% of \textsc{Dense})}\\[4pt]
\shortstack[l]{KIVI (2-bit K/V)\\[1pt]{\fontsize{7}{7}\selectfont\hypersetup{citecolor=black!60}\citealp{liu2024kivi}}} & \textcolor{tradeaccent}{\shortstack{+0.51\\[1.5pt]{\fontsize{7.8}{8}\selectfont 21.2\,/\,20.7}}} & \shortstack{+2.59\\[1.5pt]{\fontsize{7.8}{8}\selectfont 21.2\,/\,\textcolor{black!65}{20.7}}} & \textcolor{tradeaccent}{\shortstack{+0.90\\[1.5pt]{\fontsize{7.8}{8}\selectfont 21.2\,/\,20.7}}} & \shortstack{-0.39\\[1.5pt]{\fontsize{7.8}{8}\selectfont 21.0\,/\,\textcolor{black!65}{20.7}}} & \shortstack{-0.17\\[1.5pt]{\fontsize{7.8}{8}\selectfont 20.9\,/\,\textcolor{black!65}{20.7}}} & \textcolor{tradeaccent}{\shortstack{+0.32\\[1.5pt]{\fontsize{7.8}{8}\selectfont 20.9\,/\,20.7}}} \\[3pt]
\shortstack[l]{TurboQuant K4/V4$^{\dagger}$\\[1pt]{\fontsize{7}{7}\selectfont\hypersetup{citecolor=black!60}\citealp{zandieh2026turboquant}}} & \shortstack{+1.14\\[1.5pt]{\fontsize{7.8}{8}\selectfont 36.7\,/\,\textcolor{black!65}{36.7}}} & \shortstack{+1.81\\[1.5pt]{\fontsize{7.8}{8}\selectfont 36.7\,/\,\textcolor{black!65}{36.7}}} & \shortstack{+1.00\\[1.5pt]{\fontsize{7.8}{8}\selectfont 36.7\,/\,\textcolor{black!65}{36.7}}} & \shortstack{+1.01\\[1.5pt]{\fontsize{7.8}{8}\selectfont 35.4\,/\,\textcolor{black!65}{35.4}}} & \shortstack{+0.17\\[1.5pt]{\fontsize{7.8}{8}\selectfont 35.4\,/\,\textcolor{black!65}{35.4}}} & \shortstack{+0.87\\[1.5pt]{\fontsize{7.8}{8}\selectfont 35.4\,/\,\textcolor{black!65}{35.4}}} \\[3pt]
\shortstack[l]{SparQ $r32/k128$\\[1pt]{\fontsize{7}{7}\selectfont\hypersetup{citecolor=black!60}\citealp{ribar2024sparq}}} & \textcolor{tradeaccent}{\shortstack{+0.52\\[1.5pt]{\fontsize{7.8}{8}\selectfont 16.4\,/\,100.0}}} & \shortstack{+1.38\\[1.5pt]{\fontsize{7.8}{8}\selectfont 16.2\,/\,\textcolor{black!65}{100.0}}} & \textcolor{tradeaccent}{\shortstack{+1.36\\[1.5pt]{\fontsize{7.8}{8}\selectfont 16.2\,/\,100.0}}} & \shortstack{+0.47\\[1.5pt]{\fontsize{7.8}{8}\selectfont 16.3\,/\,\textcolor{black!65}{100.0}}} & \shortstack{+0.21\\[1.5pt]{\fontsize{7.8}{8}\selectfont 16.3\,/\,\textcolor{black!65}{100.0}}} & \textcolor{tradeaccent}{\shortstack{+1.03\\[1.5pt]{\fontsize{7.8}{8}\selectfont 16.2\,/\,100.0}}} \\[3pt]
\addlinespace[3pt]
\ReadKV{} $(4,2)$ & \shortstack{+2.51\\[1.5pt]{\fontsize{7.8}{8}\selectfont 12.6\,/\,\textcolor{black!65}{25.1}}} & \shortstack{+0.50\\[1.5pt]{\fontsize{7.8}{8}\selectfont 12.6\,/\,\textcolor{black!65}{25.1}}} & \shortstack{+2.05\\[1.5pt]{\fontsize{7.8}{8}\selectfont 12.6\,/\,\textcolor{black!65}{25.1}}} & \shortstack{-1.35\\[1.5pt]{\fontsize{7.8}{8}\selectfont 12.6\,/\,\textcolor{black!65}{25.1}}} & \textcolor{tradeaccent}{\shortstack{-0.70\\[1.5pt]{\fontsize{7.8}{8}\selectfont 12.6\,/\,25.1}}} & \shortstack{+3.10\\[1.5pt]{\fontsize{7.8}{8}\selectfont 12.6\,/\,\textcolor{black!65}{25.1}}} \\[3pt]
\ReadKV{} $(8,2)$ & \textcolor{tradeaccent}{\shortstack{+1.30\\[1.5pt]{\fontsize{7.8}{8}\selectfont 12.6\,/\,50.1}}} & \textcolor{tradeaccent}{\shortstack{-0.08\\[1.5pt]{\fontsize{7.8}{8}\selectfont 12.6\,/\,50.1}}} & \textcolor{tradeaccent}{\shortstack{+1.42\\[1.5pt]{\fontsize{7.8}{8}\selectfont 12.6\,/\,50.1}}} & \textcolor{tradeaccent}{\shortstack{-1.65\\[1.5pt]{\fontsize{7.8}{8}\selectfont 12.6\,/\,50.1}}} & \shortstack{+0.14\\[1.5pt]{\fontsize{7.8}{8}\selectfont 12.6\,/\,\textcolor{black!65}{50.1}}} & \textcolor{tradeaccent}{\shortstack{+1.77\\[1.5pt]{\fontsize{7.8}{8}\selectfont 12.6\,/\,50.1}}} \\[3pt]
\ReadKV{} $(8,4)$ & \textcolor{tradeaccent}{\shortstack{+0.11\\[1.5pt]{\fontsize{7.8}{8}\selectfont 25.1\,/\,50.1}}} & \shortstack{+0.48\\[1.5pt]{\fontsize{7.8}{8}\selectfont 25.1\,/\,\textcolor{black!65}{50.1}}} & \textcolor{tradeaccent}{\shortstack{+0.07\\[1.5pt]{\fontsize{7.8}{8}\selectfont 25.1\,/\,50.1}}} & \shortstack{+0.36\\[1.5pt]{\fontsize{7.8}{8}\selectfont 25.1\,/\,\textcolor{black!65}{50.1}}} & \shortstack{+0.06\\[1.5pt]{\fontsize{7.8}{8}\selectfont 25.1\,/\,\textcolor{black!65}{50.1}}} & \shortstack{+1.37\\[1.5pt]{\fontsize{7.8}{8}\selectfont 25.1\,/\,\textcolor{black!65}{50.1}}} \\[3pt]
\ReadKV{} $(8,2)^{\dagger}$ & \shortstack{+2.45\\[1.5pt]{\fontsize{7.8}{8}\selectfont 25.1\,/\,\textcolor{black!65}{57.2}}} & \shortstack{+2.01\\[1.5pt]{\fontsize{7.8}{8}\selectfont 25.1\,/\,\textcolor{black!65}{57.2}}} & \shortstack{+1.65\\[1.5pt]{\fontsize{7.8}{8}\selectfont 25.1\,/\,\textcolor{black!65}{57.2}}} & \textcolor{tradeaccent}{\shortstack{-1.97\\[1.5pt]{\fontsize{7.8}{8}\selectfont 23.5\,/\,56.3}}} & \textcolor{tradeaccent}{\shortstack{-0.86\\[1.5pt]{\fontsize{7.8}{8}\selectfont 23.5\,/\,56.3}}} & \shortstack{+1.81\\[1.5pt]{\fontsize{7.8}{8}\selectfont 23.5\,/\,\textcolor{black!65}{56.3}}} \\[3pt]
\ReadKV{} $(8,4)^{\dagger}$ & \shortstack{+1.01\\[1.5pt]{\fontsize{7.8}{8}\selectfont 35.8\,/\,\textcolor{black!65}{57.2}}} & \shortstack{+0.05\\[1.5pt]{\fontsize{7.8}{8}\selectfont 35.8\,/\,\textcolor{black!65}{57.2}}} & \shortstack{+0.92\\[1.5pt]{\fontsize{7.8}{8}\selectfont 35.8\,/\,\textcolor{black!65}{57.2}}} & \shortstack{+1.10\\[1.5pt]{\fontsize{7.8}{8}\selectfont 34.5\,/\,\textcolor{black!65}{56.3}}} & \shortstack{+0.06\\[1.5pt]{\fontsize{7.8}{8}\selectfont 34.5\,/\,\textcolor{black!65}{56.3}}} & \textcolor{tradeaccent}{\shortstack{+0.14\\[1.5pt]{\fontsize{7.8}{8}\selectfont 34.5\,/\,56.3}}} \\[3pt]
\bottomrule
\end{tabular}
\end{table}

\FloatBarrier

At the 3,500-token limit, \ReadKV{} $(8,2)$ has
$\Delta\mathrm{F1}$ between $-1.65$ and $1.78$ points while using about
12.6\% of \textsc{Dense} logical reads and 50.1\% of its retained KV
capacity. At the same prompt limit, the largest loss for
\ReadKV{} $(8,4)$ is below 1.38 points.

The $\dagger$ rows cover all 24 protected \ReadKV{} conditions: two read
rates, two models, three datasets, and both prompt limits. Across these
conditions, the largest losses are below 2.46 F1 points for $(8,2)^\dagger$
and 1.11 points for $(8,4)^\dagger$. Protected $(8,4)$ scores higher than
TurboQuant in seven of twelve conditions, with a mean difference of
$-0.04$ F1 points (\ReadKV{} minus TurboQuant); every paired 95\% interval
includes zero. These intervals do not establish equivalence. Protected
$(8,2)$ reads about one third less than TurboQuant but scores lower in eight
of twelve conditions, with a mean difference of $-0.51$ F1 points; the
quality gap is larger at the 7,500-token limit. Retained capacity is
about 56--57\% of \textsc{Dense}, versus about 35--37\% for TurboQuant. The Qwen
cross-cohort qualification above applies to these comparisons.

\subsection{Passkey Retrieval}
\label{app:retrieval}
Passkey retrieval tests whether compressed decoding preserves a specific
piece of information over a long context. Several methods nearly saturate
the task, so Table~\ref{tab:polished-passkey} serves as a retention check
rather than the primary quality comparison.

\begin{table}[!ht]
\centering
\caption{Passkey retrieval quality and KV costs at 4K and 8K context lengths, with 200 trials per condition. Every method uses dense prompt prefill. $\dagger$ denotes configurations in which the first and last two layers remain uncompressed. SparQ + nested code uses $r32/k128$. \ReadKV{} $(4,2)$ recovers all keys with about half the logical reads of the matched-storage \textsc{Full Reader} $(4,4)$. Color marks the observed quality--read frontier: at shown precision, no other method row in the column improves recovered keys or reads without worsening the other (more recovered keys and fewer reads are better). Reads recur per query; retained capacity is shown but not ranked.}
\label{tab:polished-passkey}
\providecolor{tradeaccent}{HTML}{B34E28}
\fontsize{9}{10}\selectfont
\setlength{\tabcolsep}{2pt}
\renewcommand{\arraystretch}{1.04}
\begin{tabular}{@{}p{.36\linewidth}*{4}{>{\centering\arraybackslash}p{.145\linewidth}}@{}}
\toprule
 & \multicolumn{2}{c}{Qwen2.5-7B-Instruct} & \multicolumn{2}{c}{Mistral-7B-Instruct-v0.3}\\
\cmidrule(lr){2-3}\cmidrule(lr){4-5}
Method $(W,R)$ & 4K & 8K & 4K & 8K\\
{\fontsize{8}{9}\selectfont\textcolor{black!65}{\textsc{Dense} recovered}} & {\fontsize{8}{9}\selectfont\textcolor{black!65}{200}} & {\fontsize{8}{9}\selectfont\textcolor{black!65}{200}} & {\fontsize{8}{9}\selectfont\textcolor{black!65}{200}} & {\fontsize{8}{9}\selectfont\textcolor{black!65}{200}} \\[2pt]
\midrule
\multicolumn{5}{@{}l}{\fontsize{8}{9}\selectfont Top: recovered keys (out of 200)\quad \textcolor{tradeaccent}{Color}: quality--read frontier}\\
\multicolumn{5}{@{}l}{\fontsize{8}{9}\selectfont Bottom: logical reads / retained KV capacity (\% of \textsc{Dense})}\\[4pt]
\textsc{Full Reader} $(4,4)$ & \shortstack{200\\[1.5pt]{\fontsize{7.8}{8}\selectfont 25.1\,/\,\textcolor{black!65}{25.1}}} & \shortstack{200\\[1.5pt]{\fontsize{7.8}{8}\selectfont 25.1\,/\,\textcolor{black!65}{25.0}}} & \shortstack{200\\[1.5pt]{\fontsize{7.8}{8}\selectfont 25.1\,/\,\textcolor{black!65}{25.0}}} & \shortstack{200\\[1.5pt]{\fontsize{7.8}{8}\selectfont 25.0\,/\,\textcolor{black!65}{25.0}}} \\[3pt]
\ReadKV{} $(4,2)$ & \textcolor{tradeaccent}{\shortstack{200\\[1.5pt]{\fontsize{7.8}{8}\selectfont 12.6\,/\,25.1}}} & \shortstack{200\\[1.5pt]{\fontsize{7.8}{8}\selectfont 12.6\,/\,\textcolor{black!65}{25.0}}} & \textcolor{tradeaccent}{\shortstack{200\\[1.5pt]{\fontsize{7.8}{8}\selectfont 12.6\,/\,25.0}}} & \textcolor{tradeaccent}{\shortstack{200\\[1.5pt]{\fontsize{7.8}{8}\selectfont 12.5\,/\,25.0}}} \\[3pt]
\ReadKV{} $(8,2)$ & \textcolor{tradeaccent}{\shortstack{200\\[1.5pt]{\fontsize{7.8}{8}\selectfont 12.6\,/\,50.1}}} & \shortstack{200\\[1.5pt]{\fontsize{7.8}{8}\selectfont 12.6\,/\,\textcolor{black!65}{50.0}}} & \textcolor{tradeaccent}{\shortstack{200\\[1.5pt]{\fontsize{7.8}{8}\selectfont 12.6\,/\,50.0}}} & \textcolor{tradeaccent}{\shortstack{200\\[1.5pt]{\fontsize{7.8}{8}\selectfont 12.5\,/\,50.0}}} \\[3pt]
\ReadKV{} $(8,4)$ & \shortstack{200\\[1.5pt]{\fontsize{7.8}{8}\selectfont 25.1\,/\,\textcolor{black!65}{50.1}}} & \shortstack{200\\[1.5pt]{\fontsize{7.8}{8}\selectfont 25.1\,/\,\textcolor{black!65}{50.0}}} & \shortstack{200\\[1.5pt]{\fontsize{7.8}{8}\selectfont 25.1\,/\,\textcolor{black!65}{50.0}}} & \shortstack{200\\[1.5pt]{\fontsize{7.8}{8}\selectfont 25.0\,/\,\textcolor{black!65}{50.0}}} \\[3pt]
\shortstack[l]{KIVI (2-bit K/V)\\[1pt]{\fontsize{7}{7}\selectfont\hypersetup{citecolor=black!60}\citealp{liu2024kivi}}} & \shortstack{200\\[1.5pt]{\fontsize{7.8}{8}\selectfont 21.0\,/\,\textcolor{black!65}{21.3}}} & \shortstack{200\\[1.5pt]{\fontsize{7.8}{8}\selectfont 19.9\,/\,\textcolor{black!65}{20.0}}} & \shortstack{199\\[1.5pt]{\fontsize{7.8}{8}\selectfont 21.1\,/\,\textcolor{black!65}{21.3}}} & \shortstack{200\\[1.5pt]{\fontsize{7.8}{8}\selectfont 19.8\,/\,\textcolor{black!65}{20.0}}} \\[3pt]
\shortstack[l]{TurboQuant K4/V4$^{\dagger}$\\[1pt]{\fontsize{7}{7}\selectfont\hypersetup{citecolor=black!60}\citealp{zandieh2026turboquant}}} & \shortstack{200\\[1.5pt]{\fontsize{7.8}{8}\selectfont 36.7\,/\,\textcolor{black!65}{36.7}}} & \shortstack{200\\[1.5pt]{\fontsize{7.8}{8}\selectfont 36.7\,/\,\textcolor{black!65}{36.7}}} & \shortstack{200\\[1.5pt]{\fontsize{7.8}{8}\selectfont 35.4\,/\,\textcolor{black!65}{35.4}}} & \shortstack{200\\[1.5pt]{\fontsize{7.8}{8}\selectfont 35.4\,/\,\textcolor{black!65}{35.4}}} \\[3pt]
\shortstack[l]{SparQ $r32/k128$\\[1pt]{\fontsize{7}{7}\selectfont\hypersetup{citecolor=black!60}\citealp{ribar2024sparq}}} & \shortstack{200\\[1.5pt]{\fontsize{7.8}{8}\selectfont 15.8\,/\,\textcolor{black!65}{100.0}}} & \shortstack{197\\[1.5pt]{\fontsize{7.8}{8}\selectfont 14.1\,/\,\textcolor{black!65}{100.0}}} & \shortstack{200\\[1.5pt]{\fontsize{7.8}{8}\selectfont 15.8\,/\,\textcolor{black!65}{100.0}}} & \shortstack{200\\[1.5pt]{\fontsize{7.8}{8}\selectfont 14.1\,/\,\textcolor{black!65}{100.0}}} \\[3pt]
\shortstack[l]{SparQ + nested code\\[1pt]{\fontsize{7}{7}\selectfont\hypersetup{citecolor=black!60}\citealp{ribar2024sparq}}} & \textcolor{tradeaccent}{\shortstack{199\\[1.5pt]{\fontsize{7.8}{8}\selectfont 3.3\,/\,50.1}}} & \textcolor{tradeaccent}{\shortstack{200\\[1.5pt]{\fontsize{7.8}{8}\selectfont 2.4\,/\,50.0}}} & \textcolor{tradeaccent}{\shortstack{67\\[1.5pt]{\fontsize{7.8}{8}\selectfont 3.3\,/\,50.1}}} & \textcolor{tradeaccent}{\shortstack{0\\[1.5pt]{\fontsize{7.8}{8}\selectfont 2.4\,/\,50.0}}} \\[3pt]
\bottomrule
\end{tabular}
\end{table}

\FloatBarrier

The quarter-capacity \ReadKV{} $(4,2)$ reader recovers all 200 keys in
every condition while using about half the logical reads of the
matched-storage \textsc{Full Reader} $(4,4)$.
The flexible reader recovers all 200 keys in every condition with both
\ReadKV{} $(8,2)$ and \ReadKV{} $(8,4)$. The \ReadKV{} $(8,2)$ configuration
uses 12.5--12.6\% of \textsc{Dense} logical reads while retaining about
50\% of the KV capacity.
KIVI nearly saturates the task with a smaller retained cache but higher
logical reads, while SparQ retains the full cache.
The nested-code sparse hybrid is less reliable on Mistral, recovering
67/200 keys at 4K and none at 8K. Its favorable C4 results on other families
therefore do not imply reliable retrieval across models and tasks.

\subsection{Current-Query Allocation}
\label{app:placement}
To determine whether the current query improves allocation, we compare three
readers built from the same progressive codec. Uniform reading stores and
reads its depth-two prefix. Calibration-only and current-query reading
retain the same eight-bit code: the former fixes importance before testing,
while the latter allocates key depth from the query and value depth from the
reconstructed attention.

Table~\ref{tab:polished-placement} holds the codec and calibration fixed but
compares complete policies with different retained depths and
development-selected K/V splits. On Qwen and Yi, the current-query reader
uses $(R_K,R_V)=(3,1)$, while the other policies
use $(2,2)$. DeepSeek uses $(2,2)$ throughout. The table therefore measures
the full policy difference. A paired ablation below holds the split fixed to
isolate current-query key allocation.

\begin{table}[!ht]
\centering\small
\setlength{\tabcolsep}{7pt}
\caption{C4 perplexity under three allocation policies at mean payload depth
$R=2$. Entries are arithmetic means over three transform seeds on the same
32 test documents. Calibration-only and current-query allocation retain the
same $W=8$ progressive code. Uniform reading retains a depth-two code. The
policies use their development-selected K/V splits, so the table compares
complete policies rather than isolating one factor.}
\label{tab:polished-placement}\label{tab:restored-prediction-evidence}
\begin{tabular}{@{}lrrrr@{}}
\toprule
Allocation & Qwen3B & Qwen7B & Yi6B & DeepSeek7B \\
\midrule
Uniform depth 2 & 21.66 & 19.91 & 39.34 & 568.35 \\
Calibration-only & 19.11 & 14.67 & 16.31 & 1\,095.1 \\
Current query & 11.80 & 10.44 & 9.87 & 9.03 \\
\bottomrule
\end{tabular}
\end{table}

The current-query policy has the lowest perplexity on all four models in the
table. To isolate the key stage, we compare current-query and
calibration-only key allocation at the same $(R_K,R_V)=(3,1)$ split on
Qwen3B. Replacing current-query allocation increases PPL from 11.789 to
12.011, with matched \textsc{Dense} PPL 11.362. The paired NLL increase is
0.0187 nats per target, with a 95\% document-bootstrap interval of
$[0.0139,0.0247]$. Thirty-one of 32 documents favor current-query keys.
This establishes a key-stage benefit in this setting without attributing a
fixed share of the total gain to either stage.

\subsection{Intermediate Precision Levels}
\label{app:depth_support}
The progressive code offers several reconstruction depths. We test whether
these intermediate levels add value beyond selecting a subset of items for
high precision. Two-depth controls restrict keys, values, or both to
$\{0,W\}$ or $\{1,W\}$ while holding the codec, scores, calibration, seed,
and K/V split fixed. With $N$ items and lower depth $\ell$, a schedule that
assigns depth $W$ to $k$ items spends $\ell N+k(W-\ell)$ depth increments.
We therefore compare schedules at the same feasible budget. Value-only comparisons use
identical reconstructed keys. When the minimum depth and average value depth
are both one, the value schedule is uniform by construction.

Using the exact-budget schedules, the key-only $\{1,W\}$ restriction at the
\ReadKV{} $(8,2)$ operating point increases NLL over the multilevel reader by
0.227 nats per token on Qwen3B, 0.159 on Qwen7B, and 0.275 on Yi6B for seed
20260801. Across three
seeds, the Qwen3B and Qwen7B increases range from 0.227--0.243 and
0.157--0.167 nats. Restricting both stages produces substantially larger
increases of 2.408--3.598 and 3.229--3.349 nats. These comparisons show that
the benefit is not explained solely by choosing which items receive a
single high-precision reconstruction.

\subsection{Allocation Scores and Attention Error}
\label{app:surrogate_experiment}
The allocator optimizes separable scores rather than exact attention-output
error. We test how closely these scores track the errors they are intended
to reduce. Queries, keys, and values are held fixed across schedules using
captures from three layers, four C4 documents, and 24 preselected query rows
per document. The rows span three context-length ranges for Qwen3B and
Qwen7B. Each observation pairs a query row with one physical KV head,
aggregating the query heads that share it. Each schedule uses the attention
probabilities produced by its own reconstructed keys. Full-precision
probabilities serve only as diagnostic
targets. The exact decomposition in Appendix~\ref{app:task_bridge} separates
the key, value, and interaction terms for each observation.

The diagonal objectives omit interactions between reconstruction errors.
For pooled $R=2$, the median signed contribution of these interactions is
2.1\% of the exact squared logit error on Qwen3B and 1.2\% on Qwen7B.
For attention-weighted value error, the corresponding cross-token fractions
are much larger: 64.6\% and 72.6\%. The value objective can therefore guide
placement without accurately estimating the error magnitude. Relative to
the multilevel reader, restricting both stages to $\{1,W\}$ worsens output
error in 575/576 observations on Qwen3B and all 1152 on Qwen7B. The
$\{0,W\}$ restriction worsens it in 573/576 and 1149/1152 observations,
respectively.

Weighting value scores by $s_g=\sum_c(\sigma^V_{g,c})^2$ tests whether
heads with larger value magnitudes should receive more of the pooled budget.
It changes complete-model NLL by less than
0.001 nats per token in all four model-rate conditions. Three paired
intervals include zero, and the fourth difference is also below 0.001 nats.
With fixed per-head budgets, multiplying all value scores in a head by the
same positive factor should leave its allocation unchanged. Nevertheless,
repeated whole-model executions of this control differed by about 0.0024
nats per token; the source of that harness variability remains unresolved.

\subsection{GPU Reader Constraints}
\label{app:policy_quality}
The GPU reader trades allocation flexibility for a simpler execution policy.
To identify which restrictions drive its quality gap, we vary the K/V split,
pooled versus per-head budgets, minimum depth, and whether adjacent pairs of
value tokens must share a depth. A factorial study evaluates all $2^4=16$
combinations on Qwen3B and Qwen7B at $R=2$ and $R=4$.
Table~\ref{tab:policy-restrictions} follows one cumulative path through the
design.

\begin{table}[!ht]
\centering\small
\setlength{\tabcolsep}{5pt}
\caption{Effect of GPU-reader restrictions on C4 negative log likelihood.
Entries give changes in nats per predicted token.
Positive values are worse. The columns successively impose equal K/V rates,
per-head budgets, minimum depth one, and paired values. The total compares
the final restricted reader with the flexible reader. Because the restrictions
are cumulative, each intermediate increment depends on their order.}
\label{tab:policy-restrictions}
\begin{tabular}{@{}llrrrrr@{}}
\toprule
Model & $R$ & Equal K/V & Per head & Min.\ 1 & Paired & Total \\
\midrule
Qwen3B & 2 & +0.0729 & +0.0050 & +0.0012 & +0.0004 & +0.0795 \\
Qwen7B & 2 & +0.0575 & +0.0043 & +0.0061 & +0.0014 & +0.0694 \\
Qwen3B & 4 & +0.0032 & +0.0002 & +0.0004 & -0.0003 & +0.0035 \\
Qwen7B & 4 & +0.0034 & -0.0005 & +0.0002 & -0.0003 & +0.0028 \\
\bottomrule
\end{tabular}
\end{table}

Equal K/V rates account for most of the low-rate gap along the displayed
path. A mandatory minimum depth of one also forces uniform value reading
when the selected value budget is $R_V=1$. At $R=4$, all four restrictions
have much smaller effects. The
same distinction appears in Qwen passkey retrieval at 8K: the flexible
\ReadKV{} $(8,2)$ reader recovers 200 of 200 keys, while the restricted
\ReadKV{} $(8,2)$ policy recovers 100--101. The restricted
\ReadKV{} $(8,4)$ reader recovers all 200. These results identify the K/V
split as the main target for improving the low-rate GPU policy. Any revised
policy should be evaluated jointly for quality, traffic, and latency.

\FloatBarrier
\section{GPU Latency and Memory Traffic}
\label{app:gpu_measurements}

Section~\ref{sec:gpu_eval} summarizes the principal 8K, batch-one
measurement.
Here we extend that comparison across context lengths and batch sizes, then
examine the memory traffic behind the observed latency. All measurements use
an NVIDIA A10G and start from an existing cache. They cover one attention
layer during decoding, excluding cache construction and the rest of the
model.

The timed implementation uses a more constrained reader than the principal
quality experiments. It assigns budgets separately to each KV head, requires
at least one level per item, divides the budget equally between keys and
values, and groups values into two-token packets. The quality reader instead
pools its budget across heads and permits depth zero. Appendix~\ref{app:policy_quality}
measures the quality effect of these choices. The GPU results below therefore
report performance for the restricted reader, while the language-model
quality results use the flexible reader.

\subsection{Latency Across Workloads}

Table~\ref{tab:implementation-matched} reports the latency of a complete
attention-layer call at context lengths 2K, 8K, and 32K. We use batch sizes
$B=1$ and $B=4$. Every method runs on the same CUDA stream and uses the same
timing boundary. CUDA events measure repeated replay of a recorded GPU
execution graph, reducing host launch overhead. The warm-cache measurements
reuse the workload without deliberately flushing the GPU cache; they do not
assume that the entire KV cache fits on chip. The \ReadKV{} timing includes
its final conversion to bf16.

\textsc{Dense} uses an uncompressed bf16 KV cache with PyTorch scaled
dot-product attention. The TurboQuant row runs its fused K4/V4 codec
on the benchmarked layer, although the quality configuration in
Section~\ref{sec:quality} leaves this layer uncompressed. The
\textsc{Full Reader} $(4,4)$ reads all four levels of the \ReadKV{} code
without query-dependent allocation. The remaining \ReadKV{} rows use the
restricted GPU reader described above.

\begin{table}[!ht]
\centering\small
\setlength{\tabcolsep}{4.2pt}
\renewcommand{\arraystretch}{1.12}
\caption{Median single-layer decode latency (ms) on an NVIDIA A10G under CUDA
Graph replay with a warm cache. The 2K and 8K inputs use Qwen2.5-7B layer 0;
the 32K input uses synthetic K/V with the same head geometry. All rows use
the same timing boundary, including \ReadKV{}'s final bf16 conversion.}
\label{tab:implementation-matched}
\begin{tabular}{@{}lrrrrrr@{}}
\toprule
& \multicolumn{2}{c}{2K tokens} & \multicolumn{2}{c}{8K tokens} & \multicolumn{2}{c}{32K tokens}\\
\cmidrule(lr){2-3}\cmidrule(lr){4-5}\cmidrule(l){6-7}
Method & $B=1$ & $B=4$ & $B=1$ & $B=4$ & $B=1$ & $B=4$\\
\midrule
\textsc{Dense} & 0.020 & 0.048 & 0.051 & 0.146 & 0.168 & 0.585 \\
\shortstack[l]{{\footnotesize TurboQuant K4/V4 codec}\\[-1pt]{\scriptsize\citealp{zandieh2026turboquant}}} & 0.055 & 0.143 & 0.154 & 0.518 & 0.625 & 1.990 \\
\textsc{Full Reader} $(4,4)$ & 0.038 & 0.076 & 0.070 & 0.171 & 0.265 & 0.727 \\
\midrule
\ReadKV{} $(4,2)$ & 0.063 & 0.095 & 0.092 & 0.179 & 0.287 & 0.719 \\
\ReadKV{} $(8,2)$ & 0.066 & 0.096 & 0.093 & 0.183 & 0.304 & 0.740 \\
\ReadKV{} $(8,4)$ & 0.069 & 0.108 & 0.105 & 0.211 & 0.336 & 0.834 \\
\bottomrule
\end{tabular}
\end{table}

The \ReadKV{} readers are faster than the tested TurboQuant codec at 8K and
32K for both batch sizes. TurboQuant is faster at 2K and batch one, while
\textsc{Dense} is fastest throughout. A cache-eviction check at 8K and
batch one preserves this ordering: \ReadKV{} $(8,2)$ changes from 0.093 to
0.094\,ms, TurboQuant from 0.154 to 0.177\,ms, and \textsc{Dense} from
0.051 to 0.056\,ms.

These fixed-batch timings answer how quickly each implementation processes
an existing cache. Compression also reduces the memory needed to retain that
cache: it can make longer contexts or more concurrent requests fit on a GPU,
even when decoding the compressed representation adds latency. Our
measurements establish the cache-size reduction, but do not test throughput
under a memory-limited serving workload. Thus the faster dense attention
call does not eliminate the capacity benefit of TurboQuant or \ReadKV{}.

\subsection{Memory and Physical Traffic}

We next examine whether the reduction in requested payload appears in
physical memory traffic. At 8K and batch one, the four-bit and eight-bit
\ReadKV{} caches contain 4.00 and 8.00\,MiB of payload, respectively. Each
\ReadKV{} call also uses 3.87\,MiB of workspace for scores, probabilities,
schedules, and partial outputs. TurboQuant retains 4.19\,MiB and uses
0.45\,MiB of workspace, while the uncompressed K/V cache occupies
16.00\,MiB. These values describe one layer. Tables~\ref{tab:polished-prediction}
and~\ref{tab:polished-qa} report the corresponding analytical cache costs
for the complete models.

Table~\ref{tab:fair-traffic} profiles the same 8K, batch-one workload. The
\ReadKV{} $(4,4)$ scheduled control reads all four levels through the
restricted execution path. It produces the same reconstruction as the
\textsc{Full Reader} $(4,4)$ while retaining scheduling and intermediate
memory operations. Hardware counters cover the complete eager attention call,
including intermediate and output accesses, and measure physical DRAM
traffic. Latency is measured separately under graph replay. These physical
measurements differ from the logical read counts used in the quality tables.
The evicted condition deliberately displaces prior GPU-cache contents before
the measured call, checking sensitivity to cache reuse.

\begin{table}[!ht]
\centering\small
\caption{Physical DRAM traffic for the same NVIDIA A10G single-layer
benchmark used in the latency study, with 8,192 cached tokens and batch one.
Each entry reports warm / cache-evicted traffic in MiB. Counters cover the
complete eager attention call, including intermediate and output accesses.
Graph-replay latency is measured separately.}
\label{tab:fair-traffic}
\begin{tabular}{lrrr}
\toprule Method & DRAM reads & DRAM writes & Total \\
\midrule
\textsc{Dense} & 17.73 / 17.86 & 3.12 / 3.19 & 20.85 / 21.06 \\
\ReadKV{} $(4,2)$ & 2.72 / 2.71 & 4.63 / 4.71 & 7.35 / 7.42 \\
\textsc{Full Reader} $(4,4)$ & 4.70 / 4.66 & 4.19 / 4.25 & 8.90 / 8.91 \\
\ReadKV{} $(4,4)$, scheduled & 4.81 / 4.85 & 4.64 / 4.73 & 9.44 / 9.58 \\
\ReadKV{} $(8,2)$ & 2.80 / 2.74 & 4.67 / 4.73 & 7.47 / 7.47 \\
\ReadKV{} $(8,4)$ & 4.99 / 4.98 & 4.72 / 4.75 & 9.71 / 9.74 \\
\shortstack[l]{{\footnotesize TurboQuant K4/V4}\\[-1pt]{\scriptsize\citealp{zandieh2026turboquant}}} & 7.28 / 7.92 & 2.10 / 2.92 & 9.38 / 10.84 \\
\bottomrule
\end{tabular}
\end{table}

For \ReadKV{} $(8,2)$, the kernel reads 2.80/2.74\,MiB under warm/evicted
conditions, compared with 7.28/7.92\,MiB for TurboQuant. This reduces DRAM
reads by 62--65\%. After including intermediate and output writes, total
traffic falls from 9.38/10.84\,MiB to 7.47/7.47\,MiB, a reduction of
20--31\%. At the larger read budget, \ReadKV{} $(8,4)$ transfers
9.71/9.74\,MiB in total, slightly more than TurboQuant in the warm condition.
\textsc{Dense}
is faster despite transferring 20.85/21.06\,MiB, which shows that memory
traffic is only one component of latency.

The \ReadKV{} $(4,2)$ and \ReadKV{} $(8,2)$ rows further separate stored
from fetched precision. Both read two payload bits per scalar and take
0.092--0.093\,ms, although
the retained payload grows from 4 to 8\,MiB. On this fixture, the additional
stored precision has little effect on execution time, but it still consumes
memory capacity.
\end{document}